\newtheorem{setting}{Setting}
\DeclareMathOperator*{\argmin}{arg\,min}
\newcommand{\eqr}[1]{Eq.~\eqref{eq:#1}}
\newcommand {\norm}[1]{\ensuremath{\| #1 \|}}
\newcommand {\dnorm}[1]{\ensuremath{\| #1 \|_\star}}
\newcommand {\set}[1]{\ensuremath{\left\{#1\right\}}}
\newcommand{\BO}{\mathcal{O}}
\newcommand{\R}{\ensuremath{\mathbb{R}}}
\newcommand{\grad}{\triangledown}
\newcommand{\abs}[1]{|#1|}
\DeclareMathOperator{\dom}{dom}
\DeclareMathOperator{\diag}{diag}
\newcommand{\qq}[1]{\qquad \text{#1} \qquad}
\newcommand{\qqand}{\qquad \text{and} \qquad}
\newcommand{\qqwhere}{\qquad \text{where} \qquad}
\newcommand{\Regret}{\operatorname{Regret}}
\newcommand{\inv}{^{-1}}
\newcommand{\h}{\frac{1}{2}}
\newtheorem {theorem}{Theorem}}{}
\newtheorem {lemma}[theorem]{Lemma}}{}
\newtheorem {corollary}[theorem]{Corollary}}{}
\newcommand{\defn}[1]{\emph{#1}}
\newcommand{\loss}{\ell}
\definecolor{darkgreen}{rgb}{0,0.4,0.0}
\definecolor{darkblue}{rgb}{0,0.1,0.3}
\definecolor{darkred}{rgb}{0.7,0.0,0.0}
\newcommand{\xs}{x^*}
\newcommand{\rc}{r^\star}
\newcommand{\hf}{\bar{f}}  %
\newcommand{\dpsi}{\psi^\star}
\newcommand{\bPsi}{\bar{\Psi}}
\newcommand{\normt}[1]{\norm{#1}_{(t)}}
\newcommand{\normtm}[1]{\norm{#1}_{(t-1)}}
\newcommand{\dnormt}[1]{\norm{#1}_{(t),\star}}
\newcommand{\dnormtm}[1]{\norm{#1}_{(t-1),\star}}
\newcommand{\ti}{_{t+1}}
\renewcommand{\tt}{_{1:t}}
\newcommand{\ttm}{_{1:t-1}}
\newcommand{\tT}{_{1:T}}
\newcommand{\ztt}{_{0:t}}
\newcommand{\zttm}{_{0:t-1}}
\newcommand{\ztT}{_{0:T}}
\newcommand{\ztTm}{_{0:T-1}}
\newcommand{\RS}{\R \cup \set{\infty}}
\newcommand{\X}{\mathcal{X}}
\newcommand{\ind}{I}
\newcommand{\indX}{\ind_{\X}}
\newcommand{\proj}{\Pi}
\newcommand{\projX}{\proj_{\X}}
\newcommand{\ha}{\phi_1}
\newcommand{\hb}{\phi_2}
\newcommand{\phd}{\phi_1^\star}
\newcommand{\hx}{\hat{x}}
\newcommand{\B}{\mathcal{B}}
\newcommand{\rp}{r^\B}
\newcommand{\gp}{g^{(\Psi)}}
\newcommand{\gr}{g^{(r)}}
\newcommand{\titleA}{A Survey of Algorithms and Analysis}
\newcommand{\titleB}{for Adaptive Online Learning}
\newcommand{\DA}{Dual Averaging\xspace}
\newcommand{\RDA}{Regularized Dual Averaging\xspace}
\newcommand{\MD}{Mirror Descent\xspace}
\newcommand{\OGD}{Online Gradient Descent\xspace}
\newcommand{\NaturalProx}{Native FTRL-Proximal\xspace}
\newcommand{\Natural}{Native FTRL\xspace}
\newcommand{\WLOG}{w.l.o.g.\@\xspace}
\newcommand{\wrt}{w.r.t.\@\xspace}
\renewcommand{\ng}{\theta} %
\newcommand{\RR}{R}
\newcommand{\RD}{R^\star}
\newcommand{\Equiv}{$\Leftrightarrow$\xspace}
\newcommand{\tabeq}[1]{{$\begin{aligned} #1 \end{aligned}$}}
\newcommand{\comm}[1]{$\quad /\!/$\emph{#1}}
\newcommand{\jmlronly}[1]{}
\begin{document}

\title{\titleA\\ \titleB}
\author{H. Brendan McMahan\\
Google, Inc. \\
\texttt{mcmahan@google.com}}
\date{}
\maketitle

\begin{abstract}
  We present tools for the analysis of Follow-The-Regularized-Leader
  (FTRL), \DA, and \MD algorithms when the regularizer (equivalently,
  prox-function or learning rate schedule) is chosen adaptively based
  on the data.  Adaptivity can be used to prove regret bounds that
  hold on every round,
  and also allows for data-dependent regret bounds as in AdaGrad-style
  algorithms (e.g., \OGD with adaptive
  per-coordinate learning rates).  We present results from a large
  number of prior works in a unified manner, using a modular and tight
  analysis that isolates the key arguments in easily re-usable lemmas.
  This approach strengthens previously known FTRL analysis techniques
  to produce bounds as tight as those achieved by potential functions
  or primal-dual analysis.  Further, we prove a general and exact
  equivalence between an arbitrary adaptive \MD algorithm and a
  corresponding FTRL update, which allows us to analyze any \MD
  algorithm in the same framework.  The key to bridging the gap
  between \DA and \MD algorithms lies in an analysis of
  the FTRL-Proximal algorithm family.
  Our regret bounds are proved in the most general form, holding for
  arbitrary norms and non-smooth regularizers with time-varying
  weight.
\end{abstract}

\jmlronly{
\begin{keywords}
  Online Learning, Online Convex Optimization, Regret Analysis, Adaptive Algorithms
\end{keywords}
}

\section{Introduction}
We consider the problem of online convex optimization over a series of
rounds $t \in \{1, 2, \dots\}$.  On each round the algorithm selects a
point (e.g., a predictor or an action) $x_t \in \R^n$, and then an
adversary selects a convex loss function $f_t$, and the algorithm
suffers loss $f_t(x_t)$.  The goal is to minimize
\begin{equation}\label{eq:regret}
  \Regret_T(\xs, f_t) \equiv
    \sum_{t=1}^T f_t(x_t) - \sum_{t=1}^T f_t(\xs),
\end{equation}
the difference between the algorithm's loss and the loss of a fixed
point $\xs$, potentially chosen with full knowledge of the sequence of
$f_t$ up through round $T$. When the functions $f_t$ and round $T$
are clear from context we write $\Regret(\xs)$. The ``adversary''
choosing the $f_t$ need not be malicious, for example the $f_t$ might
be drawn from a distribution.
The name ``online convex optimization'' was introduced by \citet{zinkevich03giga}, though the setting was introduced earlier by \citet{gordon99regret}.
When a particular set of comparators $\X$ is fixed in advance, one is
often interested in $\Regret(\X) \equiv \sup_{\xs \in \X}
\Regret(\xs)$; since $\X$ is often a norm ball, frequently we bound
$\Regret(\xs)$ by a function of $\norm{\xs}$.  

Online algorithms with good regret bounds (that is, bounds that are
sublinear in $T$) can be used for a wide variety of prediction and
learning tasks~\citep{cesabianchi06plg,shwartz12online}.  The case of
online logistic regression, where one predicts the probability of a
binary outcome, is typical.  Here, on each round a feature vector $a_t
\in \R^n$ arrives, and we make a prediction $p_t = \sigma(a_t \cdot
x_t) \in (0, 1)$ using the current model coefficients $x_t \in \R^n$,
where $\sigma(z) = 1/ (1 + e^{-z})$.  The adversary then reveals the
true outcome $y_t \in \set{0, 1}$, and we measure loss with the
negative log-likelihood, $\loss(p_t, y_t) = -y_t \log p_t - (1-y_t)
\log (1 - p_t)$.  We encode this problem as online convex optimization
by taking $f_t(x) = \loss(\sigma(a_t \cdot x), y_t)$; these $f_t$ are
in fact convex.  Linear Support Vector Machines (SVMs), linear
regression, and many other learning problems can be encoded in a
similar manner; \citet{shwartz12online} and many of the other works
cited here contain more details and examples.

\begin{algorithm}[t]
\caption{General Template for Adaptive FTRL}\label{alg:adaptive}
\begin{algorithmic}
\STATE \textbf{Parameters:} Scheme for selecting convex $r_t$ s.t. $\forall x,\ r_t(x) \ge 0$ for $t=0,1,2, \dots$
\STATE $x_1 \leftarrow \argmin_{x \in \R^n}\ r_0(x)$
\FOR{ $t = 1, 2, \ldots$}
\STATE Observe convex loss function $f_t: \R^n \rightarrow \R \cup \{\infty\}$
\STATE Incur loss $f_t(x_t)$
\STATE Choose incremental convex regularizer $r_t$, possibly based on $f_1, \dots f_t$
\STATE Update \[x\ti \leftarrow \argmin_{x \in \R^n}\ \sum_{s=1}^t f_s(x) + \sum_{s=0}^t  r_s(x)\]
\ENDFOR
\end{algorithmic}
\end{algorithm}

We consider the family of Follow-The-Regularized-Leader (FTRL, or
FoReL) algorithms as shown in Algorithm~\ref{alg:adaptive}
\citep{shwartz07thesis,shalev07primaldual,rakhlin08notes,mcmahan10boundopt,mcmahan10equiv}. \citet{shwartz12online} and \citet{hazan15survey} provide a
comprehensive survey of analysis techniques for non-adaptive members
of this algorithm family, where the regularizer is fixed for all
rounds and chosen with knowledge of $T$.  In this survey, we allow the
regularizer to change adaptively over the course of an unknown-horizon
game.  Given a sequence of incremental regularization functions $r_0, r_1, r_2,
\dots$, we consider the algorithm that selects
\begin{align}\label{eq:update}
x_1 &\in \argmin_{x \in \R^n} r_0(x) \notag\\
x\ti & = \argmin_{x \in \R^n} f\tt(x) + r\ztt(x) \qquad \text{for $t=1, 2, \dots$},
\end{align}
where we use the compressed summation notation $f\tt(x) = \sum_{s=1}^t
f_s(x)$ (we also use this notation for sums of scalars or vectors).
The argmin in \eqr{update} is over all $\R^n$, but it is often
necessary to constrain the selected points $x_t$ to a convex feasible
set $\X$. This can be accomplished in our framework by including the
indicator function $\indX$ as a term in $r_0$ ($\indX$ is a convex
function defined by $\indX(x) = 0$ for $x \in \X$ and $\infty$
otherwise); details are given in Section~\ref{sec:feasible}.  The
algorithms we consider are adaptive in that each $r_t$ can be chosen
based on $f_1, f_2, \dots, f_t$.  For convenience, we define functions
$h_t$ by
\begin{align*}
h_0(x) &= r_0(x) \\
h_t(x) &= f_t(x) + r_t(x) \qquad \text{for $t=1, 2, \dots$}
\end{align*}
so $x\ti = \argmin_x h\ztt(x)$.  Generally we will assume the $f_t$
are convex, and the $r_t$ are chosen so that $r\ztt$ (or $h\ztt$) is
strongly convex for all $t$, e.g., $r\ztt(x) = \frac{1}{2
  \eta_t}\norm{x}_2^2$ (see Sections \ref{sec:genregret} and
\ref{sec:convex} for a review of important definitions and results
from convex analysis).

FTRL algorithms generalize the Follow-The-Leader (FTL) approach
\citep{hannan57ftl,kalai03ftpl}, which selects $x\ti = \argmin_x
f\tt(x)$. FTL can provide sublinear regret in the case of strongly convex
functions (as we will show), but for general convex functions
additional regularization is needed.

Adaptive regularization can be used to construct practical algorithms
that provide regret bounds that hold on all rounds $T$, rather than
only on a single round $T$ which is chosen in advance. The
framework is also particularly suitable for analyzing AdaGrad-style
algorithms that adapt their regularization or norms based on the
observed data, for example those of \citet{mcmahan10boundopt} and
\cite{duchi10adaptive,duchi11adaptivejournal}.
This approach leads to regret bounds that
depend on the actual observed sequence of gradients $g_t$, rather than
bounds in terms of the number of rounds $T$ and the worst-case
magnitude of the gradients $G$, e.g., terms like $\sqrt{\sum_{t=1}^T
  g_t^2}$ rather than $G\sqrt{T}$. These tighter bounds translate to
much better performance in practice, especially for high-dimensional
but sparse problem (e.g., bag-of-words feature vectors). Examples of
such algorithms are analyzed in Sections~\ref{sec:adagradprox}
and~\ref{sec:adagraddual}.

We also study \MD algorithms, for example updates like
\[
x\ti = \argmin_{x \in \X} g_t \cdot x + \lambda \norm{x}_1 + \frac{1}{2\eta_t}\norm{x - x_t}_2^2
\]
for functions $f_t(x) = g_t \cdot x + \lambda \norm{x}_1$, where
$\eta_t$ is an adaptive learning rate. This update generalizes \OGD
with a non-smooth regularization term; \MD also encompasses the use of
an arbitrary Bregman divergence in place of the $\norm{\cdot}_2^2$
penalty above. We will discuss this family of algorithms at length in
Section~\ref{sec:md}. In fact, \MD algorithms can all be expressed as
particular members of the FTRL family, though generally not
the most natural ones. In particular, since the state maintained by
\MD is essentially only the current feasible point $x_t$, we will see
that \MD algorithms are forced to linearize penalties like $\lambda
\norm{x}_1$ from previous rounds, while the more natural FTRL
algorithms can keep these terms in closed form, leading to practical
advantages such as producing sparser models when $L_1$ regularization
is used.

While we focus on online algorithms and regret bounds, the
development of many of the algorithms considered rests heavily on work
in general convex optimization and stochastic optimization.  As a few
starting points, we refer the reader to \citet{nemirovski83} and
\citet{nesterov04book,nesterov07composite}.  
Going the other way, the algorithms presented here can be applied
to batch optimization problems of the form
\begin{equation}\label{eq:batch}
\argmin_{x \in \R^n}\ F(x) \qqwhere F(x) \equiv \sum_{t=1}^T f_t(x)
\end{equation}
by running the online algorithm for one or more passes over the set of
$f_t$ and returning a suitable point (usually the last $x_t$ or an
average of past $x_t$).  Using online-to-batch conversion
techniques (e.g., \citet{cesabianchi04generalization}, \citet[Chapter
5]{shwartz12online}), one can convert the regret bounds given here to
convergence bounds for the batch problem.  Many state-of-the-art
algorithms for batch optimization over very large datasets can be
analyzed in this fashion.

\paragraph{Outline} In Section~\ref{sec:ftrl}, we elaborate on the
family of algorithms encompassed by the update of \eqr{update}.  We then
state two very general regret bounds, Theorems~\ref{thm:centered} and
\ref{thm:proximal}.  While these results are not completely new, they are stated in
enough generality to cover many known results for general and strongly
convex functions; in Section~\ref{sec:applications} we use them to
derive concrete bounds for many standard online algorithms.

In Section~\ref{sec:genproof} we break the analysis of adaptive FTRL
algorithms into three main components, which helps to modularize the
arguments.  In Section~\ref{sec:inductive} we prove the \emph{Strong
  FTRL Lemma} which lets us express the regret through round $T$ as a
regularization term on the comparator $\xs$, namely $r\ztT(\xs)$, plus
a sum of per-round stability terms.  This reduces the problem of
bounding regret to that of bounding these per-round terms.  In
Section~\ref{sec:convex} we review some standard results from convex
analysis, and prove lemmas that make bounding the per-round terms
relatively straightforward.  The general regret bounds are then proved
in Section~\ref{sec:proofs} as corollaries of these results.

Section~\ref{sec:composite} considers the special case of a composite
objective, where for example $f_t(x) = \ell_t(x) + \Psi(x)$ with
$\ell_t$ is a smooth loss on the $t$'th training example and $\Psi$ is
a possibly non-smooth regularizer (e.g., $\Psi(x) =
\norm{x}_1$). Finally, Section~\ref{sec:md} proves the equivalence of
an arbitrary adaptive \MD algorithm and a certain FTRL algorithm, and
uses this to prove regret bounds for \MD.

\paragraph{Summary of Contributions}
A principal goal of this work is to provide a useful summary of
central results in the analysis of adaptive algorithms for online
convex optimization; whenever possible we provide precise references
to earlier results that we re-prove or strengthen.
Achieving this goal in a concise fashion requires some new results,
which we summarize here.

The FTRL style of analysis is both modular and intuitive, but in
previous work resulted in regret bounds that are not the tightest
possible; we remedy this by introducing the Strong FTRL Lemma in
Section~\ref{sec:inductive}.  This also relates the FTRL analysis
technique to the primal-dual style of analysis.

By analyzing both FTRL-Proximal algorithms (introduced in the next
section) and \DA algorithms in a unified manner, it is much easier to contrast
the strengths and weaknesses of each approach.  This highlights a
technical but important ``off-by-one'' difference between the two
families in the adaptive setting, as well as an important difference
when the algorithm is unconstrained (any $x_t \in \R^n$ is feasible).

Perhaps the most significant new contribution is given in
Section~\ref{sec:md}, where we show that \emph{all} \MD algorithms
(including adaptive algorithms for composite objectives) are in fact
particular instances of the FTRL-Proximal algorithm schema, and can be
analyzed using the general tools developed for the analysis of FTRL.

\begin{algorithm}[t]
\caption{General Template for Adaptive Linearized FTRL}\label{alg:linearized}
\begin{algorithmic}
\STATE \textbf{Parameters:} Scheme for selecting convex $r_t$ s.t. $\forall x,\ r_t(x) \ge 0$ for $t=0,1,2, \dots$
\STATE $z \leftarrow \mathbf{0} \in \R^n$  \comm{Maintains $g_{1:t}$}
\STATE $x_1 \leftarrow \argmin_{x \in \R^n}\ z \cdot x + r_0(x)$
\FOR{ $t = 1, 2, \ldots$}
\STATE Select $x_t$, observe loss function $f_t$, incur loss $f_t(x_t)$
\STATE Compute a subgradient $g_t \in \partial f_t(x_t)$
\STATE Choose incremental convex regularizer $r_t$, possibly based on $g_1, \dots, g_t$
\STATE $z \leftarrow z + g_t$
\STATE $x\ti \leftarrow \argmin_{x \in \R^n}\ z \cdot x + r\ztt(x)$ %
\comm{Often solved in closed form}
\ENDFOR
\end{algorithmic}
\end{algorithm}

\section{The FTRL Algorithm Family and General Regret Bounds}
\label{sec:ftrl}
We begin by considering two important dimensions in the space of FTRL
algorithms.  First, the algorithm designer has significant flexibility
in deciding whether the sum of previous loss functions is optimized
exactly as $f\tt(x)$ in \eqr{update}, or if the true losses should be
replaced by appropriate lower bounds, $\hf\tt(x)$, for computational
efficiency.  Second, we consider whether the incremental regularizers
$r_t$ are all minimized at a fixed stationary point $x_1$, or
are chosen so they are minimized at the current $x_t$.  After
discussing these options, we state general regret bounds.

\subsection{Linearization and the Optimization of Lower Bounds}%
In practice, it may be infeasible to solve the optimization problem of
~\eqr{update}, or even represent it as $t$ becomes sufficiently large.
A key point is that we can derive a wide variety of first-order
algorithms by linearizing the $f_t$, and running the algorithm on
these linear functions.  Algorithm~\ref{alg:linearized} gives the
general scheme.
For convex $f_t$, let $x_t$ be defined as above, and let $g_t
\in \partial f_t(x_t)$ be a subgradient (e.g., $g_t = \grad f_t(x_t)$
for differentiable $f_t$).  Then, a key observation of
\citet{zinkevich03giga} is that convexity implies for any comparator
$\xs$, $f_t(x_t) - f_t(\xs) \leq g_t \cdot (x_t - \xs)$.  Thus, if we let
$\hf_t(x) = g_t \cdot x$, then for any algorithm the regret against
the functions $\hf_t$ upper bounds the regret against the original
$f_t$:
\[
\Regret(\xs, f_t) \le \Regret(\xs, \hf_t).
\]
  Note we can construct the functions $\hf_t$ on the fly (after
observing $x_t$ and $f_t$) and then present them to the algorithm.  

Thus, rather than solving $x\ti = \argmin_x f\tt(x) + r\ztt(x)$ on
each round $t$, we now solve $x\ti = \argmin_x g\tt\cdot x +
r\ztt(x)$. Note that $g\tt \in \R^n$, and we will generally choose the
$r_t$ so that $r\ztt(x)$ can also be represented in constant
space. Thus, we have at least ensured our storage requirements stay
constant even as $t \rightarrow \infty$. Further, we will usually be
able to choose $r_t$ so the optimization with $g\tt$ can be solved in
closed form.  For example, if we take $r\ztt(x) = \frac{1}{2
  \eta}\norm{x}_2^2$ then we can solve $x\ti = \argmin_x g\tt\cdot x +
r\ztt(x)$ in closed form, yielding $x\ti = - \eta g\tt$ (that is, this
FTRL algorithm is exactly constant learning rate \OGD).

However, we will usually state our results in terms of general $f_t$,
since one can always simply take $f_t = \hf_t$ when appropriate.  In
fact, an important aspect of our analysis is that it does not depend
on linearization; our regret bounds hold for the the general update of
\eqr{update} as well as applying to linearized variants.

More generally, we can run the algorithm on any $\hf_t$
that satisfy $\hf_t(x_t) - \hf_t(\xs) \ge f_t(x_t) - f_t(\xs)$ for all
$\xs$ and have the regret bound achieved for the $\hf$ also apply to
the original $f$.  This is generally accomplished by constructing a
lower bound $\hf_t$ that is tight at $x_t$, that is $\hf_t(x) \le f_t(x)$ for
all $x$ and further $\hf_t(x_t) = f_t(x_t)$.  A tight linear lower bound is always
possible for convex functions, but for example if the $f_t$ are all
strongly convex, better algorithms are possible by taking $\hf_t$ to be an
appropriate quadratic lower bound.

A more in-depth introduction to the linearization of convex function
can be found in \citet[Sec 2.4]{shwartz12online}. We also note that
the idea of replacing the loss function on each round with an appropriate lower bound (``linearization of convex functions'') is distinct from the modeling decision to replace a non-convex loss function (e.g., the zero-one loss for classification) with a convex upper bound (e.g., the hinge loss). This ``convexification by surrogate loss'' approach is described in detail by \cite[Sec 2.1]{shwartz12online}.

\subsection{Regularization in FTRL Algorithms}\label{sec:reg}

The term ``regularization'' can have multiple meanings, and so in this
section we clarify on the different roles regularization plays in the
present work.

We refer to the functions $r\ztt$ as regularization functions, with
$r_t$ the incremental increase in regularization on round $t$ (we
assume $r_t(x) \ge 0$).  This is the regularization in the name
Follow-The-Regularized-Leader, and these $r_t$ terms should be viewed
as part of the algorithm itself --- analogous (and in some cases
exactly equivalent) to the learning rate schedule in an \OGD algorithm,
for example.  The adaptive choice of these regularizers is the
principle topic of the current work. We study two main classes of
regularizers: \vspace{-0.1in}
\begin{itemize}
\item In \emph{FTRL-Centered} algorithms, each $r_t$ (and
  hence $r\ztt$) is minimized at a fixed point, $x_1 = \argmin_x
  r_0(x)$. An example is \DA (which also linearizes the losses), where
  $r\ztt$ is called the \emph{prox-function}
  \citep{nesterov09dualaveraging}.

\item In \emph{FTRL-Proximal} algorithms, each incremental
  regularization function $r_t$ is minimized by $x_t$, and we
  call such $r_t$ incremental proximal regularizers.  
\end{itemize}
When we make neither a proximal nor centered assumption on the $r_t$,
we refer to general FTRL algorithms. Theorem~\ref{thm:centered}
(below) allows us to analyze regularization choices that do not fall
into either of these two categories, but the Centered and Proximal
cases cover the algorithms of practical interest.

There are a number of reasons we might wish to add additional
regularization terms to the objective function in the FTRL update. In
many cases this is handled immediately by our general theory by
grouping the additional regularization terms with either the $f_t$ or the
$r_t$. However, in some cases it will be advantageous to handle this
additional regularization more explicitly. We study this situation in
detail in Section~\ref{sec:composite}.

\subsection{General Regret Bounds}\label{sec:genregret}
In this section we introduce two general regret bounds that can be
used to analyze many different adaptive online algorithms. First, we
introduce some additional notation and definitions.  

\paragraph{Notation and Definitions}
An extended-value convex function $\psi : \R^n \rightarrow
\RS$ satisfies
\[
\psi(\theta x + ( 1- \theta) y) \le \theta \psi(x) + (1-\theta)\psi(y),
\]
for $\theta \in (0, 1)$, and the domain of $\psi$ is the convex set $\dom
\psi \equiv \{x : \psi(x) < \infty\}$ (e.g., \citet[Sec. 3.1.2]{boyd}); $\psi$
is proper if $\exists x \in \R^n\ \text{s.t.}\ \psi(x) < +\infty$ and
$\forall x \in \R^n, \psi(x) > -\infty$. We refer to extended-value
proper convex functions as simply ``convex functions.''

We write $\partial \psi(x)$ for the subdifferential of $\psi$ at $x$;
a subgradient $g \in \partial \psi(x)$ satisfies 
\[\forall y \in \R^n,\ \psi(y) \ge \psi(x) + g\cdot (y-x).
\] 
The subdifferential $\partial \psi(x)$ for a convex $\psi$ is
always non-empty for $x \in \text{int}\,(\dom \psi)$, and typically non-empty for any $x \in \dom \psi$ for the functions $\psi$ considered in this work; $\partial \psi(x)$ is empty for $x \not \in \dom \psi$ \citep[Thm. 23.2]{rockafellar}.

Working with extended convex functions lets us encode constraints seamlessly by using $\indX$, the indicator function on a convex set $\X \subseteq \R^n$ given by
\begin{equation}\label{eq:indX}
\indX(x) = \begin{cases}
  0 & x \in \X \\
 \infty & \text{otherwise}~,
\end{cases}
\end{equation}
since $\indX$ is itself an extended convex function. Generally we assume $\X$ is a closed convex set. This approach makes it convenient to write $\argmin_x$ as shorthand for $\argmin_{x \in \R^n}$.

A function $\psi : \R^n \rightarrow \RS$ is
$\sigma$\defn{-strongly convex} \wrt a norm $\norm{\cdot}$ if
for all $x, y \in \R^n,$
\begin{equation}\label{eq:sc}
\forall g \in \partial \psi(x),\ \ \psi(y) \geq \psi(x) + g \cdot (y - x) + \tfrac{\sigma}{2} \norm{y -x}^2.
\end{equation}
If some $\psi$ only satisfies \eqr{sc} for $x,y \in \X$ for
a convex set $\X$, then the function $\psi' = \psi + \indX$
satisfies \eqr{sc} for all $x,y \in \R^n$, and so is strongly convex
by our definition. Thus, we can work with $\psi'$ without any need to
explicitly refer to $\X$.

The \defn{convex conjugate} (or Fenchel conjugate) of an arbitrary
function $\psi : \R^n \rightarrow \RS$ is
\begin{equation}\label{eq:convexconj}
\dpsi(g) \equiv \sup_x g\cdot x - \psi(x).
\end{equation}
For a norm $\norm{\cdot}$, the dual norm is given by
\[
\dnorm{x} \equiv \sup_{y : \norm{y} \leq 1} x \cdot y.
\]
It follows from this definition that for any $x,y \in \R^n$, $x \cdot
y \le \norm{x} \dnorm{y}$, a generalization of H{\"o}lder's
inequality.  We make heavy use of norms $\normt{\cdot}$ that change as
a function of the round $t$; the dual norm of $\normt{\cdot}$ is
$\dnormt{\cdot}$. 

Our basic assumptions correspond to the framework of
Algorithm~\ref{alg:adaptive}, which we summarize together with a few
technical conditions as follows:
\begin{setting}\label{setting}
  We consider the algorithm that selects points according to
  \eqr{update} based on convex $r_t$ that satisfy $r_t(x) \ge 0$ for
  $t \in \set{0, 1, 2, \dots }$, against a sequence of convex loss
  functions $f_t: \R^n \rightarrow \RS$. Further, letting $h\ztt =
  r\ztt + f\tt$ we assume $\dom h\ztt$ is non-empty. Recalling $x_t =
  \argmin_x h\zttm(x)$, we further assume $\partial f_t(x_t)$ is
  non-empty.
\end{setting}
The minor technical assumptions made here do not rule out any
practical applications. We can now introduce the theorems which will be our main focus. The first will typically be applied to FTRL-Centered algorithms such as \DA:
\begin{theorem}\label{thm:centered}\emph{\textbf{General FTRL Bound}}
  Consider Setting~\ref{setting}, and suppose the $r_t$ are chosen
  such that $h\ztt + f\ti = r\ztt + f_{1:t+1}$ is 1-strongly-convex
  \wrt some norm $\normt{\cdot}$.  Then, for any $\xs \in \R^n$ and
  for any $T>0$,
  \[
  \Regret_T(\xs) \leq r\ztTm(\xs) + \h \sum_{t=1}^T \dnormtm{g_t}^2.
  \]
\end{theorem}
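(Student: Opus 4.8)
The plan is to reduce the regret to a single penalty $r\ztTm(\xs)$ on the comparator plus a sum of per-round ``stability'' terms, and then to bound each stability term by $\h\dnormtm{g_t}^2$ via strong convexity. The two halves of the argument are the telescoping identity behind the Strong FTRL Lemma and a convex-analysis bound on the per-round terms; the delicate point is matching indices correctly in the centered (non-proximal) regime.

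First I would establish the telescoping. Writing $h\ztt = r\ztt + f\tt$ and recalling $x\ti = \argmin_x h\ztt(x)$, the increment obeys $h_t(x_t) = h\ztt(x_t) - h\zttm(x_t) = h\ztt(x_t) - \min_x h\zttm(x)$, since $x_t$ minimizes $h\zttm$. Summing over $t$ telescopes the running minima, giving
\[
\sum_{t=1}^T h_t(x_t) = \min_x h\ztT(x) - \min_x h_0(x) + \sum_{t=1}^T\big(h\ztt(x_t) - h\ztt(x\ti)\big).
\]
Bounding $\min_x h\ztT(x) \le h\ztT(\xs)$, discarding the nonnegative terms $\min_x h_0(x) = r_0(x_1)$ and (after using $h_t(x_t) = f_t(x_t) + r_t(x_t)$) the $r_t(x_t)$, this rearranges to
\[
\Regret_T(\xs) \le r\ztT(\xs) + \sum_{t=1}^T\big(h\ztt(x_t) - h\ztt(x\ti) - r_t(x_t)\big).
\]
Since the played points $x_1,\dots,x_T$ do not depend on $r_T$ (which only influences $x_{T+1}$, never played), the regret through round $T$ is unchanged if we take $r_T\equiv 0$; this replaces $r\ztT(\xs)$ by $r\ztTm(\xs)$ and accounts for the ``off-by-one'' in the comparator term.

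It remains to bound each per-round term, and the key step is to absorb $r_t(x_t)$. Setting $\psi_t \equiv r\zttm + f\tt = h\zttm + f_t$ and using $r_t \ge 0$, a short calculation gives
\[
h\ztt(x_t) - h\ztt(x\ti) - r_t(x_t) \;\le\; \psi_t(x_t) - \psi_t(x\ti) \;\le\; \psi_t(x_t) - \min_x \psi_t(x).
\]
The hypothesis, read at index $t-1$, states exactly that $\psi_t = r\zttm + f\tt$ is $1$-strongly convex \wrt $\normtm{\cdot}$. Moreover $x_t$ minimizes $h\zttm$, so $0 \in \partial h\zttm(x_t)$, and with $g_t \in \partial f_t(x_t)$ the subgradient sum rule yields $g_t \in \partial\psi_t(x_t)$. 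I would then invoke the standard fact (provable directly from \eqr{sc}, as in Section~\ref{sec:convex}) that for a $1$-strongly-convex $\Phi$ \wrt a norm $\norm{\cdot}$ and any $g \in \partial\Phi(y)$ one has $\Phi(y) - \min_x\Phi(x) \le \h\dnorm{g}^2$. Applying this with $\Phi = \psi_t$, $y = x_t$, $g = g_t$ gives $\psi_t(x_t) - \min_x\psi_t(x) \le \h\dnormtm{g_t}^2$, and summing over $t$ completes the proof.

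The main obstacle is precisely this per-round bound in the centered regime. The naive route — applying strong convexity of $h\ztt$ at its own minimizer $x\ti$ — fails, because $x_t$ is optimal for $h\zttm$ but not for $h\zttm + f_t$, so the relevant subgradient of $h\ztt$ at $x_t$ is $g_t$ plus a stray subgradient of $r_t$; that stray term vanishes only in the proximal case, where $r_t$ is minimized at $x_t$. Regrouping onto $\psi_t$ is what eliminates it, and it is exactly why the strong-convexity hypothesis is phrased in terms of the next loss $f\ti$ (so that, shifted down one index, it controls $\psi_t$) and why the stability terms are measured in the lagged dual norm $\dnormtm{\cdot}$.
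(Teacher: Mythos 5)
Your proof is correct and follows essentially the same route as the paper: your telescoping identity is precisely the Strong FTRL Lemma (Lemma~\ref{lem:strong_ftrl}), your $\psi_t = h\zttm + f_t$ is exactly the function $\hb$ to which the paper applies Lemma~\ref{lem:smoothchange} (with the same dropping of $r_t(x\ti) \ge 0$ and the same \WLOG $r_T \equiv 0$ observation to obtain $r\ztTm(\xs)$). The only cosmetic difference is that you establish the per-round inequality directly, via the subgradient sum rule and the standard suboptimality bound for $1$-strongly-convex functions, rather than citing Lemma~\ref{lem:smoothchange}, which encapsulates the same fact.
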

Our second theorem handles proximal regularizers:
\begin{theorem}\label{thm:proximal} \emph{\textbf{FTRL-Proximal Bound}}
  Consider Setting~\ref{setting}, and further suppose the $r_t$ are
  chosen such that $h\ztt = r\ztt + f\tt$ is 1-strongly-convex \wrt
  some norm $\normt{\cdot}$, and further the $r_t$ are proximal, that
  is $x_t$ is a minimizer of $r_t$. Then, choosing any $g_t
  \in \partial f_t(x_t)$ on each round, for any $\xs \in \R^n$ and for
  any $T>0$,
  \[
  \Regret_T(\xs) \leq r\ztT(\xs) + \h \sum_{t=1}^T \dnormt{g_t}^2 .
  \]
\end{theorem}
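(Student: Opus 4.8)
The plan is to prove the bound directly for the true losses $f_t$ (no comparator-side linearization is needed), in two stages. First I would establish a telescoping identity of the kind promised by the Strong FTRL Lemma, bounding $\Regret_T(\xs)$ by $r\ztT(\xs)$ plus a sum of per-round ``stability'' terms $h\ztt(x_t) - h\ztt(x_{t+1})$. Second I would bound each stability term by $\h\dnormt{g_t}^2$ using $1$-strong convexity of $h\ztt$. The proximal hypothesis $x_t \in \argmin_x r_t(x)$ is used only in the second stage, and is precisely what lets the dual norm carry the current index $(t)$ rather than $(t-1)$.

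For the first stage, write $h\ztt = r\ztt + f\tt$, so $h\ztT(\xs) = f\tT(\xs) + r\ztT(\xs)$. Since $x_{T+1}$ minimizes $h\ztT$,
\[
-f\tT(\xs) \le r\ztT(\xs) - h\ztT(x_{T+1}).
\]
Setting $\Phi_t = \min_x h\ztt(x) = h\ztt(x_{t+1})$ and telescoping $\Phi_0 - \Phi_T = \sum_{t=1}^T (\Phi_{t-1} - \Phi_t)$ with $\Phi_0 = r_0(x_1) \ge 0$, I would obtain
\[
\Regret_T(\xs) \le r\ztT(\xs) + \sum_{t=1}^T \big( f_t(x_t) + h\zttm(x_t) - h\ztt(x_{t+1}) \big).
\]
Using $h\ztt(x_t) = h\zttm(x_t) + f_t(x_t) + r_t(x_t)$, each summand equals $h\ztt(x_t) - h\ztt(x_{t+1}) - r_t(x_t)$, and since $r_t \ge 0$ the term $-r_t(x_t)$ may be dropped.

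For the second stage I would first show $g_t \in \partial h\ztt(x_t)$: optimality of $x_t$ for $h\zttm$ gives $0 \in \partial h\zttm(x_t)$, the proximal property gives $0 \in \partial r_t(x_t)$, and $g_t \in \partial f_t(x_t)$ by choice, so the subdifferential sum rule yields $g_t \in \partial(h\zttm + r_t + f_t)(x_t) = \partial h\ztt(x_t)$. Because $h\ztt$ is $1$-strongly convex \wrt $\normt{\cdot}$ with minimizer $x_{t+1}$, applying the strong-convexity inequality \eqref{eq:sc} at $x_t$ and then H\"older's inequality gives
\[
h\ztt(x_t) - h\ztt(x_{t+1}) \le g_t \cdot (x_t - x_{t+1}) - \h\normt{x_t - x_{t+1}}^2 \le \h\dnormt{g_t}^2 .
\]
Summing over $t$ and combining with the decomposition from the first stage completes the proof.

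The main obstacle is the membership $g_t \in \partial h\ztt(x_t)$: it is the only place the proximal assumption enters, and it relies on the subdifferential sum rule holding for the extended-valued convex functions of Setting~\ref{setting} (one must verify the domains intersect so the sum rule applies). This step is also the conceptual crux, since it is precisely the availability of a round-$t$ subgradient at $x_t$ that lets the per-round term be controlled by the round-$t$ dual norm $\dnormt{g_t}$. In the non-proximal case $0 \notin \partial r_t(x_t)$ in general, so one is forced to work with $h\zttm$ and the shifted norm $\normtm{\cdot}$, which is the origin of the ``off-by-one'' appearing in Theorem~\ref{thm:centered}.
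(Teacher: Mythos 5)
Your proof is correct, and while your first stage is essentially the paper's own Strong FTRL Lemma (your telescoping of $\Phi_t = h\ztt(x_{t+1})$ is the same argument as the paper's re-indexing proof, just phrased as a potential function), your second stage takes a genuinely different route. The paper bounds the stability term $h\ztt(x_t) - h\ztt(x\ti)$ by invoking Lemma~\ref{lem:smoothchange}: it uses the proximal property to exhibit $x_t$ as the minimizer of $\ha = h\zttm + r_t$, and that lemma is in turn proved in the appendix via Fenchel conjugacy and strong smoothness of $\rd$ (Lemma~\ref{lem:linearchange}). You instead stay entirely in the primal: you use the proximal property to conclude $0 \in \partial r_t(x_t)$, combine with $0 \in \partial h\zttm(x_t)$ and $g_t \in \partial f_t(x_t)$ to get $g_t \in \partial h\ztt(x_t)$, and then apply the strong-convexity inequality \eqref{eq:sc} at $x_t$ with $y = x\ti$, followed by H\"older and the elementary bound $ab - \h b^2 \le \h a^2$. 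This is more self-contained (no conjugate machinery needed), at the cost of the modularity the paper gets from Lemma~\ref{lem:smoothchange}, which is reused verbatim for Theorem~\ref{thm:centered}, Corollary~\ref{cor:smoothchange2}, and the composite and Mirror Descent analyses. One correction to your own commentary: the step you flag as the ``main obstacle'' is not one. The direction of the subdifferential sum rule you need --- that a sum of subgradients at $x_t$ is a subgradient of the sum at $x_t$ --- holds unconditionally for extended-valued convex functions, simply by adding the three subgradient inequalities; no domain-qualification condition is required. (The delicate direction, decomposing $\partial(h\zttm + r_t + f_t)(x_t)$ into a sum of subdifferentials, is never used.) Your reading of where the proximal hypothesis enters, and why its absence forces the $(t-1)$-indexed norm of Theorem~\ref{thm:centered}, matches the paper's ``off-by-one'' discussion exactly.
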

We state these bounds in terms of strong convexity conditions on
$h\ztt$ in order to also cover the case where the $f_t$ are themselves
strongly convex.  In fact, if each $f_t$ is strongly convex, then we
can choose $r_t(x) = 0$ for all $t$, and Theorems~\ref{thm:centered}
and \ref{thm:proximal} produce \emph{identical} bounds (and
algorithms).\footnote{To see this, note in Theorem~\ref{thm:centered}
  the norm in $\dnormtm{g_t}$ is determined by the strong convexity of
  $f\tt$, and in Theorem~\ref{thm:proximal} the norm in $\dnormt{g_t}$
  is again determined by the strong convexity of $f\tt$.}  When it is
not known a priori whether the loss functions $f_t$ are strongly
convex, the $r_t$ can be chosen adaptively to add only as much strong
convexity as needed, following~\citet{bartlett07adaptive}.
On the other hand, when the $f_t$ are not strongly convex (e.g.,
linear), a sufficient condition for both theorems is choosing the
$r_t$ such that $r\ztt$ is 1-strongly-convex \wrt $\normt{\cdot}$.

It is worth emphasizing the ``off-by-one'' difference between
Theorems~\ref{thm:centered} and \ref{thm:proximal} in this case: we can choose
$r_t$ based on $g_t$, and when using proximal regularizers, this lets
us influence the norm we use to measure $g_t$ in the final bound
(namely the $\dnormt{g_t}^2$ term); this is not possible using
Theorem~\ref{thm:centered}, since we have $\dnormtm{g_t}^2$.  This makes
constructing AdaGrad-style adaptive learning rate algorithms for
FTRL-Proximal easier \citep{mcmahan10boundopt}, whereas with FTRL-Centered algorithms one must start with slightly more regularization.
We will see this in more detail in Section~\ref{sec:applications}.

Theorem~\ref{thm:centered} leads immediately to a bound for \DA
algorithms~\citep{nesterov09dualaveraging}, including the Regularized
Dual Averaging (RDA) algorithm of \citet{xiao09dualaveraging}, and its
AdaGrad variant~\citep{duchi11adaptivejournal} (in fact, this statement is
equivalent to~\citet[Prop. 2]{duchi11adaptivejournal} when we assume the
$f_t$ are not strongly convex).
As in these cases, Theorem~\ref{thm:centered} is usually applied to
FTRL-Centered algorithms where $x_1$ (often the origin) is a global minimizer of
$r\ztt$ for each $t$.  The theorem does not require this; however,
such a condition is usually necessary to bound $r\ztTm(\xs)$ and hence
$\Regret(\xs)$ in terms of $\norm{\xs}$.

Less general versions of these theorems often assume that each $r\ztt$
is $\alpha_t$-strongly-convex with respect to a fixed norm
$\norm{\cdot}$.  Our results include this as a special case, see Section~\ref{sec:applications} and Lemma~\ref{lem:basic} in particular.

\paragraph{Non-Adaptive Algorithms} These theorems can also be used to analyze non-adaptive algorithms.
If we choose $r_0(x)$ to be a fixed non-adaptive regularizer (perhaps
chosen with knowledge of $T$) that is 1-strongly convex \wrt
$\norm{\cdot}$, and all $r_t(x) = 0$ for $t \ge 1$, then we have
$\dnormt{x} = \dnorm{x}$ for all $t$, and so both theorems
provide the identical statement
\begin{equation}\label{eq:nonadaptive}
  \Regret(\xs) \le r_0(\xs) + \h \sum_{t=1}^T \dnorm{g_t}^2.
\end{equation}
This matches \citet[Theorem 2.11]{shwartz12online}, though we improve by a constant factor due to the use of the Strong FTRL Lemma.

\subsection{Incorporating a Feasible Set}\label{sec:feasible}
We have introduced the FTRL update as an unconstrained optimization
over $x \in \R^n$.  For many learning problems, where $x_t$ is a
vector of model parameters, this may be fine, but in other
applications we need to enforce constraints. These could correspond to
budget constraints, structural constraints like $\norm{x_t}_2 \le R$
or $\norm{x_t}_1 \le R_1$, a constraint that $x_t$ is a flow on a
graph, or that $x_t$ is a probability distribution. In all of these
cases, this amounts to the constraint that $x_t \in \X$
where $\X$ is a suitable convex feasible set. Further, for FTRL-Proximal algorithms a constraint like $\norm{x_t}_2 \le R$ is generally needed in order to bound $r\ztT(\xs)$; see Section~\ref{sec:ftrlprox}.

Such constraints can be addressed immediately in our setting by adding
the additional regularizer $\indX$ to $r_0$, based on the equivalence
\[
\argmin_{x \in \R^n} f\tt(x) + r\ztt(x) + \indX(x) 
 \quad = \quad
\argmin_{x \in \X} f\tt(x) + r\ztt(x). 
\]
Further, if $r\ztt$ satisfies the conditions of
Theorem~\ref{thm:centered}, then so does $r\ztt + \indX$. Similarly,
for Theorem~\ref{thm:proximal}, adding $\indX$ to $r_0$ will generally
still produce a scheme where $r_t$ has $x_t$ as a minimizer, and so
the theorem will still apply. We apply this technique to specific
algorithms in Section~\ref{sec:applications}.

Note that while the theorems still apply, the regret bounds change in
an important way, since $\indX(\xs)$ now appears in the regret bound:
that is, if Theorem~\ref{thm:centered} on functions $r_0, r_1, \dots,$
gives a bound $\Regret(\xs) \leq r\ztTm(\xs) + \h \sum_{t=1}^T
\dnormtm{g_t}^2$, then the version constrained to select from $\X$ by
adding $\indX$ to $r_0$ has regret bound
\[\Regret_T(\xs) \leq \indX(\xs) + r\ztTm(\xs) + \h \sum_{t=1}^T \dnormtm{g_t}^2.
\]
This bound is \emph{vacuous} for $\xs \not\in \X$, but identical to
the unconstrained bound for $\xs \in \X$. This makes sense: one can
show that any online algorithm constrained to select $x_t \in \X$
cannot in general hope to have sublinear regret against some $\xs
\not\in \X$.  Thus, if we believe some $\xs \not\in \X$ could perform
very well, incorporating the constraint $x_t \in \X$ is a significant
sacrifice that should only be made if external considerations really
require it.

\section{Application to Specific Algorithms and Settings}
\label{sec:applications}
Before proving these theorems, we apply them to a variety of specific
algorithms.  We will use the following lemma, which collects some
facts for the sequence of incremental regularizers
$r_t$.  These claims are immediate consequences of
the relevant definitions.
\begin{lemma}\label{lem:basic}
  Consider a sequence of $r_t$ as in Setting~\ref{setting}.
  Then, since $r_t(x) \geq 0$, we have $r\ztt(x) \ge r\zttm(x)$, and
  so $\rc\ztt(x) \le \rc\zttm(x)$, where $\rc\ztt$ is the convex-conjugate
  of $r\ztt$.  If each $r_t$ is $\sigma_t$-strongly convex \wrt a
  norm $\norm{\cdot}$ for $\sigma_t \geq 0$, then, $r\ztt$ is
  $\sigma\ztt$-strongly convex \wrt $\norm{\cdot}$, or equivalently,
  is $1$-strongly-convex \wrt $\normt{x} = \sqrt{\sigma\ztt}
  \norm{x}$, which has dual norm $\dnormt{x} = \frac{1}{\sqrt{\sigma\ztt}}\norm{x}$.
\end{lemma}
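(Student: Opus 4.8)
The plan is to dispatch the five assertions in order, since each rests directly on a definition introduced above and the bulk of the work is bookkeeping. The monotonicity $r\ztt(x) \ge r\zttm(x)$ is immediate from $r\ztt = r\zttm + r_t$ together with the standing hypothesis $r_t(x) \ge 0$. For the conjugate inequality $\rc\ztt(x) \le \rc\zttm(x)$, I would appeal to the order-reversing property of the Fenchel conjugate \eqref{eq:convexconj}: since $r\ztt \ge r\zttm$ pointwise, for every fixed $g$ the quantity $g\cdot x - r\ztt(x)$ is pointwise no larger than $g\cdot x - r\zttm(x)$, and taking the supremum over $x$ preserves the inequality.

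The substantive claim is additivity of strong convexity: if each $r_s$ is $\sigma_s$-strongly convex \wrt $\norm{\cdot}$, then $r\ztt = \sum_{s=0}^t r_s$ is $\sigma\ztt$-strongly convex. The natural route is to fix $x,y$, pick a subgradient $g_s \in \partial r_s(x)$ for each $s$, write down the defining inequality \eqref{eq:sc} for each $r_s$, and sum over $s = 0, \dots, t$; the linear terms combine to $\bigl(\sum_s g_s\bigr)\cdot(y-x)$, the quadratic terms to $\tfrac{\sigma\ztt}{2}\norm{y-x}^2$, and $\sum_s g_s$ is a subgradient of $r\ztt$ at $x$.

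The one point requiring care — and the main obstacle — is that \eqref{eq:sc} demands the inequality for \emph{every} subgradient of $r\ztt$, whereas the summation above exhibits it only for subgradients of the special form $\sum_s g_s$. To close this gap I would invoke the subdifferential sum rule $\partial r\ztt(x) = \sum_{s=0}^t \partial r_s(x)$, which holds under the standard constraint qualification that the relative interiors of the $\dom r_s$ share a common point (available here, since the $r_s$ are finite except possibly for an indicator term whose feasible set meets the others' interiors, consistent with $\dom h\ztt$ being nonempty in Setting~\ref{setting}). Alternatively, and more cleanly, one can sidestep subgradients entirely via the equivalent secant characterization of $\sigma$-strong convexity, namely $r_s(\lambda x + (1-\lambda)y) \le \lambda r_s(x) + (1-\lambda)r_s(y) - \tfrac{\sigma_s}{2}\lambda(1-\lambda)\norm{x-y}^2$, under which additivity is a term-by-term summation requiring no qualification at all.

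Finally, the rescaling and dual-norm statements are direct substitutions. Writing $c = \sqrt{\sigma\ztt}$ and $\normt{x} = c\,\norm{x}$, the quadratic penalty becomes $\tfrac{\sigma\ztt}{2}\norm{y-x}^2 = \tfrac12 \normt{y-x}^2$, so $\sigma\ztt$-strong convexity \wrt $\norm{\cdot}$ is exactly $1$-strong convexity \wrt $\normt{\cdot}$. For the dual norm, the definition gives $\dnormt{y} = \sup_{\normt{x}\le 1} x\cdot y = \sup_{\norm{x}\le 1/c} x\cdot y = \tfrac1c\,\dnorm{y}$ after the change of variable $x \mapsto x/c$, yielding $\dnormt{\cdot} = \tfrac{1}{\sqrt{\sigma\ztt}}\dnorm{\cdot}$ (matching the stated form for self-dual norms such as $\ell_2$). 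I expect everything except the subdifferential-additivity step to be a one-line verification.
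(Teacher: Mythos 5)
Your proposal is correct. For comparison: the paper offers no proof at all for this lemma --- it simply declares the claims ``immediate consequences of the relevant definitions'' --- so your writeup supplies details the paper leaves implicit, and two of those details are worth recording. First, you are right that under the paper's subgradient-based definition of strong convexity in \eqr{sc}, additivity is not literally a one-line summation: summing the inequalities for chosen $g_s \in \partial r_s(x)$ certifies the bound only for subgradients of the form $\sum_s g_s$, and covering \emph{every} element of $\partial r\ztt(x)$ requires either the Moreau--Rockafellar sum rule (with its constraint qualification) or, as you suggest more cleanly, passing to the secant characterization of strong convexity, which is additive term-by-term and implies the subgradient form for arbitrary subgradients by a standard limiting argument. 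The paper never confronts this because it treats the lemma as definitional bookkeeping; your route through the secant inequality is the right way to make it airtight. Second, your dual-norm computation yields $\dnormt{y} = \frac{1}{\sqrt{\sigma\ztt}}\dnorm{y}$, i.e., the scaling acts on the \emph{dual} of the base norm; the lemma as stated writes $\frac{1}{\sqrt{\sigma\ztt}}\norm{x}$, which agrees with your (correct) formula only for self-dual norms such as $L_2$ --- precisely the case used in the paper's applications (Sections 3.2--3.4). You have thus flagged a minor imprecision in the statement rather than introduced an error of your own.
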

For reasons that will become clear, it is natural to define a learning rate schedule $\eta_t$ to be the inverse of the cumulative strong convexity,
\[ 
  \eta_t = \frac{1}{\sigma\ztt}.
\] 
In fact, in many cases it will be more natural to define the learning
rate schedule, and infer the sequence of $\sigma_t$,
\[
  \sigma_t = \frac{1}{\eta_t} - \frac{1}{\eta_{t-1}},
\]
with $\sigma_0 = \frac{1}{\eta_0}$.

For simplicity, in this section we assume the loss functions have
already been linearized, that is, $f_t(x) = g_t \cdot x$, unless
otherwise stated. Figure~\ref{fig:family} summarizes most of the FTRL
algorithms analyzed in this section.

\subsection{Constant Learning Rate \OGD}\label{sec:clrogd}
As a warm-up, we first consider a non-adaptive algorithm, unconstrained constant
learning rate \OGD, which selects
\begin{equation}\label{eq:gd}
x\ti = x_t - \eta g_t,
\end{equation}
where the parameter $\eta > 0$ is the learning rate.  Iterating this
update, we see $x\ti = -\eta g_{1:t}$.
There is a close connection between \OGD and FTRL, which
we will use to analyze this algorithm.  If we take FTRL with $r_0(x) =
\frac{1}{2\eta}\norm{x}_2^2$ and $r_t(x) = 0$ for $t \ge 1$, we have the
update
\begin{equation}\label{eq:gdftrl}
x\ti = \argmin_x g_{1:t} \cdot x + \frac{1}{2\eta}\norm{x}_2^2,
\end{equation}
which we can solve in closed form to see $x\ti = -\eta g_{1:t}$
as well.  Applying either Theorem \ref{thm:centered} or \ref{thm:proximal}
 (recall they are equivalent when the regularizer is
fixed) gives the bound of \eqr{nonadaptive}, in this case
\begin{equation}\label{eq:clrgdbound}
\Regret_T(\xs) \le \frac{1}{2\eta}\norm{\xs}_2^2 + \h \sum_{t=1}^T \eta \norm{g_t}^2_2,
\end{equation}
using Lemma~\ref{lem:basic} for $\dnormt{x} = \sqrt{\eta}\norm{x}_2$.
Suppose we are concerned with $\xs$ where $\norm{\xs}_2 \le R$, the
$g_t$ satisfy $\norm{g_t}_2 \le G$, and we want to minimize regret
after $T'$ rounds.  Then, choosing $\eta = \frac{R}{G\sqrt{T'}}$
minimizes \eqr{clrgdbound} when $T = T'$, and we have
\[
\Regret_T(\xs) \le \frac{R G}{2}\sqrt{T'} + \frac{R G }{2}\frac{T}{\sqrt{T'}},
\]
or $\Regret(\xs) \le R G\sqrt{T}$ when $T = T'$. However, this bound
is only $\BO(\sqrt{T})$ when $T = \BO(T')$. For $T \ll T'$, or $T \gg
T'$ the bound is no longer interesting, and in fact the algorithm will
likely perform poorly. This deficiency can be addressed via the
``doubling trick'', where we double $T'$ and restart the algorithm
each time $T$ grows larger than $T'$ (c.f.,
\citet[2.3.1]{shwartz12online}). However, adaptively choosing the
learning rate without restarting will allow us to achieve better
bounds than the doubling trick (by a constant factor) with a more practically
useful algorithm. We do this in Sections \ref{sec:da} and \ref{sec:ftrlprox} below.

\paragraph{Constant Learning Rate \OGD with a Feasible Set}
Above we assumed $\norm{\xs}_2 \le R$, but there is no a priori bound
on the magnitude of the $x_t$ selected by the algorithm. Following the
approach of Section~\ref{sec:feasible}, we can incorporate a feasible
set by taking $r_0(x) = \frac{1}{2\eta}\norm{x}_2^2 + \indX(x),$ so
the update becomes
\begin{align}\label{eq:constrainedgd}
x\ti &= \quad \argmin_{x \in \R^n} g_{1:t} \cdot x + \frac{1}{2\eta}\norm{x}_2^2 + \indX(x) 
\quad = \quad  \argmin_{x \in \X} g_{1:t} \cdot x + \frac{1}{2\eta}\norm{x}_2^2.
\end{align}
This update is in fact equivalent to the two-step update where we
first solve the unconstrained problem and then project onto the
feasible set, namely
\begin{align*}
u\ti &= \argmin_{x \in \R^n} g_{1:t} \cdot x + \frac{1}{2\eta}\norm{x}^2\\
x\ti &= \projX(u\ti) \qq{where}  \projX(u) \equiv \argmin_{x \in \X} \norm{x - u}_2.
\end{align*}
Many FTRL algorithms on feasible sets can in this way be interpreted
as lazy-projection algorithms, where we find (or maintain) the
solution to the unconstrained problem, and then project onto the
feasible set when needed.

Theorem~\ref{thm:centered} can be used to analyze the constrained
algorithm of \eqr{constrainedgd} in exactly the same way we analyzed
\eqr{gdftrl}: adding $\indX$ does not change the strong convexity of
the $\norm{x}_2^2$ terms in the regularizer, and so the only
difference is in the $r\ztT(\xs)$ term. Instead of \eqr{clrgdbound},
we have
\begin{equation*}
\forall \xs \in \X,\ \Regret_T(\xs) \le  \frac{1}{2\eta}\norm{\xs}_2^2 + \h \sum_{t=1}^T \eta \norm{g_t}^2_2,
\end{equation*}
where we have chosen to use the explicit quantification $\xs \in \X$
rather than the equivalent choice of including $\indX(\xs)$ on the
right-hand side.

Interestingly, the update of \eqr{constrainedgd} is no longer
equivalent to the standard projected \OGD update $x\ti = \projX(x_t -
\eta g_t)$; this issue is discussed in the context of more general \MD
updates in Appendix~\ref{sec:lazyvgreedy}. We will be able to analyze
this algorithm using techniques from Section~\ref{sec:md}.

\subsection{\DA}\label{sec:da}
Dual Averaging is an adaptive FTRL-Centered algorithm with linearized
loss functions; the adaptivity allows us to prove regret bounds that
are $\BO(\sqrt{T})$ for all $T$.  We choose $r_t(x) =
\frac{\sigma_t}{2}\norm{x}_2^2$ for constants $\sigma_t \ge 0$, so
$r_{0:t}$ is 1-strongly-convex \wrt the norm $\normt{x} =
\sqrt{\sigma_{0:t}}\norm{x}_2$, which has dual norm $\dnormt{x} =
\frac{1}{\sqrt{\sigma_{0:t}}}\norm{x}_2 = \sqrt{\eta_t}\norm{x}_2$,
using Lemma~\ref{lem:basic}. Plugging into Theorem~\ref{thm:centered}
then gives
\[
\forall T, \ \Regret_T(\xs) \leq
 \frac{1}{2 \eta_{T-1}} \norm{\xs}_2^2 +
 \h \sum_{t=1}^T \eta_{t-1} \norm{g_t}^2_2.
\]
Suppose we know $\norm{g_t}_2 \leq G$, and we consider $\xs$ where
$\norm{\xs}_2 \leq R$.  Then, with the choice $\eta_{t} =
\frac{R}{\sqrt{2}G\sqrt{t+1}}$, using the inequality $\sum_{t=1}^T
\frac{1}{\sqrt{t}} \leq 2 \sqrt{T}$, we arrive at
\begin{align} \label{eq:rdabound}
\forall T, \ \Regret_T(\xs)
\le  \frac{\sqrt{2}}{2}\left(R + \frac{\norm{\xs}_2^2}{R}\right) G \sqrt{T}.
\end{align}
When in fact $\norm{\xs} \leq R$, we have $\Regret \le \sqrt{2} R G
\sqrt{T}$, but the bound of \eqr{rdabound} is valid (and meaningful)
for arbitrary $\xs \in \R^n$. Observe that on a particular round $T$,
this bound is a factor $\sqrt{2}$ worse than the bound of $R G
\sqrt{T}$ shown in Section~\ref{sec:clrogd} when the learning rate is
tuned for exactly round $T$; this is the (small) price we pay for a bound that holds uniformly for all $T$.

As in the previous example, \DA can also be restricted to select from
a feasible set $\X$ by including $\indX$ in
$r_0$. Additional non-smooth regularization can also be applied by
adding the appropriate terms to $r_0$ (or any of the $r_t$); for
example, we can add an $L_1$ and $L_2$ penalty by adding the terms
$\lambda_1\norm{x}_1 + \lambda_2\norm{x}_2^2$.  When in addition the
$f_t$ are linearized, this produces the \RDA algorithm of
\citet{xiao09dualaveraging}.  Note that our result of $\sqrt{2} R G
\sqrt{T}$ improves on the bound of $2 R G \sqrt{T}$ achieved by
\citet[Cor.~2(a)]{xiao09dualaveraging}.
We consider the case of such additional regularization terms in more
detail in Section~\ref{sec:composite}.

\subsection{FTRL-Proximal}\label{sec:ftrlprox}
Suppose $\X \subseteq \{x \mid \norm{x}_2 \le R\}$, and we choose
$r_0(x) = \indX(x)$ and for $t>1$, $r_t(x) = \frac{\sigma_t}{2}\norm{x
  - x_t}_2^2$. It is worth emphasizing that unlike in the previous
examples, for FTRL-Proximal the inclusion of the feasible set $\X$ is
essential to proving regret bounds. With this constraint we have
$r\ztt(\xs) \leq \frac{\sigma\tt}{2}(2R)^2$ for any $\xs \in \X$,
since each $x_t \in \X$.  Without forcing
$x_t \in \X$, however, the terms $\norm{\xs - x_t}_2^2$ in
$r\ztt(\xs)$ cannot be usefully bounded.

With these choices, $r\ztt$ is 1-strongly-convex \wrt the norm
$\normt{x} = \sqrt{\sigma\tt}\norm{x}_2$, which has dual norm
$\dnormt{x} = \frac{1}{\sqrt{\sigma\tt}}\norm{x}_2$. Thus, applying
Theorem~\ref{thm:proximal}, we have
\begin{equation}\label{eq:proxquadbound}
\forall \xs \in \X,  \quad
\Regret(\xs)
\le   \frac{1}{2 \eta_T}(2R)^2 + \h \sum_{t=1}^T \eta_t \norm{g_t}^2,
\end{equation}
where again $\eta_t = \frac{1}{\sigma\tt}$.  Choosing $\eta_{t} =
\frac{\sqrt{2}R}{G\sqrt{t}}$ and assuming $\norm{\xs} \le R$ and $\norm{g_t}_2 \le G$, \begin{equation}\label{eq:proxquadG}
\Regret(\xs)
 \le 2\sqrt{2} R G \sqrt{T}.
\end{equation}
Note that we are a factor of 2 worse than the corresponding bound for
\DA.  However, this is essentially an artifact of loosely bounding
$\norm{\xs - x_t}_2^2$ by $(2R)^2$, whereas for \DA we can bound
$\norm{\xs - 0}_2^2$ with $R^2$.  In practice one would hope $x_t$ is
closer to $\xs$ than $0$, and so it is reasonable to believe that the
FTRL-Proximal bound will actually be tighter post-hoc in many
cases.
Empirical evidence also suggests FTRL-Proximal can work better in
practice \citep{mcmahan10equiv}.

\subsection{FTRL-Proximal with Diagonal Matrix Learning Rates}
\label{sec:adagradprox}
We now consider an AdaGrad FTRL-Proximal algorithm which is adaptive
to the observed sequence of gradients $g_t$, improving on the previous
result.  For simplicity, first consider a one-dimensional problem.
Let $r_0 = \indX$ with $\X = [-R, R]$, and fix a learning rate
schedule for FTRL-Proximal where
\[
 \eta_t = \frac{\sqrt{2}R}{\sqrt{\sum_{s=1}^t g_s^2}}
\]
for use in~\eqr{proxquadbound}.  This gives
\begin{align}
\Regret(\xs)
  \le 2\sqrt{2}R\sqrt{\sum_{t=1}^T g_t^2}, \label{eq:percoord}
\end{align}
where we have used the following lemma, which generalizes
$\sum_{t=1}^T 1/\sqrt{t} \leq 2\sqrt{T}$:
\begin{lemma}\label{lem:sum}
For any non-negative real numbers $a_1, a_2, \ldots, a_n$,
\[
\sum_{i=1}^n \frac { a_i } { \sqrt { \sum_{j=1}^i a_j } } \le 2 \sqrt
    { \sum_{i=1}^n a_i } \mbox { .}
\]
\end{lemma}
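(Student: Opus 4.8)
The plan is to telescope. I would introduce the partial sums $S_i = \sum_{j=1}^i a_j$ with $S_0 = 0$, so that $a_i = S_i - S_{i-1}$ and the left-hand side becomes $\sum_{i=1}^n (S_i - S_{i-1})/\sqrt{S_i}$. The entire argument then reduces to establishing the single per-term inequality
\[
\frac{S_i - S_{i-1}}{\sqrt{S_i}} \;\le\; 2\bigl(\sqrt{S_i} - \sqrt{S_{i-1}}\bigr),
\]
valid for every $i$ with $S_i > 0$. Granting this, summing over $i$ telescopes the right-hand side to $2(\sqrt{S_n} - \sqrt{S_0}) = 2\sqrt{S_n} = 2\sqrt{\sum_{i=1}^n a_i}$, which is exactly the claimed bound.

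To prove the per-term inequality I would rationalize the difference of square roots: since $S_{i-1} \le S_i$, we have $\sqrt{S_i} + \sqrt{S_{i-1}} \le 2\sqrt{S_i}$, and combined with $S_i - S_{i-1} = a_i \ge 0$ this gives
\[
2\bigl(\sqrt{S_i} - \sqrt{S_{i-1}}\bigr) = \frac{2(S_i - S_{i-1})}{\sqrt{S_i} + \sqrt{S_{i-1}}} \ge \frac{2(S_i - S_{i-1})}{2\sqrt{S_i}} = \frac{S_i - S_{i-1}}{\sqrt{S_i}}.
\]
Equivalently, one can read this off from concavity of $\sqrt{\cdot}$: the tangent-line bound at the point $S_i$ yields $\sqrt{S_{i-1}} \le \sqrt{S_i} - \frac{a_i}{2\sqrt{S_i}}$, which rearranges to the same statement. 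Either route is elementary.

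The only subtlety — and the sole thing requiring care — is the degenerate case where $S_i = 0$ for small $i$, i.e.\ $a_1 = \cdots = a_i = 0$, which makes the summand $a_i/\sqrt{S_i}$ an indeterminate $0/0$. I would dispose of this by starting the summation at the first index $i_0$ with $S_{i_0} > 0$: the dropped terms all have $a_i = 0$ and so contribute nothing to the left-hand side, while the telescope over $i \ge i_0$ still collapses to $2(\sqrt{S_n} - \sqrt{S_{i_0-1}}) = 2\sqrt{S_n}$ because $S_{i_0-1} = 0$. I expect no real obstacle beyond stating this convention cleanly; the rest is the one-line telescoping estimate above.
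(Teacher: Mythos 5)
Your proof is correct and complete, including the careful handling of the degenerate $0/0$ terms when leading $a_i$ vanish. Note that the paper does not prove Lemma~\ref{lem:sum} itself but defers to \citet{auer00adaptive} and \citet[Lemma~1]{streeter10conditioning}; the cited proof proceeds by induction on $n$, and its inductive step is exactly your per-term inequality $a_i/\sqrt{S_i} \le 2\bigl(\sqrt{S_i} - \sqrt{S_{i-1}}\bigr)$ (derived there from the concavity bound $\sqrt{1-u} \le 1 - u/2$, equivalent to your rationalization/tangent-line estimate), so your telescoping argument is simply the unrolled form of the same approach.
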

For a proof see \citet{auer00adaptive} or
\citet[Lemma~1]{streeter10conditioning}.
The bound of \eqr{percoord}
gives us a fully adaptive version of \eqr{proxquadG}: not only do we
not need to know $T$ in advance, we also do not need to know a bound
on the norms of the gradients $G$.  Rather, the bound is fully
adaptive and we see, for example, that the bound only depends on
rounds $t$ where the gradient is nonzero (as one would hope).  We do,
however, require that $R$ is chosen in advance; for algorithms that
avoid this, see
\citet{streeter12unconstrained,orabona13dimfree,mcmahan13minimax}, and
\citet{mcmahan14minimax}.

\newcommand{\Rinf}{R_\infty}
To arrive at an AdaGrad-style algorithm for $n$-dimensions we
need only apply the above technique on a per-coordinate basis, namely using learning rate
\[
 \eta_{t,i} = \frac{\sqrt{2}\Rinf}{\sqrt{\sum_{s=1}^t g_{s,i}^2}}
\]
for coordinate $i$, where we assume $\X \subseteq [-\Rinf, \Rinf]^n$.
\citet{streeter10conditioning} take the per-coordinate approach
directly; the more general approach here allows us to handle arbitrary
feasible sets and $L_1$ or other non-smooth regularization.

We take $r_0 = \indX$, and for $t \ge 1$ define $r_t(x) = \h
\norm{Q_t^\h(x -x_t)}_2^2$ where $Q_t = \diag\big(\sigma_{t,i})$, the
diagonal matrix with entries $\sigma_{t,i} = \eta_{t,i}\inv -
\eta_{t-1,i}\inv$. This $Q_t$ is positive semi-definite, and for any
such $Q_t$, we have that $r\ztt$ is 1-strongly-convex \wrt the norm
$\normt{x} = \norm{(Q\tt)^\h x}_2$, which has dual norm $\dnormt{g} =
\norm{(Q\tt)^{-\h} g}_2$. Then, plugging into
Theorem~\ref{thm:proximal} gives
\[
\Regret(\xs) \le r\ztT(\xs) + \h \sum_{t=1}^T \norm{(Q\tt)^{-\h} g_t}_2.
\]
which improves on
\citet[Theorem~2]{mcmahan10boundopt} by a constant factor.

Essentially, this bound amounts to summing \eqr{percoord} across all
$n$ dimensions; \citet[Cor. 9]{mcmahan10boundopt} show this bound is
at least as good (and often better) than that of \eqr{proxquadG}.
Full matrix learning rates can be derived using a matrix
generalization of Lemma~\ref{lem:sum}, e.g., \citet[Lemma
10]{duchi11adaptivejournal}; however, since this requires $\BO(n^2)$
space and potentially $\BO(n^2)$ time per round, in practice these
algorithms are often less useful than the diagonal varieties.

It is perhaps not immediately clear that the diagonal FTRL-Proximal
algorithm is easy and efficient to implement.  In fact, however,
taking the linear approximation to $f_t$, one can see $h_{1:t}(x) =
g_{1:t}\cdot x + r_{1:t}(x)$ is itself just a quadratic which can be
represented using two length $n$ vectors, one to maintain the linear
terms ($g_{1:t}$ plus adjustment terms) and one to maintain
$\sum_{s=1}^t g_{s,i}^2$, from which the diagonal entries of $Q\tt$
can be constructed.  That is, the update simplifies to
\begin{align*}
x\ti 
  &=\argmin_{x \in \X}\ (g\tt - a\tt)\cdot x + \sum_{i=1}^n \frac{1}{2\eta_{t,i}} x_i^2
 \qq{where} \text{$a_t = \sigma_t x_t$}.
\end{align*}
This update can be solved in closed-form on a per-coordinate basis
when $\X = [-\Rinf, \Rinf]^n$. For a general feasible set, it is equivalent to
a lazy-projection algorithm that first solves for the unconstrained
solution and then projects it onto $\X$ using
norm $\norm{(Q\tt)^{\h} \cdot}$ (see
\citet[Eq.~7]{mcmahan10boundopt}). Pseudo-code which also
incorporates $L_1$ and $L_2$ regularization is given in \citet{mcmahan13adclick}.

\subsection{AdaGrad Dual Averaging}
\label{sec:adagraddual}
Similar ideas can be applied to \DA (where we center each $r_t$ at
$x_1$), but one must use some care due to the ``off-by-one''
difference in the bounds.  For example, for the diagonal algorithm, it
is necessary to choose per-coordinate learning rates
\[
\eta_t \approx \frac{R}{\sqrt{G^2 + \sum_{s=1}^t g_s^2}},
\]
where $\abs{g_t} \leq G$.  Thus, we arrive at an algorithm that is
almost (but not quite) fully adaptive in the gradients, since a modest
dependence on the initial guess $G$ of the maximum per-coordinate
gradient remains in the bound.  This offset appears, for example, as
the $\delta I$ terms added to the learning rate matrix $H_t$ in
Figure~1 of \citet{duchi11adaptivejournal}. We will explore this issue
in more detail in the following example.

\newcommand{\Ginf}{G_\infty} 
\newcommand{\myfbox}[1]{\fbox{#1}}
\newcommand{\qwe}[1]{\qquad\text{#1}}
\newcommand{\secT}{\rule{0pt}{5.2ex}}
\begin{figure}[p] %
  \setlength{\fboxsep}{5pt}
  \myfbox{%
    \begin{minipage}{\textwidth}
      \vspace{-0.2in}
      \begin{small}
\begin{align*}
\shortintertext{\textbf{Non-Adaptive FTRL Algorithms}  (fixed regularizer $r_0$, with $r_t(x) = 0$ for $t \ge 1$)}
\shortintertext{Constant Learning Rate Unprojected \OGD}
x\ti &= x_t - \eta g_t \\
&= \argmin_x g\tt \cdot x_t + \frac{1}{2\eta}\norm{x}_2^2 \\
&= -\eta g_{1:t}
\shortintertext{Follow-The-Leader where the $f_t$ are 1-strongly-convex w.r.t. $\norm{\cdot}$}
x\ti &= \argmin_x f\tt(x) 
\shortintertext{\OGD for strongly-convex functions}
x\ti &%
= \argmin_x g\tt \cdot x + \h \sum_{s=1}^t \norm{x - x_s}^2
\qwe{where $g_t \in \partial f_t(x_t)$}\\
&= x_t - \eta_t g_t \qwe{where $\eta_t = \frac{1}{t}$}
\shortintertext{\secT\textbf{Adaptive FTRL-Centered Algorithms} ($r_t$ chosen adaptively and minimized at $x_1$)}
\shortintertext{Unconstrained Dual Averaging (adaptive to $t$)}
x\ti &= \argmin_x g\tt\cdot x + \frac{1}{2 \eta_t} \norm{x}_2^2 \qwe{where $\eta_{t} =
\frac{R}{\sqrt{2}G\sqrt{t+1}}$} \\
 &= -\eta_t g_{1:t}
\shortintertext{FTRL with the entropic regularizer over the probability simplex $\Delta$ (adaptive to $g_t$)}
x\ti &= \argmin_{x\in\Delta} g\tt \cdot x + \frac{1}{2\eta_t}\sum_{i=1}^n x_i \log x_i 
\qwe{where $\eta_t = \frac{\sqrt{\log n}}{\sqrt{\Ginf^2 + \sum_{s=1}^t \norm{g_s}_\infty^2}}$, or}\\
&x_{t+1, i} = \frac{\exp(-\eta_t g_{1:t,i})}{\sum_{i=1}^n  \exp(-\eta_t g_{1:t,i})} \qwe {in closed form}
\shortintertext{\secT\textbf{Adaptive FTRL-Proximal Algorithms}  ($r_t$ chosen adaptively and minimized at $x_t$)}
\shortintertext{FTRL-Proximal (adaptive to $t$)
with $\sigma_s = \eta_s\inv - \eta_{s-1}\inv$
}
x\ti 
  &= \argmin_{x \in \X} g\tt\cdot x + \sum_{s=1}^t\frac{\sigma_s}{2}\norm{x - x_s}_2^2       
  \qwe{where $\eta_{t} = 
    \frac{\sqrt{2}R}{G\sqrt{t}}$ }
\shortintertext{AdaGrad FTRL-Proximal (adaptive to $g_t$) with
$\sigma_{s,i} = \eta_{s,i}\inv - \eta_{s-1,i}\inv$.}
x\ti 
  &= \argmin_{x \in \X} g\tt\cdot x + \sum_{s=1}^t
 \h \Big\|\diag\big(\sigma_{s,i}^\h\big) (x -x_s)\Big\|_2^2
  \qwe{where 
$\eta_{t,i} = \frac{\sqrt{2}{R}}{\sqrt{\sum_{s=1}^t g_{s,i}^2}}$}
\end{align*}
\end{small}
\end{minipage}%
}
\caption{Example updates for algorithms in different branches of the FTRL family.}\label{fig:family}
\end{figure}

\subsection{Adaptive Dual Averaging with the Entropic Regularizer}
We consider problems where the algorithm selects a probability distribution (e.g., in order to sample an action from a discrete set of $n$ choices), that is $x_t \in \Delta_n$ with 
\[\Delta_n = \left\{ x \ \big\rvert\ \sum\nolimits_{i=1}^n x_i = 1\ \text{and}\ x_i \ge 0\right\}.
\]
We assume gradients are bounded so that $\norm{g_t}_\infty \le \Ginf$,
which is natural for example if each action has a cost in the range
$[-\Ginf, \Ginf]$, so $g_t \cdot x$ gives the expected cost of
choosing an action from the distribution $x$.  This is the classic
problem of prediction from expert advice
\citep{vovk90aggregating,littlestone94wm,hedge,cesabianchi06plg}.

The previously introduced algorithms can be applied by enforcing the
constraint $x \in \Delta_n$ by adding $\ind_{\Delta_n}$ to $r_0$, but
to instantiate their bounds we can only bound $\norm{g_t}_2$ by
$\sqrt{n}\Ginf$ in this case, leading to bounds like $\BO(\Ginf\sqrt{n
  T})$. By using a more
appropriate regularizer, we can reduce the dependence on
the dimension from $\sqrt{n}$ to $\sqrt{\log n}$. In particular, we use the
entropic regularizer,
\[
h(x) = I_\Delta(x) + \log n + \sum_{i=1}^n x_i \log x_i, %
\]
from which we define the following adaptive regularization schedule:
\[
r\ztt(x) = \frac{1}{\eta_t} h(x)
\qqwhere
\eta_t = \frac{\sqrt{\log n}}{\sqrt{\Ginf^2 + \sum_{s=1}^t \norm{g_s}_\infty^2}}
\]
for $t \ge 0$.  Note that as in AdaGrad Dual Averaging, we make the
learning rate schedule $\eta_t$ a function of the observed $g_t$. The
function $h$ (and hence each $r\ztt)$ is minimized by the uniform
distribution $x_1 = (1/n, \dots, 1/n)$ where $h(x) = 0$, and so these
regularizers are centered at $x_1$. Note also that $h$ is maximized at
the corners of $\Delta_n$ (e.g., $x = (1, 0, \dots, 0)$) where it has
value $\log n$.

The entropic regularizer $h$ is 1-strongly-convex with respect to the $L_1$ norm over the probability simplex $\X$ (e.g., \citet[Ex 2.5]{shwartz12online}), and it follows that $r\ztt$ is $1$-strongly convex with respect to the norm
$\normt{x} = \frac{1}{\sqrt{\eta_t}} \norm{x}_1$, and $\dnormt{g}^2 =
\eta_t \norm{g}_\infty^2.$  Then, applying Theorem~\ref{thm:centered}, we have
\begin{align*}
\Regret(\xs)
 &\le r\ztTm(\xs) + \h \sum_{t=1}^T \dnormtm{g_t}^2 \\
 &\le \frac{\log n}{\eta_{T-1}} + \h \sum_{t=1}^T \eta_{t-1} \norm{g_t}_\infty^2 \\
 &\le \frac{\log n}{\eta_{T-1}} + \frac{\sqrt{\log n}}{2} \sum_{t=1}^T \frac{ \norm{g_t}_\infty^2}{\sqrt{\sum_{s=1}^t \norm{g_s}_\infty^2}} 
    && \text{since $\forall t,\ \norm{g_t}_\infty \le \Ginf$} \\
 &\le 2 \sqrt{\left(\Ginf^2 + \sum_{t=1}^{T-1} \norm{g_t}_\infty^2\right)\log n }
    && \text{Lemma~\ref{lem:sum} and $\norm{g_T}_\infty \le \Ginf$} \\
 & \le 2 \Ginf \sqrt{T \log n}.
\end{align*}
The last line gives an adaptive ($\forall T$) version of
\citet[Cor. 2.14 and Cor 2.16]{shwartz12online}, but the version of
the bound in terms of $\norm{g_t}_\infty$ may be much tighter if there are
many rounds where the maximum magnitude cost is much lass than
$\Ginf$.  For similar adaptive algorithms, see \citet[Thm
2.3]{stoltz05thesis} and \citet[Thm 1.4, Eq. (1.22)]{stoltz11thesis}.

\subsection{Strongly Convex Functions}
Suppose each loss function $f_t$ is 1-strongly-convex \wrt a norm
$\norm{\cdot}$, and let $r_t(x) = 0$ for all $t$ (that is, we use
the Follow-The-Leader (FTL) algorithm).  Define $\normt{x} = \sqrt{t}
\norm{x}$, and observe $h\ztt(x)$ is 1-strongly-convex \wrt
$\normt{\cdot}$ (by Lemma~\ref{lem:basic}).  Then, applying either
Theorem \ref{thm:centered} or \ref{thm:proximal} (recalling they coincide when all $r_t(x) = 0$),
\begin{align*}
\Regret(\xs)
\le \h \sum_{t=1}^T \dnormt{g_t}^2
= \h \sum_{t=1}^T \frac{1}{t} \norm{g_t}^2
&\le \frac{G^2}{2}(1 + \log{T}),
\end{align*}
where we have used the inequality $\sum_{t=1}^T 1/t \leq 1 +
\log T$ and assumed $\norm{g_t} \leq G$.  This recovers, e.g.,
\citet[Cor. 1]{kakade08duality} for the the exact FTL algorithm. This
algorithm requires optimizing over $f_{1:t}$ exactly, which may be
computationally prohibitive.

For a 1-strongly-convex $f_t$ with $g_t \in \partial
f_t(x_t)$ we have by definition
\[
f_t(x) \ge \underbrace{f_t(x_t) + g_t \cdot (x - x_t)
  + \h \norm{x - x_t}^2}_{=\hf_t}.
\]
Thus, we can define a $\hf_t$ equal to the right-hand-side of the
above inequality, so $\hf_t(x) \leq f_t(x)$ and $\hf_t(x_t) =
f_t(x_t)$.  The $\hf_t$ are also 1-strongly-convex \wrt
$\norm{\cdot}$, and so running FTL on these functions produces an
identical regret bound. Theorem~\ref{thm:mdequiv} will show that the update $x\ti = \argmin_x \hf\tt(x)$ is equivalent to the \OGD update
\[
     x\ti = x_t - \frac{1}{t} g_t,
\]
showing this update is essentially the \OGD
algorithm for strongly convex functions given
by~\citet{hazan07logarithmic}.\footnote{Again, the constraint to select
  from a fixed feasible set $\X$ can be added easily in either case;
  however, the natural way to add the constraint to the FTRL
  expression produces a ``lazy-projection'' algorithm, whereas adding the constraint to the \OGD update produces a ``greedy-projection'' algorithm. This issue is discussed in some depth in Appendix~\ref{sec:lazyvgreedy}.}

\section{A General Analysis Technique}\label{sec:genproof}
In this section, we prove Theorems \ref{thm:centered} and
\ref{thm:proximal}; the analysis techniques developed will also be
used in subsequent sections to analyze composite objectives and \MD
algorithms.

\subsection{Inductive Lemmas}\label{sec:inductive}

In this section we prove the following lemma that lets us analyze arbitrary
FTRL-style algorithms:
\begin{lemma}[Strong FTRL Lemma]\label{lem:strong_ftrl}
  Let $f_t$ be a sequence of arbitrary (possibly non-convex) loss
  functions, and let $r_t$ be arbitrary non-negative regularization
  functions, such that $x_{t+1} = \argmin_x h\ztt(x)$ is well defined, where $h\ztt(x) \equiv  f\tt(x) + r\ztt(x)$.  Then, the algorithm that
  selects these $x_t$ achieves
  \vspace{-0.12in}
  \begin{equation}\label{eq:strongftrl}
    \Regret(\xs)
       \leq r\ztT(\xs) + \sum_{t=1}^T h\ztt(x_t)
           - h\ztt(x\ti) - r_t(x_t).
  \end{equation}
\end{lemma}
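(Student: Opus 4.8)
The plan is to prove the bound by a direct telescoping argument that never invokes convexity, matching the hypotheses that allow arbitrary $f_t$ and arbitrary non-negative $r_t$. First I would introduce the shorthand $B_t \equiv h\ztt(x\ti) = \min_x h\ztt(x)$ for the optimal regularized value through round $t$, so that $B_0 = \min_x r_0(x) = r_0(x_1)$. The one structural fact I lean on throughout is that $x_t$ minimizes $h\zttm$, so $h\zttm(x_t) = B_{t-1}$.

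The core computation is to rewrite each per-round summand in closed form. Since $h\ztt(x) = h\zttm(x) + h_t(x)$ with $h_t(x_t) = f_t(x_t) + r_t(x_t)$, I get
\[
h\ztt(x_t) = h\zttm(x_t) + f_t(x_t) + r_t(x_t) = B_{t-1} + f_t(x_t) + r_t(x_t),
\]
using $h\zttm(x_t) = B_{t-1}$. Substituting $h\ztt(x\ti) = B_t$ then collapses each summand to
\[
h\ztt(x_t) - h\ztt(x\ti) - r_t(x_t) = B_{t-1} - B_t + f_t(x_t),
\]
in which the $r_t(x_t)$ terms cancel exactly. Summing over $t = 1, \dots, T$ telescopes the $B$'s, leaving $\sum_{t=1}^T [h\ztt(x_t) - h\ztt(x\ti) - r_t(x_t)] = B_0 - B_T + \sum_{t=1}^T f_t(x_t)$.

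It remains to connect this to the regret. I would bound the comparator's contribution with the minimizer property once more: since $x\ti$ minimizes $h\ztT$ at $t=T$, we have $f\tT(\xs) + r\ztT(\xs) = h\ztT(\xs) \ge B_T$, hence $-f\tT(\xs) \le r\ztT(\xs) - B_T$. Combining with $\Regret_T(\xs) = \sum_{t=1}^T f_t(x_t) - f\tT(\xs)$ yields $\Regret_T(\xs) \le r\ztT(\xs) + \sum_{t=1}^T f_t(x_t) - B_T$. Finally I substitute the telescoped identity, which converts $\sum_{t=1}^T f_t(x_t) - B_T$ into $\sum_{t=1}^T[h\ztt(x_t) - h\ztt(x\ti) - r_t(x_t)] - B_0$, and I discard $-B_0 \le 0$; this is the only place the assumption $r_0 \ge 0$ enters, through $B_0 = \min_x r_0(x) \ge 0$.

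I expect no serious obstacle: the argument is purely algebraic once the right quantity $B_t$ is isolated. The only delicate points are bookkeeping --- correctly using that $x_t$ (not $x\ti$) minimizes $h\zttm$, and tracking the off-by-one in the summation indices so the $B$'s telescope and the $r_t(x_t)$ terms cancel. The \emph{strong} improvement over the classical FTRL lemma comes precisely from retaining the full $h\ztt$ differences rather than passing to a linearized surrogate, so I would take care not to weaken any step with a premature inequality.
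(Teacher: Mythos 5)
Your proof is correct and is essentially the paper's own argument in different bookkeeping: the paper telescopes the re-indexed sums $\sum_{t=1}^T \left(h\ztt(x_t) - h\zttm(x_t)\right)$ directly, while you name the minima $B_t = h\ztt(x\ti)$ and telescope those---the same quantities, since $h\zttm(x_t) = B_{t-1}$. Both versions invoke the minimizer property as an inequality exactly once (comparing $\xs$ to $x_{T+1}$ at round $T$) and drop the same non-negative term $r_0(x_1) = B_0$, so the bound is tight in exactly the same places.
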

This lemma can be viewed as a stronger form of the more well-known
standard FTRL Lemma (see \cite{kalai03ftpl,hazan08extract},
\citet[Lemma 1]{hazan10survey}, \citet[Lemma 3]{mcmahan10boundopt},
and \citet[Lemma 2.3]{shwartz12online}). The strong version has three
main advantages over the standard version: 1) it is essentially tight,
which improves the final bounds by a constant factor, 2) it can be
used to analyze adaptive FTRL-Centered algorithms in addition to
FTRL-Proximal, and 3) it relates directly to the primal-dual style of
analysis. For completeness, in Appendix~\ref{sec:stdftrl} we present
the standard version of the lemma, along with the proof of a bound
analogous to Theorem~\ref{thm:proximal} (but weaker by a constant
factor).

The Strong FTRL Lemma bounds regret by the sum of two factors:
\begin{itemize}
\item \textbf{Stability} The terms in the sum over $t$ measure how
  much better $x\ti$ is for the cumulative objective function $h\ztt$
  than the point actually selected, $x_t$: namely $h\ztt(x_t) -
  h\ztt(x\ti)$. These per-round terms can be seen as measuring the
  stability of the algorithm, an online analog to the role of
  stability in the stochastic setting
  \citep{bousquet02stability,rakhlin05stability,shwartz10stability}.

\item \textbf{Regularization} The term $r\ztT(\xs)$ quantifies how much
  regularization we have added, measured at the comparator point
  $\xs$. This captures the intuitive fact that if we could center our
  regularization at $\xs$ it should not increase regret.
\end{itemize}
Adding strongly convex regularizers will increase stability (and hence
decrease the cost of the stability terms), at the expense of paying a
larger regularization penalty $r\ztT(\xs)$. At the heart
of the adaptive algorithms we study is the ability to dynamically
balance these two competing goals.

The following corollary relates the above statement to the primal-dual
style of analysis:
\begin{corollary}\label{cor:strongconj}
  Consider the same conditions as Lemma~\ref{lem:strong_ftrl}, and further
  suppose the loss functions are linear, $f_t(x) = g_t \cdot x_t$.
  Then,
  \begin{equation}\label{eq:strongconj}
    h\ztt(x_t) - h\ztt(x\ti) - r_t(x_t)
   =
   \rc\ztt(-g\tt) -\rc\zttm(-g\ttm) + g_t \cdot x_t,
  \end{equation}
  which implies
  \begin{equation*}
    \Regret(\xs) \leq r\ztT(\xs) + \sum_{t=1}^T
    \rc\ztt(-g\tt) -\rc\zttm(-g\ttm) + g_t \cdot x_t.
  \end{equation*}
\end{corollary}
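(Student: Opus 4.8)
The plan is to prove Corollary~\ref{cor:strongconj} by a direct computation establishing the key algebraic identity \eqref{eq:strongconj}, after which the regret bound follows immediately by substituting the identity into the Strong FTRL Lemma (Lemma~\ref{lem:strong_ftrl}). So the real work is all in verifying \eqref{eq:strongconj}; the second displayed inequality is then just a term-by-term replacement inside the sum.

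First I would rewrite each per-round stability term using the definition of the conjugate. Since the losses are linear, $f\tt(x) = g\tt \cdot x$, so $h\ztt(x) = g\tt \cdot x + r\ztt(x)$, and because $x\ti = \argmin_x h\ztt(x)$ we have $h\ztt(x\ti) = \min_x\bigl(g\tt\cdot x + r\ztt(x)\bigr) = -\sup_x\bigl(-g\tt\cdot x - r\ztt(x)\bigr) = -\rc\ztt(-g\tt)$, using \eqref{eq:convexconj}. This is the crucial observation: the minimized FTRL objective value is exactly the negative conjugate of the regularizer evaluated at the negated cumulative gradient. Applying the same identity one step earlier gives $h\zttm(x_t) = -\rc\zttm(-g\ttm)$, since $x_t = \argmin_x h\zttm(x)$.

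Next I would assemble the left-hand side of \eqref{eq:strongconj}. I can write $h\ztt(x_t) = g\tt \cdot x_t + r\ztt(x_t)$ and note that $r\ztt(x_t) = r\zttm(x_t) + r_t(x_t)$ and $g\tt \cdot x_t = g\ttm \cdot x_t + g_t \cdot x_t$, so that $h\ztt(x_t) - r_t(x_t) = g\ttm\cdot x_t + r\zttm(x_t) + g_t\cdot x_t = h\zttm(x_t) + g_t \cdot x_t$. Combining this with the conjugate identities from the previous step yields
\[
h\ztt(x_t) - h\ztt(x\ti) - r_t(x_t) = h\zttm(x_t) + g_t\cdot x_t + \rc\ztt(-g\tt) = -\rc\zttm(-g\ttm) + g_t \cdot x_t + \rc\ztt(-g\tt),
\]
which is exactly \eqref{eq:strongconj} after reordering. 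Finally, summing \eqref{eq:strongconj} over $t$ and plugging into the bound \eqref{eq:strongftrl} of Lemma~\ref{lem:strong_ftrl} gives the stated regret inequality.

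I do not anticipate a serious obstacle here; the argument is a clean unwinding of definitions. The one place to be careful is the bookkeeping at the boundary: I must confirm that the telescoping nicely absorbs the $t=1$ term, which requires the conventions $r\ztzero = r_0$, $g_{1:0} = \mathbf{0}$, and $x_1 = \argmin_x r_0(x)$ so that $h_0(x_1) = -\rc_0(\mathbf{0})$ is consistent with the formula. Provided the regularizers and the empty-sum conventions are interpreted as in Setting~\ref{setting}, the identity holds for every $t \ge 1$ and the substitution into the Strong FTRL Lemma is immediate.
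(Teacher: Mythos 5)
Your proof is correct and takes essentially the same route as the paper's: the key identity $\rc\ztt(-g\tt) = -h\ztt(x\ti)$ (applied both at round $t$ and, shifted, at round $t-1$ to get $h\zttm(x_t) = -\rc\zttm(-g\ttm)$), the algebraic decomposition $h\ztt(x_t) - r_t(x_t) = h\zttm(x_t) + g_t \cdot x_t$, and then summation into the Strong FTRL Lemma. Your extra remark on the $t=1$ conventions ($g_{1:0}=\mathbf{0}$, $x_1 = \argmin_x r_0(x)$) is a sound detail the paper leaves implicit.
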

We make a few remarks before proving these results at the end of this
section.
Corollary~\ref{cor:strongconj} can easily be proved directly using the
Fenchel-Young inequality. Our statement directly matches the first claim of
\citet[Lemma~1]{orabona13dimfree}, and in the non-adaptive case
re-arrangement shows equivalence to \citet[Lemma
1]{shwartz07thesis} and \citet[Lemma 2.20]{shwartz12online};
see also \citet[Corollary~4]{kakade12regularization}.
\citet[Thm. 1]{mcmahan14minimax} give a closely related duality result
for regret and reward, and discuss several interpretations for this
result, including the potential function view, the connection to
Bregman divergences, and an interpretation of $\rc$ as a benchmark
target for reward.

Note, however, that Lemma~\ref{lem:strong_ftrl} is strictly stronger
than Corollary~\ref{cor:strongconj}: it applies to non-convex $f_t$
and $r_t$.  Further, even for convex $f_t$, it can be more
useful: for example, we can directly analyze strongly convex $f_t$
with all $r_t(x) = 0$ using the first statement.
Lemma~\ref{lem:strong_ftrl} is also arguably simpler, in that it does
not require the introduction of convexity or the Fenchel conjugate.
We now prove the Strong FTRL Lemma:
\begin{proof}[Proof of Lemma~\ref{lem:strong_ftrl}]
  First, we bound a quantity that is essentially our regret if we had
  used the FTL algorithm against the functions $h_1, \dots h_T$ (for
  convenience, we include a $-h_0(\xs)$ term as well):
\begin{align*}
   \sum_{t=1}^T h_t(&x_t) - h\ztT (\xs)\\
   &= \sum_{t=1}^T (h\ztt(x_t) - h\zttm(x_{t})) -h\ztT(\xs) \\
   &\leq \sum_{t=1}^T (h\ztt(x_t) - h\zttm(x_{t})) -h\ztT(x_{T+1})
        && \text{Since $x_{T+1}$ minimizes $h\ztT$}\\
   &\leq \sum_{t=1}^T (h\ztt(x_t) - h\ztt(x_{t+1})),
\end{align*}
where the last line follows by simply re-indexing the $-h\ztt$ terms
and dropping the the non-positive term $-h_0(x_1) = -r_0(x_1) \le 0$.
Expanding the definition of $h$ on the left-hand-side of the above
inequality gives
\[
  \sum_{t=1}^T (f_t(x_t) + r_t(x_t)) - f\tT(\xs) - r\ztT(\xs)
     \leq \sum_{t=1}^T (h\ztt(x_t) - h\ztt(x_{t+1})).
\]
Re-arranging the inequality proves the lemma.
\end{proof}
We remark it is possible to make Lemma~\ref{lem:strong_ftrl} an
\emph{equality} if we include the non-positive term $h\tT(x_{T+1}) -
h\tT(\xs)$ on the RHS, since we can assume $r_0(x_1) = 0$ without loss
of generality.  Further, if one is actually interested in the
performance of the Follow-The-Leader (FTL) algorithm against the $h_t$
(e.g., if all the $r_t$ are uniformly zero), then choosing $\xs =
x_{T+1}$ is natural. 

\begin{proof}[Proof of Corollary~\ref{cor:strongconj}]
  Using the definition of the Fenchel conjugate and of $x\ti$,
  \begin{equation}\label{eq:rch}
    \rc\ztt(-g\tt)
    = \max_x\ -g\tt \cdot x - r\ztt(x)
    = - \big(\min_x\ g\tt \cdot x + r\ztt(x) \big)
    = - h\ztt(x_{t+1}).
  \end{equation}
  Now, observe that
  \begin{align*}
    h\ztt(x_t) - r_t(x_t)
    &= g\tt\cdot x_t + r\ztt(x_t) - r_t(x_t) \\
    &= g\ttm\cdot x_t + r\zttm(x_t) + g_t \cdot x_t \\
    &= h\zttm(x_t) + g_t \cdot x_t \\
    &= -\rc\zttm(-g\ttm) + g_t \cdot x_t,
  \end{align*}
  where the last line uses \eqr{rch} with $t \rightarrow t-1$.
  Combining this with \eqr{rch} again ($-h\ztt(x_{t+1}) = \rc\ztt(-g\tt)$) proves
  \eqr{strongconj}.
\end{proof}

\subsection{Tools from Convex Analysis}\label{sec:convex}
Here we highlight a few key tools from convex analysis that will be
used to bound the per-round stability terms that appear in the
Strong FTRL Lemma.  For more background on convex analysis,
see~\citet{rockafellar} and \citet{shwartz07thesis,shwartz12online}.  The next
result generalizes arguments found in earlier proofs for FTRL
algorithms:
\begin{lemma}\label{lem:smoothchange}
  Let $\ha: \R^n \rightarrow \RS$ be a convex function such that $x_1
  = \argmin_x \ha(x)$ exists.  Let $\psi$ be a convex function such
  that $\hb(x) = \ha(x) + \psi(x)$ is strongly convex \wrt norm
  $\norm{\cdot}$.  Let $x_2 = \argmin_x \hb(x)$. Then, for
  any $b \in \partial \psi(x_1)$, we have
  \begin{equation}\label{eq:xchange}
  \norm{x_1 - x_2} \leq \dnorm{b},
  \end{equation}
  and for any $x'$,
  \[
  \hb(x_1) - \hb(x') \le \h \dnorm{b}^2.
  \]
\end{lemma}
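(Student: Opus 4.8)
The plan is to prove the two claims in sequence, using strong convexity of $\hb$ as the central tool. First I would establish the displacement bound \eqr{xchange}. Since $x_1 = \argmin \ha$, the optimality condition gives $0 \in \partial \ha(x_1)$, and since $b \in \partial\psi(x_1)$, we have $b \in \partial \hb(x_1)$ by the sum rule for subdifferentials. Meanwhile $x_2 = \argmin \hb$ gives $0 \in \partial \hb(x_2)$. The idea is to apply the strong convexity inequality \eqref{eq:sc} for $\hb$ twice: once expanding around $x_1$ using the subgradient $b$, evaluated at $x_2$, and once expanding around $x_2$ using the subgradient $0$, evaluated at $x_1$. This yields
\[
\hb(x_2) \ge \hb(x_1) + b\cdot(x_2 - x_1) + \tfrac{1}{2}\norm{x_2 - x_1}^2
\]
and
\[
\hb(x_1) \ge \hb(x_2) + \tfrac{1}{2}\norm{x_1 - x_2}^2.
\]
Adding these and cancelling the function values gives $0 \ge b\cdot(x_2 - x_1) + \norm{x_1 - x_2}^2$, hence $\norm{x_1 - x_2}^2 \le b\cdot(x_1 - x_2) \le \dnorm{b}\norm{x_1 - x_2}$ by the generalized H\"older inequality stated in the excerpt. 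Dividing through by $\norm{x_1 - x_2}$ (the bound being trivial when this is zero) gives \eqr{xchange}.

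For the second claim, I would bound $\hb(x_1) - \hb(x')$ for arbitrary $x'$. The natural route is to use that $x_2$ minimizes $\hb$, so $\hb(x_2) \le \hb(x')$, and therefore it suffices to bound $\hb(x_1) - \hb(x_2)$. From the first strong convexity inequality above (expanding $\hb$ around $x_1$ with subgradient $b$, evaluated at $x_2$), rearranged, I get
\[
\hb(x_1) - \hb(x_2) \le -b\cdot(x_2 - x_1) - \tfrac{1}{2}\norm{x_2 - x_1}^2 = b\cdot(x_1 - x_2) - \tfrac{1}{2}\norm{x_1 - x_2}^2.
\]
Now I apply H\"older to the first term and the displacement bound \eqr{xchange} to control the geometry: $b\cdot(x_1-x_2) \le \dnorm{b}\norm{x_1 - x_2} \le \dnorm{b}^2$. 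A cleaner finish is to treat the right-hand side as a quadratic in the scalar $s = \norm{x_1 - x_2}$, namely $\dnorm{b}\,s - \tfrac{1}{2}s^2$, which is maximized at $s = \dnorm{b}$ with value $\tfrac{1}{2}\dnorm{b}^2$; this directly yields $\hb(x_1) - \hb(x_2) \le \tfrac{1}{2}\dnorm{b}^2$, and composing with $\hb(x_2) \le \hb(x')$ completes the proof.

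The main obstacle I anticipate is being careful about the subgradient bookkeeping at $x_1$: I must justify that $b \in \partial\hb(x_1)$, which requires that $0 \in \partial\ha(x_1)$ (from $x_1$ minimizing $\ha$) together with additivity of subdifferentials under the sum $\hb = \ha + \psi$. The sum rule for subdifferentials holds here under mild regularity (e.g.\ that the relative interiors of the domains intersect, which is guaranteed since $x_1 \in \dom\ha \cap \dom\psi$ is a minimizer of $\ha$), and this is the only step where I would want to state the hypotheses precisely rather than wave at them. Everything else is a mechanical combination of the strong convexity definition \eqref{eq:sc}, the generalized H\"older inequality, and optimality, so the argument should be short once the subgradient relation at $x_1$ is pinned down.
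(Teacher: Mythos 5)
Your proof is correct, and it takes a genuinely different route from the paper's. The paper proceeds by reduction: it first proves the special case of a \emph{linear} perturbation (its Lemma~\ref{lem:linearchange}: $\hb = \ha + b \cdot x$ with $\ha$ strongly convex and minimized at $x_1$), and does so via Fenchel conjugation and the strong-convexity/strong-smoothness duality of Lemma~\ref{lem:strongdual}, writing $\ha(x_1) = -\phd(0)$ and $b\cdot x_2 + \ha(x_2) = -\phd(-b)$ and invoking strong smoothness of $\phd$; the general case then follows by a tilting trick, defining $\ha'(x) = \ha(x) + \psi(x) - b\cdot x$, which is minimized at $x_1$ because $x_1$ minimizes each summand separately, and noting $\hb = \ha' + b\cdot x$. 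You instead apply the strong-convexity inequality \eqref{eq:sc} for $\hb$ directly at the two points, with subgradient $b$ at $x_1$ and subgradient $0$ at $x_2$, and finish with H\"older plus a scalar quadratic maximization. Your argument is more elementary and self-contained: it never introduces conjugates or needs Lemma~\ref{lem:strongdual} (which the paper cites from external sources). What the paper's route buys is the intermediate Lemma~\ref{lem:linearchange}, which holds with \emph{equality} for quadratics (a fact the paper remarks on), and machinery that is reused in its primal-dual analysis (e.g., the alternative proof of Theorem~\ref{thm:centered} via strong smoothness). One further point in your favor that you undersold: the sole step you flagged as needing care --- the subdifferential sum rule --- is actually a non-issue, because you only need the inclusion $\partial\ha(x_1) + \partial\psi(x_1) \subseteq \partial\hb(x_1)$, which holds unconditionally by adding the two subgradient inequalities; regularity conditions like intersecting relative interiors are required only for the reverse inclusion, which you never use. (The paper's tilting trick sidesteps even this easy direction, since a point minimizing two functions separately minimizes their sum.)
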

\noindent
We defer the proofs of the results in this section to
Appendix~\ref{sec:convexproofs}.  When $\ha$ and $\psi$ are quadratics
(with $\psi$ possibly linear) and the norm is the corresponding $L_2$ norm,
both statements in the above lemma hold with equality.  For the
analysis of composite updates (Section~\ref{sec:composite}), it will
be useful to split the change $\psi$ in the objective function $\phi$
into two components:
\begin{corollary}\label{cor:smoothchange2}
  Let $\ha: \R^n \rightarrow \RS$ be a convex function such that $x_1
  = \argmin_x \ha(x)$ exists.  Let $\psi$ and $\Psi$ be convex
  functions such that $\hb(x) = \ha(x) + \psi(x) + \Psi(x)$ is
  strongly convex \wrt norm $\norm{\cdot}$.  Let $x_2 = \argmin_x
  \hb(x)$.  Then, for any $b \in \partial \psi(x_1)$ and any $x'$,
  \[
  \hb(x_1) - \hb(x') \le \h \dnorm{b}^2 + \Psi(x_1) - \Psi(x_2).
  \]
\end{corollary}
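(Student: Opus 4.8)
The plan is to first reduce to the case $x' = x_2$ and then re-run the stability argument that underlies Lemma~\ref{lem:smoothchange}, the key difference being that I keep the term $\Psi$ intact (handled through its own convexity) rather than linearizing it into the dual-norm penalty. Since $x_2 = \argmin_x \hb(x)$, for any $x'$ we have $\hb(x') \ge \hb(x_2)$, so $\hb(x_1) - \hb(x') \le \hb(x_1) - \hb(x_2)$. Hence it suffices to prove
\[
\hb(x_1) - \hb(x_2) \le \h\dnorm{b}^2 + \Psi(x_1) - \Psi(x_2).
\]

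Next I would assemble a subgradient of $\hb$ at $x_1$. Because $x_1$ minimizes $\ha$ we have $0 \in \partial\ha(x_1)$; by hypothesis $b \in \partial\psi(x_1)$; and choosing any $p \in \partial\Psi(x_1)$, the (easy direction of the) subgradient sum rule gives $b + p \in \partial\ha(x_1) + \partial\psi(x_1) + \partial\Psi(x_1) \subseteq \partial\hb(x_1)$. Applying the strong convexity of $\hb$ (unit modulus, as in Lemma~\ref{lem:smoothchange}) at $x_1$ with this subgradient, evaluated at $x_2$, gives
\[
\hb(x_2) \ge \hb(x_1) + (b+p)\cdot(x_2 - x_1) + \h\norm{x_2 - x_1}^2,
\]
which rearranges to
\[
\hb(x_1) - \hb(x_2) \le b\cdot(x_1 - x_2) + p\cdot(x_1 - x_2) - \h\norm{x_1 - x_2}^2.
\]

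The crucial step is to dispose of $p\cdot(x_1 - x_2)$ using convexity of $\Psi$ rather than folding $p$ into the dual norm: since $p \in \partial\Psi(x_1)$, convexity gives $\Psi(x_2) \ge \Psi(x_1) + p\cdot(x_2 - x_1)$, i.e. $p\cdot(x_1 - x_2) \le \Psi(x_1) - \Psi(x_2)$. For the remaining terms I would use H\"older, $b\cdot(x_1 - x_2) \le \dnorm{b}\norm{x_1 - x_2}$, and complete the square, $\dnorm{b}\,d - \h d^2 \le \h\dnorm{b}^2$ with $d = \norm{x_1 - x_2}$ (equivalently $\h(\dnorm{b} - d)^2 \ge 0$). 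Combining these three estimates yields exactly the desired bound.

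The calculation itself is routine; the only genuine subtlety is the modeling choice in the crucial step. A naive reduction — invoking Lemma~\ref{lem:smoothchange} with $\psi' = \psi + \Psi$ and subgradient $b + p$ — would produce $\h\dnorm{b+p}^2$, which is incomparable to the target $\h\dnorm{b}^2 + \Psi(x_1) - \Psi(x_2)$; the entire point of the corollary (and of treating composite penalties exactly in Section~\ref{sec:composite}) is to avoid charging the subgradient $p$ of $\Psi$ to the dual norm, producing instead the telescoping surplus $\Psi(x_1) - \Psi(x_2)$. The one hypothesis worth flagging is that $\partial\Psi(x_1)$ be nonempty so that $p$ exists, which holds for the regularizers to which the corollary is applied (e.g. $\Psi(x) = \lambda\norm{x}_1$).
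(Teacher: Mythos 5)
Your proof is correct, but it takes a genuinely different route from the paper's. The paper proves the corollary by reduction to Lemma~\ref{lem:smoothchange}: it introduces the auxiliary point $x_2' = \argmin_x \ha(x) + \psi(x)$, applies that lemma to the pair $(\ha,\ \ha+\psi)$ to get $\ha(x_1)+\psi(x_1)-\ha(x_2')-\psi(x_2') \le \h \dnorm{b}^2$, and then uses the optimality of $x_2'$, namely $\ha(x_2')+\psi(x_2') \le \ha(x_2)+\psi(x_2)$, to swap $x_2'$ for $x_2$; the difference $\Psi(x_1)-\Psi(x_2)$ then appears for free, and no subgradient of $\Psi$ is ever taken. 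You instead argue from first principles on $\hb$ itself: assemble $b+p \in \partial\hb(x_1)$, apply the strong-convexity inequality at $x_1$, convert $p\cdot(x_1-x_2)$ into $\Psi(x_1)-\Psi(x_2)$ via convexity of $\Psi$, and finish with H\"older and completing the square. Each approach carries a mild hypothesis the other avoids. Yours needs $\partial\Psi(x_1) \neq \emptyset$, which you correctly flag; this holds in all the paper's applications ($\Psi = \norm{\cdot}_1$ everywhere, or $\Psi = \indX$ at a feasible $x_1$, where the normal cone contains $0$), but can fail for extended-valued $\Psi$ at boundary points of its domain. Conversely, the paper's invocation of Lemma~\ref{lem:smoothchange} for the pair $(\ha,\ \ha+\psi)$ tacitly requires $\ha+\psi$ itself to be strongly convex (in particular, that $x_2'$ exists), which is not literally implied by the corollary's stated hypothesis that only $\hb = \ha+\psi+\Psi$ is strongly convex --- though it, too, holds in every application, since there the strong convexity comes from the regularizers inside $\ha$. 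So your argument is slightly more faithful to the hypotheses as stated and more elementary, bypassing the conjugate-duality machinery (Lemma~\ref{lem:strongdual} via Lemma~\ref{lem:linearchange}) that underlies Lemma~\ref{lem:smoothchange}, while the paper's proof is shorter because it reuses the already-proved lemma. Your closing observation that the naive reduction (taking $\psi' = \psi+\Psi$ with subgradient $b+p$) would yield the incomparable bound $\h\dnorm{b+p}^2$ is apt: the paper's auxiliary-minimizer trick is exactly how it manages to reduce to Lemma~\ref{lem:smoothchange} without incurring that loss.
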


The concept of strong smoothness plays a key role in the proof of the
above lemma, and can also be used directly in the application of
Corollary~\ref{cor:strongconj}.  A function $\psi$ is
$\sigma$\defn{-strongly-smooth}
with respect to a norm $\norm{\cdot}$
if it is differentiable and for all $x,y$ we have
\begin{equation}\label{eq:strongsmooth}
\psi(y)
 \leq \psi(x) + \grad \psi(x)\cdot (y - x) + \tfrac{\sigma}{2} \norm{y - x}^2.
\end{equation}
There is a fundamental duality between strongly convex and strongly
smooth functions:
\begin{lemma}\label{lem:strongdual}
  Let $\psi$ be closed and convex.  Then $\psi$ is
  $\sigma$-strongly convex with respect to the norm $\norm{\cdot}$ if
  and only if $\dpsi$ is $\frac{1}{\sigma}$-strongly smooth with
  respect to the dual norm $\dnorm{\cdot}$.
\end{lemma}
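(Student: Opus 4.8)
Proof sketch of Lemma~\ref{lem:strongdual}. The plan is to prove the two implications separately, in each case reducing the target inequality to the scalar optimization $\sup_{s \ge 0}\big(s\,c - \tfrac{1}{2\sigma}s^2\big) = \tfrac{\sigma}{2}c^2$. The two ingredients that let me move between the primal and dual pictures are the Fenchel--Young relations (for closed convex $\psi$, $g \in \partial\psi(x)$ is equivalent to $x \in \partial\dpsi(g)$ and to the equality $\psi(x) + \dpsi(g) = g\cdot x$, and $\psi = \dpsi^\star$), together with the dual-norm identity $\sup_{\dnorm{\delta}\le s}\delta\cdot v = s\,\norm{v}$, which holds in $\R^n$ because $\norm{\cdot}$ is itself the dual of $\dnorm{\cdot}$.

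First I would assume $\psi$ is $\sigma$-strongly convex and deduce that $\dpsi$ is $\tfrac{1}{\sigma}$-strongly smooth. The strong-convexity inequality \eqr{sc} makes $\psi$ coercive, so for every $a$ the supremum in $\dpsi(a) = \sup_x a\cdot x - \psi(x)$ is attained; it is attained at a \emph{unique} point (two maximizers would contradict \eqr{sc}), whence $\dpsi$ is finite and differentiable with $\grad\dpsi(a)$ equal to that maximizer. Fixing $a,b$ and writing $x_a = \grad\dpsi(a)$, $x_b = \grad\dpsi(b)$, I would expand $\dpsi(b) - \dpsi(a) - x_a\cdot(b-a)$ using the Fenchel--Young equalities $\dpsi(a) = a\cdot x_a - \psi(x_a)$ and $\dpsi(b) = b\cdot x_b - \psi(x_b)$; this collapses to $b\cdot(x_b - x_a) - \psi(x_b) + \psi(x_a)$, and invoking \eqr{sc} for $\psi$ at $x_a$ (with subgradient $a$), evaluated at $x_b$, bounds it by $(b-a)\cdot(x_b - x_a) - \tfrac{\sigma}{2}\norm{x_b - x_a}^2$. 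Applying Hölder to the inner product and maximizing over the scalar $\norm{x_b - x_a}$ gives exactly $\tfrac{1}{2\sigma}\dnorm{b-a}^2$, i.e.\ \eqr{strongsmooth} for $\dpsi$.

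For the converse I would assume $\dpsi$ is $\tfrac{1}{\sigma}$-strongly smooth and show $\psi$ is $\sigma$-strongly convex. Take any $x$ and any $g \in \partial\psi(x)$, so that $x = \grad\dpsi(g)$ by closedness and differentiability; using $\psi(x) = g\cdot x - \dpsi(g)$, the goal \eqr{sc} rearranges into $\psi(y) \ge g\cdot y - \dpsi(g) + \tfrac{\sigma}{2}\norm{y-x}^2$. I would lower-bound the left side by the conjugate inequality $\psi(y) \ge (g+\delta)\cdot y - \dpsi(g+\delta)$ for an arbitrary perturbation $\delta$, apply \eqr{strongsmooth} to $\dpsi(g+\delta)$ (its gradient at $g$ being $x$), and arrive at $\psi(y) \ge g\cdot y - \dpsi(g) + \delta\cdot(y-x) - \tfrac{1}{2\sigma}\dnorm{\delta}^2$. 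Taking the supremum over $\delta$ and using the dual-norm identity, $\sup_\delta\big(\delta\cdot(y-x) - \tfrac{1}{2\sigma}\dnorm{\delta}^2\big) = \tfrac{\sigma}{2}\norm{y-x}^2$, which yields \eqr{sc}.

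The algebra and the scalar maximizations are routine; the step that needs the most care is the forward direction's assertion that strong convexity makes $\dpsi$ finite and differentiable on all of $\R^n$, which rests on coercivity from \eqr{sc} plus closedness to guarantee a unique maximizer. The only other point worth checking explicitly is that the dual of $\dnorm{\cdot}$ is again $\norm{\cdot}$, so that the dual-norm supremum evaluates as claimed; this is automatic for any norm on $\R^n$.
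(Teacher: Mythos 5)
Your proof is correct, but there is nothing in the paper to compare it against line by line: the paper does not prove Lemma~\ref{lem:strongdual} at all, it simply cites \citet[Lemma~15]{shwartz07thesis} for the strong-convexity-implies-strong-smoothness direction and \citet[Theorem~3]{kakade12regularization} for the converse. What you have written is a correct, self-contained reconstruction of the standard argument underlying those references. The forward direction checks out: coercivity from \eqr{sc} gives attainment of the supremum defining $\dpsi(a)$, strong convexity gives uniqueness of the maximizer, the argmax set equals $\partial\dpsi(a)$ because $\psi$ is closed, proper, and convex, and a finite convex function with a singleton subdifferential everywhere is differentiable \citep[Thm.~25.1]{rockafellar}; the Fenchel--Young expansion, \eqr{sc} applied at $x_a$ with subgradient $a$, H\"older, and maximizing $s \mapsto \dnorm{b-a}\,s - \tfrac{\sigma}{2}s^2$ give exactly $\tfrac{1}{2\sigma}\dnorm{b-a}^2$, which is \eqr{strongsmooth} for $\dpsi$. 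The converse also checks out: $g \in \partial\psi(x)$ iff $x \in \partial\dpsi(g) = \set{\grad \dpsi(g)}$ by closedness and differentiability, and the perturbation argument with $\sup_\delta\big(\delta\cdot(y-x) - \tfrac{1}{2\sigma}\dnorm{\delta}^2\big) = \tfrac{\sigma}{2}\norm{y-x}^2$ (valid since the dual of $\dnorm{\cdot}$ is $\norm{\cdot}$ in $\R^n$) yields \eqr{sc}. One point worth making explicit: your converse establishes \eqr{sc} only at points $x$ where $\partial\psi(x)$ is nonempty, but that is precisely what the paper's definition of strong convexity quantifies over, so there is no gap. If you wanted to polish the write-up, the only step deserving expansion or an explicit citation is the differentiability of $\dpsi$ from uniqueness of the maximizer, which you correctly identify as the delicate step.
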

For the strong convexity implies strongly smooth direction see
\citet[Lemma~15]{shwartz07thesis}, and for the other direction see
\citet[Theorem 3]{kakade12regularization}.

\subsection{Regret Bound Proofs}\label{sec:proofs}
In this section, we prove Theorems \ref{thm:centered} and
\ref{thm:proximal} using Lemma~\ref{lem:strong_ftrl}.  Stating
these two analyses in a common framework makes clear exactly where the
``off-by-one'' issue arises for FTRL-Centered, and how assuming proximal $r_t$
resolves this issue.  The key tool is Lemma \ref{lem:smoothchange},
though for comparison we also provide a proof of
Theorem~\ref{thm:centered} for linearized functions from
Corollary~\ref{cor:strongconj} directly using strong smoothness.

\paragraph{General FTRL including FTRL-Centered (Proof of Theorem~\ref{thm:centered})}
In order to apply Lemma~\ref{lem:strong_ftrl}, we work to bound the stability terms in the sum in~\eqr{strongftrl}.
Fix a particular round $t$.  For Lemma~\ref{lem:smoothchange} take
$\ha(x) = h\zttm(x)$ and $\hb(x) = h\zttm(x) + f_t(x)$, so $x_t =
\argmin_x \ha(x)$, and by assumption $\hb$ is 1-strongly-convex \wrt
$\normtm{\cdot}$.
Then, applying Lemma~\ref{lem:smoothchange} to $\hb$ (with $x' =
x_{t+1}$), we have
$\hb(x_t) - \hb(x\ti) \le \h \dnormtm{g_t}^2$ for $g_t \in \partial f_t(x_t)$, and so
\begin{align*}
h\ztt(x_t) - h\ztt(x\ti) - r_t(x_t)
   &= \hb(x_t) + r_t(x_t) - \hb(x\ti) - r_t(x\ti) - r_t(x_t) \\
   & \le \h \dnormtm{g_t}^2
\end{align*}
where we have used the assumption that $r_t(x) \geq 0$ to drop the
$-r_t(x\ti)$ term.  We can now plug this bound into
Lemma~\ref{lem:strong_ftrl}.  However, we need to make one additional
observation: the choice of $r_T$ only impacts the bound by increasing
$r_{0:T}(\xs)$.  Further, $r_T$ does not influence any of the points
$x_1, \dots, x_T$ selected by the algorithm.  Thus, for analysis
purposes, we can take $r_T(x) = 0$ without loss of generality, and
hence replace $r_{0:T}(\xs)$ with $r_{0:T-1}(\xs)$ in the final bound.\qed

\paragraph{FTRL-Proximal (Proof of Theorem~\ref{thm:proximal})}
The key is again to bound the stability terms in the sum
in~\eqr{strongftrl}.  Fix a particular round $t$, and take $\ha(x) =
f\ttm(x) + r\ztt(x) = h\ztt(x) - f_t(x)$.  Since the $r_t$ are
proximal (so $x_t$ is a global minimizer of $r_t$) we have $x_t =
\argmin_x \ha(x)$, and $x\ti = \argmin_x \ha(x) + f_t(x)$.  Thus,
\begin{align}
h\ztt(x_t) - h\ztt(x\ti) - r_t(x_t)
  &\le h\ztt(x_t) - h\ztt(x\ti) && \text{Since $r_t(x) \ge 0$} \notag \\
  &= \ha(x_t) + f_t(x_t) - \ha(x\ti) - f_t(x\ti) \notag \\
  &\le \h \dnormt{g_t}^2, \label{eq:ppr}
\end{align}
where the last line follows by applying
\text{Lemma~\ref{lem:smoothchange}} to $\ha$ and $\hb(x) = \ha(x) +
f_t(x) = h\ztt(x)$.  Plugging into
Lemma~\ref{lem:strong_ftrl} completes the proof.  \qed

\paragraph{Primal-dual Analysis of General FTRL on Linearized Functions}
We give an alternative proof of Theorem~\ref{thm:centered} for linear
functions, $f_t(x) = g_t \cdot x$, using \eqr{strongconj}.  We remark
that in this case $x_t = \grad \rc\ttm(-g\ttm)$ (see
Lemma~\ref{lem:smooth} in Appendix~\ref{sec:convexproofs}).

By Lemma~\ref{lem:strongdual}, $\rc\ttm$ is
1-strongly-smooth with respect to $\dnormtm{\cdot}$, and so
\begin{equation} \label{eq:ss}
\rc\ttm(-g\tt) \leq  \rc\ttm(-g\ttm) - x_t\cdot g_t + \h \dnormtm{g_t}^2,
\end{equation}
and we can bound the per-round terms in \eqr{strongconj} by
\begin{align*}
  \rc\tt(-g\tt) - \rc\ttm(-g\ttm) +  x_t \cdot g_t
   &\le \rc\tt(-g\tt) -\rc\ttm(-g\tt)  + \h \dnormtm{g_t}^2 \\
   &\le  \h \dnormtm{g_t}^2,
\end{align*}
where we use \eqr{ss} to bound $-\rc\ttm(-g\ttm) + x_t \cdot g_t$, and
then used the fact that $\rc\ttm(-g\tt) \ge
\rc\tt(-g\tt)$ from Lemma~\ref{lem:basic}. \qed

\section{Additional Regularization Terms and Composite
  Objectives} \label{sec:composite}
In this section, we consider generalized FTRL algorithms where we
introduce an additional regularization term $\alpha_t \Psi(x)$ on each
round, where $\Psi$ is a convex function taking on only non-negative
values, and the weights $\alpha_t \ge 0$ for $t \ge 1$ are
non-increasing in $t$.  We further assume $\Psi$ and $r_0$ are both minimized
at $x_1$, and \WLOG $\Psi(x_1) = 0$ (as usual, additive constant terms do
not impact regret). We generalize our definition of
$h_t$ to $h_0(x) = r_0(x)$ and 
\begin{equation}\label{eq:genh}
h_t(x) = g_t\cdot x + \alpha_t \Psi(x) + r_t(x),
\end{equation}
so the FTRL update is
\begin{equation}\label{eq:genhupdate}
x\ti = \argmin_x h\ztt(x) = \argmin_x g\tt\cdot x + \alpha\tt \Psi(x) + r\ztt(x).
\end{equation}
In applications, generally the $g_t \cdot x_t$ terms
come from the linearization of a loss $\ell_t$, that is $g_t
= \partial \ell_t(x_t)$. Here $\ell_t$ is for example a loss function
measuring the prediction error on the $t$th training example for a
model parameterized by $x_t$.  (In fact, it is straightforward to
replace $g_t\cdot x$ with $\ell_t(x)$ in this section, but for
simplicity we assume linearization has been applied).

The $\Psi$ terms often encode a non-smooth regularizer, and might be
added for a variety of reasons. For example, the actual convex
optimization problem we are solving may itself contain regularization
terms. This is perhaps most clear in the case of applying an online
algorithm to a batch problem as in \eqr{batch}. For example:
\begin{itemize}
\item An $L_2$ penalty $\Psi(x) = \norm{x}_2^2$ might be added in
  order to promote generalization in a statistical setting, as in
  regularized empirical risk minimization.
\item An $L_1$ penalty $\Psi(x) = \norm{x}_1$ (as in the LASSO method)
  might be added to encourage sparse solutions and improve
  generalization in the high-dimensional setting ($n \gg T$).
\item An indicator function might be added by taking by taking
  $\Psi(x) = \indX(x)$ to force $x \in \X$ where $\X$ is a convex set
  of feasible solutions.
\end{itemize}
As discussed in Section~\ref{sec:feasible}, the case of $\Psi = \indX$
can be handled by our existing results. However, for other choices of
$\Psi$ it is generally preferable to only apply the linearization to
the part of the objective where it is necessary computationally; in
the $L_1$ case, given loss functions $\ell_t(x) + \lambda_1
\norm{x}_1$, we might partially linearize by taking $\hf_t(x) = g_t
\cdot x + \lambda_1 \norm{x}_1$, where $g_t \in \partial
\ell_t(x_t)$. 
Recall that the primary motivation for linearization was to reduce the
computation and storage requirements of the algorithm. Storing
and optimizing over $\ell\tt$ might be prohibitive; however, for
common choices of $\Psi$ and $r_t$, the optimization of
\eqr{genhupdate} can be represented and solved efficiently (often in
closed form). Thus, it is advantageous to consider such a composite
representation.

Further, even in the case of a feasible set $\Psi = \indX$, a careful
consideration of if and when $\Psi$ is linearized is critical to
understanding the connection between \MD and FTRL. In fact,
we will see that \MD \emph{always} linearizes the past penalties
$\alpha_{1:t-1} \Psi$, while with FTRL it is possible to avoid this
additional linearization as in \eqr{genhupdate} --- to make this distinction more clear, we will refer to the direct application of \eqr{genhupdate} as the \Natural algorithm. For $\Psi = \indX$
this gives rise to the distinction between ``lazy-projection'' and
``greedy-projection'' algorithms, as discussed in
Appendix~\ref{sec:lazyvgreedy}.  And for $\Psi(x) = \norm{x}_1$, this
distinction makes \Natural algorithms preferable to composite-objective \MD for
generating sparse models using $L_1$ regularization (see
Section~\ref{sec:l1}).

There are two types of regret bounds we may wish to prove in this
setting, depending on whether we group the $\Psi$ terms with the
objective $g_t$, or with the regularizer $r_t$. We discuss these
below.
\paragraph{In the objective} We may view the $\alpha_t \Psi(x)$ terms as part of the objective, in that we desire a bound on regret against the functions $f^\Psi_t(x) \equiv g_t \cdot x + \alpha_t \Psi(x)$, that is
\[
\Regret(\xs, f^\Psi) \equiv \sum_{t=1}^T f^\Psi_t(x_t)- f^\Psi_t(\xs).
\] 
This setting is studied
by~\citet{xiao09dualaveraging} and \citet{duchi10composite,duchi11adaptivejournal}, though in the less general setting where all $\alpha_t = 1$.
We can directly apply Theorem~\ref{thm:centered} or
Theorem~\ref{thm:proximal} to the $f^\Psi$ in this case, but this gives us
bounds that depend on terms like $\dnormt{g_t + \gp_t}^2$ where $\gp_t
\in \partial (\alpha_t \Psi)(x_t)$; this is fine for $\Psi = \indX$
since we can then always take $\gp_t = 0$ since $x_t \in \X$, but for
general $\Psi$ this bound may be harder to interpret.
Further, adding a fixed known penalty like $\Psi$ should intuitively make the
problem no harder, and we would like to demonstrate this in our
bounds.  

\paragraph{In the regularizer} We may wish to measure loss only against the functions $f_t(x) = g_t\cdot x$, that is,
\[
\Regret(\xs, g_t) \equiv \sum_{t=1}^T g_t\cdot x_t - g_t \cdot \xs,
\]
even though we include the terms $\alpha_t \Psi$ in the update of
\eqr{genhupdate}.  This approach is natural when we are only concerned
with regret on the learning problem, $f_t(x) = \ell_t(x)$, but wish to
add (for example) additional $L_1$ regularization in order to produce
sparse models, as in \citet{mcmahan13adclick}.

In this case we can apply Theorem~\ref{thm:centered} to $f_t(x)
\leftarrow g_t \cdot x$ and $r_t(x) \leftarrow r_t(x) + \alpha_t
\Psi(x)$, noting that if the original $r\ztt$ is strongly convex \wrt
$\normt{\cdot}$, then $r\ztt + \alpha\tt\Psi$ is as well, since
$\Psi$ is convex.  However, if $r_t$ is proximal, $r_t + \alpha_t \Psi$
generally will not be, and so a modified result is needed in
place of Theorem~\ref{thm:proximal}. The following theorem provides this as well as a bound on $\Regret(\xs, f^\Psi)$.

\begin{theorem}\label{thm:composite}\emph{\textbf{FTRL-Proximal Bounds for Composite Objectives}}
  Let $\Psi$ be a non-negative convex function minimized at $x_1$ with
  $\Psi(x_1) = 0$. Let $\alpha_t \ge 0$ be a non-increasing sequence
  of constants.
  Consider Setting~\ref{setting}, and define $h_t$ as in
  \eqr{genh}. Suppose the $r_t$ are chosen such that $h\ztt$ is
  1-strongly-convex \wrt some norm $\normt{\cdot}$, and further the
  $r_t$ are proximal, that is $x_t$ is a global minimizer of $r_t$.

  When we consider regret against $f^\Psi_t(x) = g_t \cdot x +
  \alpha_t \Psi(x)$, we have
  \begin{equation}\label{eq:inf}
     \Regret(\xs, f^\Psi) 
       \leq r\ztT(\xs) + \h \sum_{t=1}^T \dnormt{g_t}^2.
  \end{equation}

  When we consider regret against only the functions $f_t(x) = g_t
  \cdot x$, we have
  \begin{equation}\label{eq:noinf}
     \Regret(\xs, g_t) 
       \leq r\ztT(\xs) + \alpha_{1:T}\Psi(\xs) + \h \sum_{t=1}^T \dnormt{g_t}^2.
  \end{equation}
\end{theorem}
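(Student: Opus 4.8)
The plan is to establish \eqref{eq:inf} first via the Strong FTRL Lemma, and then to obtain \eqref{eq:noinf} as an essentially free corollary. For \eqref{eq:inf} I would treat $f^\Psi_t(x) = g_t \cdot x + \alpha_t \Psi(x)$ as the ``loss'' and $r_t$ as the ``regularizer'', so that $h\ztt$ matches the definition in \eqref{eq:genh} exactly. Lemma~\ref{lem:strong_ftrl} then gives
\[
\Regret(\xs, f^\Psi) \le r\ztT(\xs) + \sum_{t=1}^T \big( h\ztt(x_t) - h\ztt(x\ti) - r_t(x_t) \big),
\]
and the entire task reduces to showing the stability sum is at most $\h \sum_{t=1}^T \dnormt{g_t}^2$.

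For the per-round terms I would mimic the proof of Theorem~\ref{thm:proximal}, but route the argument through Corollary~\ref{cor:smoothchange2} in order to peel off the nonsmooth $\Psi$ contribution. Fix $t$ and set $\ha(x) = h\zttm(x) + r_t(x) = g\ttm \cdot x + \alpha\ttm \Psi(x) + r\ztt(x)$. Since $x_t$ minimizes $h\zttm$ and, by proximality, also minimizes $r_t$, it minimizes their sum, so $\ha$ legitimately plays the role of ``$\ha$'' (minimized at the ``$x_1$'' of the corollary, here $x_t$). Writing $\hb = h\ztt = \ha + \psi + \alpha_t\Psi$ with the linear piece $\psi(x) = g_t \cdot x$, the increment splits precisely into the linear-plus-$\Psi$ form the corollary expects; $\hb$ is $1$-strongly convex \wrt $\normt{\cdot}$ by hypothesis, and the subgradient of $\psi$ at $x_t$ is $b = g_t$. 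Corollary~\ref{cor:smoothchange2} (with $x' = x\ti$) then yields
\[
h\ztt(x_t) - h\ztt(x\ti) \le \h \dnormt{g_t}^2 + \alpha_t \Psi(x_t) - \alpha_t \Psi(x\ti),
\]
and subtracting $r_t(x_t) \ge 0$ leaves the desired $\h\dnormt{g_t}^2$ plus the residual $\alpha_t \Psi(x_t) - \alpha_t \Psi(x\ti)$.

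The step I expect to be the crux is disposing of these residual $\Psi$ terms, and this is exactly where the hypotheses that $\alpha_t$ is non-increasing and $\Psi(x_1) = 0$ earn their keep. Summing over $t$, the quantity $\sum_{t=1}^T \big(\alpha_t \Psi(x_t) - \alpha_t \Psi(x\ti)\big)$ is an Abel-type sum; reindexing the second half to collect the coefficient of each $\Psi(x_t)$, the coefficient of $\Psi(x_1)$ is $\alpha_1$ (harmless since $\Psi(x_1)=0$), the coefficient of $\Psi(x_t)$ for $2 \le t \le T$ is $\alpha_t - \alpha_{t-1} \le 0$, and the trailing term is $-\alpha_T \Psi(x_{T+1}) \le 0$. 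Because $\Psi \ge 0$, the whole sum is non-positive and may be dropped, giving $\Regret(\xs, f^\Psi) \le r\ztT(\xs) + \h \sum_{t=1}^T \dnormt{g_t}^2$, which is \eqref{eq:inf}.

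Finally, \eqref{eq:noinf} should follow with no further optimization. By definition $\Regret(\xs, f^\Psi) - \Regret(\xs, g_t) = \sum_{t=1}^T \alpha_t\big(\Psi(x_t) - \Psi(\xs)\big) = \sum_{t=1}^T \alpha_t \Psi(x_t) - \alpha\tT \Psi(\xs)$, so that $\Regret(\xs, g_t) \le \Regret(\xs, f^\Psi) + \alpha\tT \Psi(\xs)$ using $\alpha_t, \Psi(x_t) \ge 0$; combining with \eqref{eq:inf} yields \eqref{eq:noinf}. As a sanity check one can instead re-run Lemma~\ref{lem:strong_ftrl} with the enlarged regularizer $r_t + \alpha_t\Psi$, in which case the $\alpha_t\Psi(x_t)$ terms cancel pointwise and no summation-by-parts is needed, but I would prefer the short reduction above.
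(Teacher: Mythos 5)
Your proof is correct, and for the main bound \eqref{eq:inf} it is exactly the paper's argument: apply the Strong FTRL Lemma, bound each stability term via Corollary~\ref{cor:smoothchange2} with $\ha = h\zttm + r_t$ (using proximality so that $x_t$ minimizes $\ha$, and $b = g_t$ for the linear piece), and then dispose of the residual $\sum_{t=1}^T \alpha_t \Psi(x_t) - \alpha_t \Psi(x\ti)$ by the same summation-by-parts argument exploiting $\Psi \ge 0$, $\Psi(x_1)=0$, and the monotonicity of $\alpha_t$. The one place you genuinely deviate is \eqref{eq:noinf}: the paper obtains it by a second application of Lemma~\ref{lem:strong_ftrl} with the regrouping $f_t(x) \leftarrow g_t \cdot x$ and $r_t \leftarrow r_t + \alpha_t \Psi$, so that the $\alpha\tT\Psi(\xs)$ term enters through the lemma's regularization term, whereas you derive \eqref{eq:noinf} directly from \eqref{eq:inf} via the identity $\Regret(\xs, g_t) = \Regret(\xs, f^\Psi) - \sum_{t=1}^T \alpha_t\Psi(x_t) + \alpha\tT\Psi(\xs)$ and then drop the non-negative sum $\sum_{t=1}^T \alpha_t\Psi(x_t)$. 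Your reduction is valid and arguably cleaner, since it exhibits the second bound as a one-line corollary of the first rather than a parallel run of the machinery; the paper's regrouping has the virtue of keeping both bounds as instances of the same lemma (its modular theme), and the dropped slack differs slightly between the two routes ($\sum_t \alpha_t\Psi(x\ti)$ for the paper versus $\sum_t \alpha_t\Psi(x_t)$ plus the telescoped terms for you), but both yield the identical stated bound. The alternative you mention as a sanity check is precisely what the paper does.
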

\begin{proof}
  The proof closely follows the proof of Theorem~\ref{thm:proximal} in
  Section~\ref{sec:proofs}, with the key difference that we use
  Corollary~\ref{cor:smoothchange2} in place of
  Lemma~\ref{lem:smoothchange}. We will use
  Lemma~\ref{lem:strong_ftrl} to prove both claims. First, observe
  that the stability terms $h\ztt(x_t) - h\ztt(x\ti)$ depend only on
  $h$, and so we can bound them in the same way in both cases.

  Take $\ha(x) = h\zttm(x) + r_t(x)$.  Since the $r_t$ are proximal
  (so $x_t$ is a global minimizer of $r_t$) we have $x_t = \argmin_x
  \ha(x)$, and $x\ti = \argmin_x \hb(x)$ where $\hb(x) = \ha(x) +
  g_t\cdot x + \alpha_t \Psi(x) = h\ztt(x)$.  Then, using
  Corollary~\ref{cor:smoothchange2} lets us replace \eqr{ppr} with
  \begin{align*}
    h\ztt(x_t) - h\ztt(x\ti) - r_t(x_t)
    &\le \h \dnormt{g_t}^2 + \alpha_t \Psi(x_t) - \alpha_t \Psi(x\ti).
  \end{align*}

  To apply Lemma~\ref{lem:strong_ftrl} we sum over $t$. Considering
  only the $\Psi$ terms, we have
  \begin{equation*}%
    \sum_{t=1}^T \alpha_t \Psi(x_t) - \alpha_t \Psi(x\ti)
    =  \alpha_1 \Psi(x_1) - \alpha_{T}\Psi(x_{T+1})
    +  \sum_{t=2}^T \alpha_t \Psi(x_t) - \alpha_{t-1} \Psi(x_t)
    \le 0,
  \end{equation*}
  since $\Psi(x) \ge 0$, $\alpha_t \le \alpha_{t-1}$, and $\Psi(x_1) =
  0$.  Thus, 
  \[
  \sum_{t=1}^T h\ztt(x_t) - h\ztt(x\ti) - r_t(x_t) \leq \h \sum_{t=1}^T  \dnormt{g_t}^2.
  \]
  Using this with Lemma~\ref{lem:strong_ftrl} applied to $f_t(x)
  \leftarrow g_t \cdot x + \alpha_t \Psi(x)$ and $r_t \leftarrow r_t$
  proves \eqr{inf}. For \eqr{noinf}, we apply
  Lemma~\ref{lem:strong_ftrl} taking $f_t(x) \leftarrow g_t \cdot x$
  and $r_t(x) \leftarrow \alpha_t \Psi(x) + r_t(x)$.
\end{proof}
For FTRL-Centered algorithms, Theorem~\ref{thm:centered} immediately
gives a bound for $\Regret(\xs, g_t)$. For the $\Regret(\xs, f^\Psi)$
case, we can prove a bound matching Theorem~\ref{thm:centered} using
arguments analogous to the above.

\section{\MD, FTRL-Proximal, and Implicit Updates}
\label{sec:md}

Recall Section~\ref{sec:clrogd} showed the equivalence between
constant learning rate \OGD and a fixed-regularizer FTRL algorithm.
This equivalence is well-known in the case where $r_t(x) = 0$ for $t \ge
1$, that is, there is a fixed stabilizing regularizer $r_0$
independent of $t$, and further we take $\X = \R^n$ (e.g.,
\citet{rakhlin08notes,hazan10survey,shwartz12online}).  Observe that
in this case FTRL-Centered and FTRL-Proximal coincide.  In this
section, we show how this equivalence extends to adaptive regularizers
(equivalently, adaptive learning rates) and composite objectives. This
builds on the work of \citet{mcmahan10equiv}, but we make some crucial
improvements in order to obtain an exact equivalence result for all
\MD algorithms. 

\paragraph{Adaptive \MD}
Even in the non-adaptive case, \MD can be expressed as a variety of
different updates, some equivalent but some not;\footnote{In
  particular, it is common to see updates written in terms of $\grad
  \rc(\ng)$ for a strongly convex regularizer $r$, based on the fact
  that $\grad \rc(-\ng) = \argmin_x \ng \cdot x + r(x)$ (see Lemma
  \ref{lem:smooth} in Appendix~\ref{sec:convexproofs}).} in
particular, the inclusion of the feasible set constraint $\indX$ gives
rise to distinct ``lazy projection'' vs ``greedy projection''
algorithms --- this issue is discussed in detail in
Appendix~\ref{sec:non_adaptive_md}.
To define the adaptive \MD family of algorithms we first define the Bregman divergence
with respect to a convex differentiable function\footnote{Certain properties of Bregman divergences require $\phi$ to be strictly convex, but it provides a convenient notation to define $\B_\phi(u, v)$ for any differentiable convex $\phi$.} $\phi$:
\[
\B_\phi(u, v) = \phi(u) - \big( \phi(v) + \grad \phi(v)\cdot(u-v)\big).
\]
The Bregman divergence is the difference at $u$ between $\phi$ and
$\phi$'s first-order Taylor expansion taken at $v$.  For example, if we take $\phi(u) = \norm{u}^2$, then $\B_\phi(u, v) =
\norm{u - v}^2$.

An adaptive \MD algorithm is defined by a sequence of continuously
differentiable incremental regularizers $r_0, r_1, \dots$, chosen so
$r\ztt$ is strongly convex.  From this, we define the time-indexed
Bregman divergence $\B_{r\ztt}$; to simplify notation we define $\B_t
\equiv \B_{r\ztt}$, that is,
\[ \B_t(u, v) = r\ztt(u) - \big( r\ztt(v) + \grad r\ztt(v)\cdot(u-v)\big).
\]
The adaptive \MD update is then given by
\begin{align}
\hx_1 &= \argmin_x r_0(x) \notag \\
\hx\ti &= \argmin_x\ g_t \cdot x + \alpha_t \Psi(x) + \B_t(x, \hx_t).\label{eq:md}
\end{align}
We use $\hat{x}$ to distinguish this update from an FTRL update we
will introduce shortly. Building on the previous section, we allow the
update to include an additional regularization term $\alpha_t
\Psi(x)$. As before, typically $g_t \cdot x$ should be viewed as a
subgradient approximation to a loss function $\ell_t$; it will become
clear that a key question is to what extent $\Psi$ is also linearized.

\MD algorithms were introduced in~\citet{nemirovski83} for
the optimization of a fixed non-smooth convex function, and
generalized to Bregman divergences by \citet{beck03md}.  Bounds for
the online case appeared in \citet{warmuth98}; a general treatment in
the online case for composite objectives (with a non-adaptive learning
rate) is given by \citet{duchi10composite}.
Following this existing literature, we might term the
update of \eqr{md} Adaptive Composite-Objective Online Mirror Descent;
for simplicity we simply refer to \MD in this work.

\paragraph{Implicit updates}
For the moment, we neglect the $\Psi$ terms and consider convex
per-round losses $\ell_t$.  While standard \OGD (or \MD)
linearizes the $\ell_t$ to arrive at the update $\hx\ti = \argmin_x\
g_t \cdot x_t + \B_t(x, \hx_t)$, we can define the alternative update
\begin{equation}\label{eq:implicit}
\hx\ti = \argmin_x\ \ell_t(x) + \B_t(x, \hx_t),
\end{equation}
where we avoid linearizing the loss $\ell_t$.  This is often referred to as
an implicit update, since for general convex $\ell_t$ it is no longer
possible to solve for $\hx\ti$ in closed form.  The implicit update
was introduced by~\citet{kivinen94exponentiated}, and has more
recently been studied by
\citet{kulis10implicit}.

Again considering the $\Psi$ terms, the \MD update of
\eqr{md} can be viewed as a partial implicit update: if the real loss
per round is $\ell_t(x) + \alpha_t \Psi(x)$, we linearize the
$\ell_t(x)$ term but not the $\Psi(x)$ term, taking $f_t(x) = g_t\cdot
x + \alpha_t \Psi(x)$.  Generally this is done for computational
reasons, as for common choices of $\Psi$ such as $\Psi(x) =
\norm{x}_1$ or $\Psi(x) = \indX(x)$, the update can still be solved in
closed form (or at least in a computationally efficient manner, e.g.,
by projection). However, while $\alpha_t \Psi$ is handled without
linearization, we shall see that echoes of the past $\alpha_{1:t-1}
\Psi$ are encoded in a linearized fashion in the current state
$\hx_t$.

\paragraph{On terminology}
In the unprojected and non-adaptive case, the \MD update $\hx\ti =
\argmin_x g_t\cdot x + \B_r(x, \hx_t)$ is equivalent to the FTRL
update $x\ti = \argmin_x g\tt\cdot x + r(x)$ (see
Appendix~\ref{sec:non_adaptive_md}).  In fact,
\citet[Sec. 2.6]{shwartz12online} refers to this update (with
linearized losses) explicitly as \MD.

In our view, the key property that distinguishes \MD from FTRL is that
for \MD, the state of the algorithm is exactly $\hx_t \in \R^n$, the
current feasible point.  For FTRL on the other hand, the state is a
different vector in $\R^n$, for example $g_{1:t}$ for \DA. The
indirectness of the FTRL representation makes it more flexible, since
for example multiple values of $g_{1:t}$ can all map to the same
coefficient value $x_t$.

\begin{figure}[t]
\begin{mdframed}
\vspace{-0.4in}
\begin{align*}
\intertext{\MD \vspace{-0.1in}}
\hx\ti &= \argmin_x g_t \cdot x + \alpha_t \Psi(x) + \B_{r\ztt}(x, \hx_t) 
\intertext{\MD as FTRL-Proximal \vspace{-0.1in}}
\hx\ti 
&= \argmin_x  g\tt \cdot x + \gp\ttm \cdot x + \alpha_t\Psi(x)   + r_0(x) + \sum_{s=1}^t \B_{r_s}(x, x_s)\\
&= \argmin_x  g\tt \cdot x + \gp\tt \cdot x   + r_0(x) + \sum_{s=1}^t \B_{r_s}(x, x_s)\\
& \text{where $\gp_s$ is a suitable subgradient from $\partial (\alpha_s \Psi)(x_{s+1})$}  
\end{align*}
\end{mdframed}
\caption{\MD as normally presented, and expressed as an equivalent FTRL-Proximal update.}\label{fig:mdisftrl}
\end{figure}

\subsection{\MD is an FTRL-Proximal Algorithm}
We will show that the \MD update of \eqr{md} can be expressed as the
FTRL-Proximal update given in Figure~\ref{fig:mdisftrl}.
In particular, consider a \MD algorithm defined by the choice of $r_t$
for $t \ge 0$.  Then, we define the FTRL-Proximal update
\begin{equation}\label{eq:ftrlmd}
x\ti = \argmin_x  g\tt \cdot x + \gp\ttm \cdot x + \alpha_t\Psi(x)  + 
\rp\ztt(x)
\end{equation}
for an appropriate choice $\gp_t \in \partial (\alpha_t \Psi)(x\ti)$ (given below),
where $\rp_t$ is
an incremental proximal regularizer defined in terms of $r_t$, namely
\begin{align*}
\rp_0(x) &\equiv r_0(x) \notag \\
\rp_t(x) &\equiv \B_{r_t}(x, x_t) = r_t(x) - \big(r_t(x_t) + \grad r_t(x_t) \cdot (x - x_t)\big) && \text{for $t \ge 1$}. \label{eq:equivproxreg}
\end{align*}
Note that $\rp_t$ is indeed minimized by $x_t$ and $\rp_t(x_t) = 0$. 
We require  $\gp_t \in \partial (\alpha_t \Psi)(x\ti)$ such that
\begin{equation}\label{eq:subgradprop}
g\tt + \gp\tt + \grad \rp\ztt(x_{t+1}) = 0.
\end{equation}
The dependence of $\gp_t$ on $x\ti$ is not problematic, as $\gp_t$ is not necessary to compute $x\ti$ using \eqr{ftrlmd}. To see (inductively) that we can always find a a $\gp_t$ satisfying \eqr{subgradprop},
note the subdifferential of the objective of \eqr{ftrlmd} at $x$ is
\begin{equation}\label{eq:ftrlmdsg}
  g\tt + \gp\ttm + \partial(\alpha_t \Psi)(x)
  + \grad \rp\ztt(x).
\end{equation}
Since $x\ti$ is a minimizer, we know $0$ is a subgradient,
which implies there must be a subgradient $\gp_t \in \partial
(\alpha_t \Psi)(x\ti)$ that satisfies \eqr{subgradprop}.  The fact we
use a subgradient of $\Psi$ at $x_{t+1}$ rather than $x_t$ is a
consequence of the fact we are replicating the behavior of a (partial)
implicit update algorithm.  

Finally, note the update 
\begin{equation}\label{eq:hftrlmd}
  x\ti = \argmin_x  g\tt \cdot x + \gp\tt \cdot x + \rp\ztt(x)
\end{equation}
is equivalent to \eqr{ftrlmd}, since Equations \eqref{eq:subgradprop}
and \eqref{eq:ftrlmdsg} imply 0 is in the subgradient of the objective
\eqr{ftrlmd} at the $x\ti$ given by \eqr{hftrlmd}. This update is exactly an FTRL-Proximal update on the functions $f_t(x) = (g_t + \gp_t)\cdot x$.

With these definitions in place, we can now state and prove the main
result of this section, namely the equivalence of the two updates
given in Figure~\ref{fig:mdisftrl}:
\begin{theorem}\label{thm:mdequiv}
  The \MD update of \eqr{md} and the FTRL-Proximal update
  of \eqr{ftrlmd} select identical points. 
\end{theorem}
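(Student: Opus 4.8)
The plan is to prove $\hx_t = x_t$ for all $t$ by induction on $t$, matching the first-order optimality conditions of the two updates. Since $r\ztt$ is strongly convex, both the \MD objective in \eqr{md} and the FTRL objective in \eqr{ftrlmd} equal $r\ztt$ plus convex and affine terms, hence are strongly convex with unique minimizers; so it suffices to show that the \MD point $\hx\ti$ satisfies the FTRL stationarity condition, after which uniqueness forces $x\ti = \hx\ti$. The base case is immediate: $\hx_1 = \argmin_x r_0(x) = \argmin_x \rp_0(x) = x_1$.

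For the inductive step, assume $\hx_s = x_s$ for all $s \le t$. First I would record two gradient identities. Differentiating $\B_t(x,\hx_t) = r\ztt(x) - r\ztt(\hx_t) - \grad r\ztt(\hx_t)\cdot(x-\hx_t)$ in its first argument gives $\grad_x \B_t(x,\hx_t) = \grad r\ztt(x) - \grad r\ztt(\hx_t)$, so the \MD optimality condition reads: there is $p \in \partial(\alpha_t\Psi)(\hx\ti)$ with
\[
g_t + p + \grad r\ztt(\hx\ti) - \grad r\ztt(\hx_t) = 0.
\]
Second, writing $\rp\ztt = r_0 + \sum_{s=1}^t \B_{r_s}(\cdot,x_s)$ and differentiating termwise, each $\B_{r_s}$ contributes $\grad r_s(x) - \grad r_s(x_s)$, so $\grad\rp\ztt(x) = \grad r\ztt(x) - C_t$, where I set $C_t \equiv \sum_{s=1}^t \grad r_s(x_s)$.

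Now I would test $\hx\ti$ (with the same subgradient $p$) in the FTRL stationarity condition obtained by setting the subdifferential \eqr{ftrlmdsg} to zero, namely $g\tt + \gp\ttm + p + \grad\rp\ztt(\hx\ti) = 0$. Substituting $\grad\rp\ztt(\hx\ti) = \grad r\ztt(\hx\ti) - C_t$, then eliminating $\grad r\ztt(\hx\ti)$ via the \MD condition (which gives $\grad r\ztt(\hx\ti) = \grad r\ztt(\hx_t) - g_t - p$), and using the inductive hypothesis $\hx_t = x_t$, collapses the left-hand side to
\[
g\ttm + \gp\ttm + \grad r\ztt(x_t) - C_t.
\]
It remains to show this vanishes, and this is exactly where the previous round's defining relation \eqr{subgradprop} (at index $t-1$) does the work: it asserts $g\ttm + \gp\ttm + \grad\rp\zttm(x_t) = 0$, i.e. $g\ttm + \gp\ttm = C_{t-1} - \grad r\zttm(x_t)$; substituting and telescoping via $\grad r\ztt(x_t) - \grad r\zttm(x_t) = \grad r_t(x_t) = C_t - C_{t-1}$ yields precisely $0$. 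The $t=1$ case is the same computation once \eqr{subgradprop} is read at index $0$ as the stationarity $\grad r_0(x_1)=0$ of $x_1 = \argmin_x r_0$. Thus $\hx\ti$ solves the FTRL problem, so $x\ti = \hx\ti$.

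I expect the main obstacle to be the bookkeeping of two distinct ``centering'' vectors: the \MD update recenters its Bregman term at the single gradient $\grad r\ztt(x_t)$, whereas the FTRL-Proximal regularizer recenters each increment separately, producing $C_t = \sum_{s} \grad r_s(x_s)$. The crux is recognizing that the mismatch between $C_t$ and $\grad r\ztt(x_t)$ is absorbed exactly by the accumulated linearization $g\ttm + \gp\ttm$, and that \eqr{subgradprop} from the prior round is the identity certifying this. Consequently the induction must carry and exploit that prior stationarity relation, not merely the equality $\hx_t = x_t$ — this is what makes the subgradient-at-$x_{t+1}$ choice of $\gp_t$ essential rather than incidental.
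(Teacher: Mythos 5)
Your proof is correct and takes essentially the same route as the paper's: induction on $\hx_t = x_t$, combining the \MD first-order optimality condition at $\hx\ti$ with the previous round's FTRL stationarity relation \eqr{subgradprop}, and concluding via strong convexity that $\hx\ti$ is the unique minimizer of \eqr{ftrlmd}. The only cosmetic difference is direction --- the paper rewrites the \MD condition step-by-step until it becomes the FTRL condition, whereas you substitute into the FTRL condition and telescope to zero, with your $C_t$ playing exactly the role of the paper's $\gr\ztt$.
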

\newcommand{\gph}{\hat{g}^{(\Psi)}}
\begin{proof}
  The proof is by induction on the hypothesis that $\hx_t =
  x_t$.  This holds trivially for $t=1$, so we proceed by assuming it
  holds for $t$.

  First we consider the $x_t$ selected by the FTRL-Proximal
  algorithm of \eqr{ftrlmd}.  Since $x_t$ minimizes this objective,
  zero must be a subgradient at $x_t$.  Letting $\gr_s = \grad
  r_s(x_s)$ and noting $\grad \rp_t(x) = \grad r_t(x) - \grad
  r_t(x_t)$, we have $ g\ttm + \gp\ttm + \grad r\zttm(x_t) - \gr\zttm =
  0$ following \eqr{ftrlmdsg}.  Since $x_t = \hx_t$ by induction
  hypothesis, we can rearrange and conclude
  \begin{equation}\label{eq:sgxt}
    -\grad r\zttm(\hx_t) = g\ttm  + \gp\ttm - \gr\zttm .
  \end{equation}
  For \MD, the gradient of the objective in \eqr{md} must
  be zero for $\hx\ti$, and so there exists a $\gph_t \in \partial
  (\alpha_t \Psi)(\hx\ti)$ such that
  \begin{align*}
    0
    & =g_t + \gph_t + \grad r\ztt(\hx\ti) - \grad r\ztt(\hx_t)\\
    &= g_t + \gph_t + \grad r\ztt(\hx\ti) - \grad r\zttm(\hx_t) -\gr_t
         && \text{IH and $\grad r_t(x_t) = \gr_t$}\\
    &= g_t + \gph_t + \grad r\ztt(\hx\ti) + g\ttm + \gp\ttm - \gr\zttm -\gr_t
         && \text{Using \eqr{sgxt}}\\
    &= g\tt + \gp\ttm + \gph_t + \grad r\ztt(\hx\ti) - \gr\ztt \\
    &= g\tt + \gp\ttm + \gph_t + \grad \rp\ztt(\hx\ti).
  \end{align*}
  The last line implies zero is a subgradient of the objective of
  \eqr{ftrlmd} at $\hx\ti$, and so $\hx\ti$ is a minimizer.  Since
  $r\ztt$ is strongly convex, this solution is unique and so $\hx\ti =
  x\ti$.
\end{proof}

\subsection{Comparing \MD to the \NaturalProx Algorithm, and the Application to $L_1$ Regularization}\label{sec:l1}
Since we can write \MD as a particular FTRL
update, we can now do a careful comparison to the direct application
of Section~\ref{sec:composite} which gives the \NaturalProx algorithm.  These two
algorithms are given in Figure~\ref{fig:mdvsnatural}, expressed in a way that
facilitates comparison.

\begin{figure}
\begin{mdframed}
\vspace{-0.25in}
\[
\begin{array}{rr*{3}{l@{\qquad }}}
\multicolumn{5}{l}{\text{\MD}} \rule{0pt}{4ex} \\
\quad \qquad \qquad  \hx\ti &= \argmin_x & g\tt \cdot x &
    + \ \ \gp\ttm \cdot x + \alpha_t\Psi(x)  & + \rp\ztt(x)  \\
\multicolumn{5}{l}{\text{\NaturalProx}} \rule{0pt}{4ex}\\
      x\ti &= \argmin_x & g_{1:t} \cdot x &
     + \ \ \alpha_{1:t} \Psi(x)  &  + \rp\ztt(x)\\
  &  &\  (A)  &  \qquad (B)  &  \qquad (C)  \rule{0pt}{4ex} \\

\end{array}
\]
\end{mdframed}
\caption{\MD expressed as an FTRL-Proximal algorithm compared to the \NaturalProx algorithm.
}\label{fig:mdvsnatural}
\end{figure}

Both algorithms use a linear approximation to the loss functions
$\ell_t$, as seen in column (A) of Figure~\ref{fig:mdvsnatural}, and
the same proximal regularization terms $(C)$.  The key difference is
in how the non-smooth terms $\Psi$ are handled: \MD approximates the
past $\alpha_s \Psi(x)$ terms for $s < t$ using a subgradient
approximation $\gp_s \cdot x$, keeping only the current $\alpha_t \Psi(x)$
term explicitly. In \NaturalProx, on the other hand, we represent the full
weight of the $\Psi$ terms exactly as $\alpha_{1:t}\Psi(x)$. That is,
\MD is applying significantly more linearization than \NaturalProx.

Why does this matter? As we will see in Section~\ref{sec:MDanalysis},
there is no difference in the regret bounds, even though intuitively
avoiding unnecessary linearization should be preferable. However,
there can be a substantial practical differences for some choices of
$\Psi$. In particular, we focus on the common and practically
important case of $L_1$ regularization, where we take $\Psi(x) =
\norm{x}_1$. Such regularization terms are often used to produce
sparse solutions ($x_t$ where many $x_{t,i} = 0$). Models with few
non-zeros can be stored, transmitted, and evaluated much more cheaply
than the corresponding dense models.

As discussed in \citet{mcmahan10equiv}, it is precisely the explicit
representation of the full $\alpha\tt \norm{x}_1$ terms that lets
\Natural produce much sparser
solutions when compared with the composite-objective \MD update with
$L_1$ regularization (equivalent to the FOBOS algorithm of
\citet{duchi09fobos}).  This argument also applies to Regularized \DA
(RDA, a Native FTRL-Centered algorithm); \citet{xiao09dualaveraging} presents experiments showing
the advantages of RDA for producing sparse solutions. In the remainder
of this section, we explore the application to $L_1$ regularization in
more detail, in order to illustrate the effect of the additional
linearization of the $\norm{x}_1$ terms used by \MD as compared to the
\NaturalProx algorithm.

Another way to understand this distinction is the previously mentioned difference in how the two algorithms maintain state. \MD has exactly one way
to represent a zero coefficient in the $i$th coordinate, namely
$\hx_{t,i} = 0$.  The FTRL representation is significantly more
flexible, since many state values, say any $g_{1:t,i} \in [-\lambda,
\lambda]$, can all correspond to a zero coefficient.  This means that
FTRL can represent both ``we have lots of evidence that $x_{t,i}$
should be zero'' (as $g_{1:t,i} = 0$ for example), as well as ``we
think $x_{t,i}$ is zero right now, but the evidence is very weak'' (as
$g_{1:t,i} = \lambda$ for example).  This means there may be a memory
cost for training FTRL, as $g_{1:t,i} \neq 0$ still needs to be stored
when $x_{t,i}=0$, but the obtained models typically provide much better
sparsity-accuracy tradeoffs \citep{mcmahan10equiv,mcmahan13adclick}.

This distinction is critical even in the non-adaptive
  case, and so we consider the simplest possible setting: a fixed
regularizer $r_0(x) = \frac{1}{2\eta}\norm{x}^2_2$ (with $r_t(x) = 0$
for $t \ge 1)$, and $\alpha_t \Psi(x) = \lambda \norm{x}_1$ for all
$t$.  The updates of Figure~\ref{fig:mdvsnatural} then simplify to:\\
\begin{minipage}{\textwidth} %
\begin{align}
\shortintertext{\MD}
x\ti &= \argmin_x&  g\tt \cdot x\quad
    &+  \gp\ttm \cdot x + \lambda \norm{x}_1  &+ \frac{1}{2\eta}\norm{x}^2_2
     \label{eq:mdl1} \\
\shortintertext{\Natural}
     x\ti &= \argmin_x&  g_{1:t} \cdot x \quad
     &+ t \lambda \norm{x}_1 &+ \frac{1}{2\eta}\norm{x}^2_2.
     \label{eq:ftrll1} 
\end{align}
\end{minipage}
The key point is the \Natural algorithm uses a much
stronger explicit $L_1$ penalty, $\alpha_{1:t} = t \lambda$ instead of
just $\alpha_t = \lambda$.

\paragraph{The closed-form update}
We can write the update of \eqr{mdl1} as a standard \MD update (that is, as an optimization over $f_t$ and a regularizer centered at the current $x_t$):
\begin{align}
x\ti
&= \argmin_x g_t \cdot x + \lambda \norm{x}_1
           + \frac{1}{2 \eta}\norm{x - x_t}_2^2 \notag \\
&= \argmin_x \big(g_t - \frac{x_t}{\eta}\big) \cdot x + \lambda \norm{x}_1
           + \frac{1}{2 \eta}\norm{x}_2^2. \label{eq:gdl1}
\end{align}
The above update decomposes on a per-coordinate basis.  Subgradient calculations show that for constants $a > 0$, $b \in \R$,
and $\lambda \ge 0$, we have
\begin{equation}\label{eq:l1soln}
\argmin_{x \in \R}
b\cdot x + \lambda \norm{x}_1 + \frac{a}{2} \norm{x}^2
 = \begin{cases}
0 & \text{when $\abs{b} \le \lambda$} \\
- \frac{1}{a} (b - \text{sign}(b) \lambda) & \text{otherwise.}
\end{cases}
\end{equation}
Thus, we can simplify \eqr{gdl1} to
\[
x\ti =  \begin{cases}
0
& \text{when $\abs{g_t - \frac{x_t}{\eta}} \le \lambda$} \\
x_t - \eta (g_t  -\lambda)
& \text{when $g_t - \frac{x_t}{\eta} > \lambda$
\quad (implying $x\ti < 0$)
} \\
x_t - \eta (g_t  +\lambda)
& \text{otherwise \quad(i.e., $g_t - \frac{x_t}{\eta} < -\lambda$ and $x\ti > 0$).}
\end{cases}
\]
In fact, if we choose $\gp_t \in \partial \lambda \norm{x\ti}_1$ as
\[
\gp_t = 
\begin{cases}
-\lambda      &\text{when $x\ti < 0$}\\
\lambda        & \text{when $x\ti > 0$}\\
x_t/\eta - g_t & \text{when $x\ti = 0$}
\end{cases}
\mbox{,}
\]
then \eqr{subgradprop} is satisfied, and the update becomes
\[
  x\ti = x_t - \eta \big(g_t + \gp_t\big)
\]
in all cases, showing how the implicit update can be re-written in
terms of a subgradient update using an appropriate subgradient
approximation at the \emph{next} point.

\begin{figure}
\begin{center}
    \includegraphics[width=2.7in]{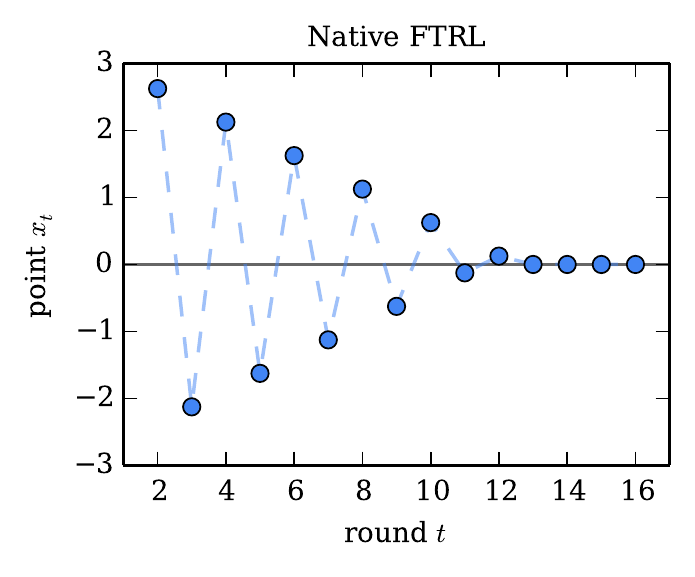}
    \includegraphics[width=2.7in]{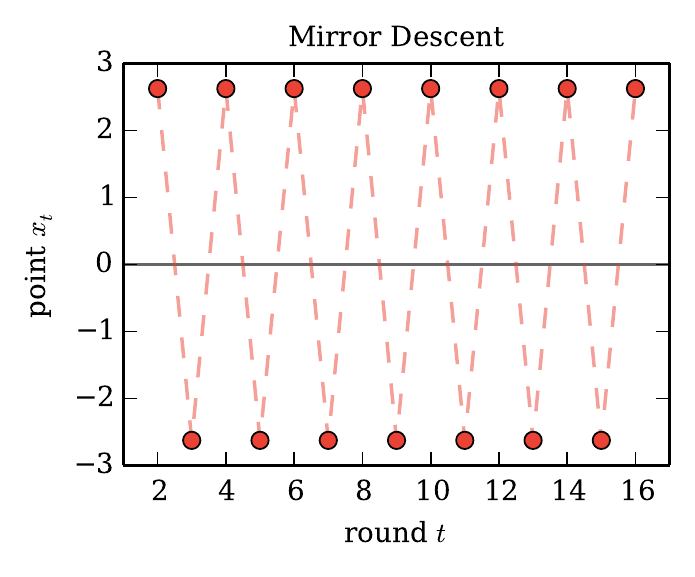}
\end{center}
\vspace{-0.3in}
\caption{The points selected by \Natural and \MD on the
  one-dimensional example, using $\alpha_t \Psi(x) =
  \h\norm{x}_1$. \Natural quickly converges to $x^* = 0$, but
  \MD oscillates indefinitely.}\label{fig:l1example}
\end{figure}

\paragraph{A One-Dimensional Example}
To illustrate the practical significance of the stronger explicit
$L_1$ penalty used by \Natural, we compare the updates of
\eqr{mdl1} and \eqr{ftrll1} on a simple one-dimensional example.  The
gradients $g_t$ satisfy $\norm{g_t}_2 \le G$, and we use a feasible
set of radius $R = 2G$. Both algorithms use the theory-recommended
fixed learning rate $\eta = \frac{R}{G\sqrt{T}}= \frac{2}{\sqrt{T}}$
(see Section~\ref{sec:applications}), against an adaptive adversary
that selects gradients $g_t$ as a function of $x_t$:
\begin{equation*}
g_t = \begin{cases}
-\h(G+\lambda) & \text{when $t = 1$}\\
-G & \text{when $t > 1$ and $x_t \le 0$}\\
G & \text{when $t > 1$ and $x_t > 0$\,.}
\end{cases} \label{eq:adversary}
\end{equation*}
Both algorithms select $x_1 = 0$, and since $g_1 = -\h(G+\lambda)$
both algorithms select $x_2 = (G-\lambda)/\sqrt{T}$. After this,
however, their behavior diverges:
\MD will indefinitely oscillate between $x_2$ and $-x_2$ for any
$\lambda < G$. On the other hand, FTRL learns that $x^* = 0$ is the
optimal solution after a constant number of rounds, selecting $x\ti
= 0$ for any $t > \frac{G}{2\lambda} + \h$. The details of this
example are worked out in Appendix~\ref{sec:l1example}

Figure~\ref{fig:l1example} plots the points selected by the algorithms
as a function of $t$, taking $G=11$, $T=16$, and $\lambda=0.5$.  This
example clearly demonstrates that, though \MD and \Natural
 have the same regret bounds, \Natural is much
more likely to produce sparse solutions and can also incur less actual
regret.

\subsection{Analysis of \MD as FTRL-Proximal}
\label{sec:MDanalysis}
Having established the equivalence between \MD and a particular
FTRL-Proximal update as given in Figure~\ref{fig:mdisftrl}, we now use the general analysis techniques for FTRL developed in this
work to prove regret bounds for any \MD algorithm.  This is
accomplished by applying the Strong FTRL lemma to the FTRL-Proximal
expression for \MD.

First, we observe that in the non-composite case (i.e., all $\alpha_t
=0$), then all $\gp_t = 0$, and we
can apply Theorem~\ref{thm:proximal} directly to \eqr{ftrlmd} for the loss functions $f_t(x) = g_t \cdot x$,  which gives
us
\begin{align*}
  \Regret(\xs, g_t)
    \leq \rp\ztT(\xs) + \h \sum_{t=1}^T \dnormt{g_t}^2
    &= \sum_{t=1}^T \B_{r_t}(\xs, x_t) + \h \sum_{t=1}^T \dnormt{g_t}^2.
\end{align*}
In the case of a composite-objective (nontrivial $\Psi$ terms,
including feasible set constraints such as $\indX$), we will arrive at
the same bound, but must refine our analysis somewhat to encompass the
partial implicit update of \eqr{ftrlmd}. This is accomplished in the
following theorem:

\begin{theorem}\label{thm:mdregret}
  We consider the \MD update of \eqr{md} under the same
  conditions as Theorem~\ref{thm:composite}.
  When we consider regret against $f^\Psi_t(x) = g_t \cdot x +
  \alpha_t \Psi(x)$, we have
  \begin{equation}\label{eq:infmd}
  \Regret(\xs, f^\Psi) \leq \rp\ztT(\xs) + \h \sum_{t=1}^T \dnormt{g_t}^2.
  \end{equation}
  When we consider regret against only the functions $f_t(x) = g_t
  \cdot x$, we have
  \begin{equation}\label{eq:noinfmd}
     \Regret(\xs, g_t) 
       \leq \rp\ztT(\xs) + \alpha_{1:T}\Psi(\xs) + \h \sum_{t=1}^T \dnormt{g_t}^2.
  \end{equation}
\end{theorem}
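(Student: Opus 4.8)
The plan is to leverage the equivalence of Theorem~\ref{thm:mdequiv}, which lets us treat the \MD iterates as those produced by the FTRL-Proximal update \eqr{hftrlmd}, i.e.\ an FTRL-Proximal algorithm run on the \emph{linear} losses $\tilde f_t(x) = (g_t + \gp_t)\cdot x$ together with the nonnegative proximal regularizers $\rp_t$. Since $\rp_t(x_t)=0$ and $x\ti$ minimizes $H\ztt(x) \equiv (g\tt+\gp\tt)\cdot x + \rp\ztt(x)$, the Strong FTRL Lemma (Lemma~\ref{lem:strong_ftrl}) applies directly and yields
\[
\sum_{t=1}^T (g_t+\gp_t)\cdot(x_t - \xs) \le \rp\ztT(\xs) + \sum_{t=1}^T \big(H\ztt(x_t) - H\ztt(x\ti)\big).
\]
Both claims \eqref{eq:infmd} and \eqref{eq:noinfmd} will follow from this single inequality, differing only in how the linear correction $\sum_t \gp_t\cdot(\cdot)$ is reassembled into a statement about $\Psi$.

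The crux, and the step I expect to be the main obstacle, is bounding the stability term by $\h\dnormt{g_t}^2$ rather than by the larger $\h\dnormt{g_t+\gp_t}^2$ that a naive application of Lemma~\ref{lem:smoothchange} to the fully-linearized objective would give. The trick I would use is to reintroduce $\alpha_t\Psi$ explicitly: take $\ha(x) = H\zttm(x) + \rp_t(x)$, which is minimized at $x_t$ because $x_t = \argmin_x H\zttm$ (the round $t-1$ instance of \eqr{hftrlmd}) and $\rp_t$ is proximal; then $\hb(x) = \ha(x) + g_t\cdot x + \alpha_t\Psi(x)$ coincides with the explicit \MD objective \eqr{ftrlmd} and is minimized at $x\ti$. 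Applying Corollary~\ref{cor:smoothchange2} with linear part $\psi(x)=g_t\cdot x$ (so $b=g_t$) and convex part $\alpha_t\Psi$ gives $\hb(x_t)-\hb(x\ti)\le \h\dnormt{g_t}^2 + \alpha_t\Psi(x_t)-\alpha_t\Psi(x\ti)$. Since $H\ztt$ and $\hb$ differ only by $\gp_t\cdot x - \alpha_t\Psi(x)$, substituting this difference makes the $\alpha_t\Psi(x_t)$ and $\alpha_t\Psi(x\ti)$ terms cancel exactly, leaving the clean bound $H\ztt(x_t)-H\ztt(x\ti) \le \h\dnormt{g_t}^2 + \gp_t\cdot(x_t - x\ti)$.

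Finally I would convert to the two regret statements. Writing $g_t\cdot(x_t-\xs) = (g_t+\gp_t)\cdot(x_t-\xs) - \gp_t\cdot(x_t-\xs)$ and combining the leftover stability term $\gp_t\cdot(x_t-x\ti)$, the two $\gp_t$ contributions merge into $\gp_t\cdot(\xs - x\ti)$, giving
\[
\Regret(\xs,g_t) \le \rp\ztT(\xs) + \h\sum_{t=1}^T \dnormt{g_t}^2 + \sum_{t=1}^T \gp_t\cdot(\xs - x\ti).
\]
Because $\gp_t \in \partial(\alpha_t\Psi)(x\ti)$ (by \eqr{subgradprop}), the subgradient inequality gives $\gp_t\cdot(\xs - x\ti) \le \alpha_t\Psi(\xs) - \alpha_t\Psi(x\ti) \le \alpha_t\Psi(\xs)$; summing proves \eqref{eq:noinfmd}. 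For \eqref{eq:infmd} I add $\sum_t \alpha_t(\Psi(x_t)-\Psi(\xs))$ to pass to $\Regret(\xs,f^\Psi)$, use the sharper bound $\gp_t\cdot(\xs-x\ti)\le \alpha_t\Psi(\xs)-\alpha_t\Psi(x\ti)$, and observe that the surviving terms $\sum_t \alpha_t\Psi(x_t)-\alpha_t\Psi(x\ti)$ telescope to at most $0$ exactly as in the proof of Theorem~\ref{thm:composite}, using $\Psi(x_1)=0$, $\Psi\ge 0$, and the non-increasing $\alpha_t$. This leaves the $\Psi$-free regularization term $\rp\ztT(\xs)$ and completes the proof.
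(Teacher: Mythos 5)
Your proof is correct and follows essentially the same route as the paper's: Theorem~\ref{thm:mdequiv} to pass to the FTRL-Proximal representation, the Strong FTRL Lemma (Lemma~\ref{lem:strong_ftrl}), Corollary~\ref{cor:smoothchange2} applied with linear part $g_t \cdot x$ and convex part $\alpha_t\Psi$ so that the stability bound involves $\dnormt{g_t}^2$ rather than $\dnormt{g_t + \gp_t}^2$, and the telescoping of $\sum_{t=1}^T \alpha_t\Psi(x_t) - \alpha_t\Psi(x\ti)$ under non-increasing $\alpha_t$ and $\Psi(x_1)=0$. The only difference is bookkeeping: the paper wraps the linear correction into the minorant $\bPsi_t(x) = \alpha_t\Psi(x\ti) + \gp_t\cdot(x - x\ti)$ and invokes the Strong FTRL Lemma with a different grouping for each of the two claims, whereas you apply it once to the losses $(g_t+\gp_t)\cdot x$ and recover both claims via the subgradient inequality $\gp_t\cdot(\xs - x\ti) \le \alpha_t\Psi(\xs) - \alpha_t\Psi(x\ti)$ --- the same fact the paper uses in the form $\bPsi_t \le \alpha_t \Psi$.
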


The bound of \eqr{infmd} matches
\citet[Prop. 3]{duchi11adaptivejournal},\footnote{ Mapping our
  notation to their notation, we have $f_t(x) = \ell_t(x) + \alpha_t
  \Psi(x) \Rightarrow \phi_t(x) = f_t(x) + \varphi(x)$ and $r_{1:t}(x)
  \Rightarrow \frac{1}{\eta}\psi_t(x)$.  Dividing their Update (4) by
  $\eta$ and using our notation, we arrive at exactly the update of
  \eqr{md}.
  We can take $\eta=1$ in their bound \WLOG.  Then, using the fact that
  $\psi_t$ in their notation is $r_{1:t}$ in our notation, we have
\begin{align*}
\B_{\psi\ti}(x^*, x\ti)  - \B_{\psi_t}(x^*, x\ti) &=
\psi\ti(x^*) - (\psi\ti(x\ti) + \grad\psi\ti(x\ti) \cdot (x - x\ti)) \\
& \qquad-\big( \psi_t(x^*) - (\psi_t(x\ti) + \grad\psi_t(x\ti) \cdot (x - x\ti))
\big)\\
&=r\ti(x^*) - \big( r\ti(x\ti) + \grad r\ti(x\ti) \cdot(x - x\ti)\big) \\
&=\B_{r\ti}(x^*, x\ti).
\end{align*}}
and also encompasses Theorem~2 of \citet{duchi10composite}.\footnote{We can take
  their $\alpha=1$ and $\eta=1$ \WLOG, and also assume our $\Psi(x_1) =
  0$.  Their $r$ is our $\Psi$, and the implicitly take our $\alpha_t
  = 1;$ their $\psi$ is our $r_0$ (with our $r_1, \dots, r_T$ all
  uniformly zero).  Thus, their bound amounts (in our notation) to:
  $\Regret \le \B_{r_0}(x^*, x_1) + \h\sum_{t=1}^T \dnorm{g_t}^2,$
  matching exactly the bound of our Theorem~\ref{thm:mdregret} (noting
  $\rp\ztt(x^*) = \B_{r_0}(x^*, x_1)$ in this case).}

\newcommand{\hh}{h}
\begin{proof}%
  First, by Theorem~\ref{thm:mdequiv}, this algorithm can equivalently
  be expressed as in \eqr{hftrlmd}. To simplify bookkeeping, we define
  \[
  \hf_t(x) = g_t \cdot x + \bPsi_t(x) \qqwhere
  \bPsi_t(x) = \alpha_t \Psi(x\ti) + \gp_t \cdot (x - x\ti),
  \]
  Then, the update
  \begin{equation}\label{eq:barfmd}
  x\ti = \argmin_x  \hf\tt(x) + \rp\ztt(x)
  \end{equation}
  is equivalent to \eqr{hftrlmd}, since the objectives differ only in
  constant terms. Note 
  \begin{equation}\label{eq:bpsifact}
    \bPsi_t(x\ti) = \alpha_t \Psi(x\ti) \qqand \forall x,\  \alpha_t \Psi(x) \ge \bPsi_t(x),
  \end{equation} where the second claim uses the convexity of $\alpha_t \Psi$.

  Observe that \eqr{barfmd} defines an FTRL-Proximal algorithm --- we can
  imagine the $\hf_t$ are computed by a black-box given $f_t$ which
  solves the optimization problem of \eqr{ftrlmd} in order to compute
  $\gp_t$. Thus, we can apply the Strong FTRL Lemma
  (Lemma~\ref{lem:strong_ftrl}). Again, the key is bounding the
  stability terms. Using $\hh_t(x) = \hf_t(x) + \rp_t(x)$, we have
\[
 \sum_{t=1}^T \hh_{1:t}(x_t) - \hh_{1:t}(x\ti) - r_t(x_t) 
\leq \sum_{t=1}^T \h \dnormt{g_t}^2 + \bPsi_t(x_t) - \bPsi_t(x\ti),
\]
using Corollary~\ref{cor:smoothchange2} as in
Theorem~\ref{thm:composite}.

We first consider regret against the functions $f^\Psi_t(x) = g_t
\cdot x + \alpha_t \Psi(x)$.  We can apply Lemma~\ref{lem:strong_ftrl}
to the functions $\hf_t$, yielding
\[ 
\Regret(\xs, \hf_t) \le  \rp\ztT(\xs) + \sum_{t=1}^T \h \dnormt{g_t}^2 +
    \bPsi_t(x_t) - \bPsi_t(x\ti).
\] 
However, this does not immediately yield a bound on regret against the $f^\Psi_t$.  While $\hf_t(\xs) \le f^\Psi_t(\xs)$, our actual loss $f^\Psi_t(x_t)$ could be larger
than $\hf_t(x_t)$.  Thus, in order to bound regret against $f^\Psi_t$, we must add terms
$f^\Psi_t(x_t) - \hf_t(x_t) = \alpha_t \Psi(x_t) - \bPsi_t(x_t)$.  This gives
\begin{align*}
\Regret(\xs, f^\Psi_t)
&\le \Regret(\xs, \hf_t) + \sum_{t=1}^T \alpha_t \Psi(x_t) - \bPsi_t(x_t)\\
&\leq \rp\ztT(\xs) + \sum_{t=1}^T \h \dnormt{g_t}^2
+ \bPsi_t(x_t) - \bPsi_t(x\ti) + \alpha_t \Psi(x_t) - \bPsi_t(x_t)\\
&= \rp\ztT(\xs) + \sum_{t=1}^T \h \dnormt{g_t}^2
 + \alpha_t \Psi(x_t)  - \alpha_t \Psi(x\ti),
\end{align*}
where the equality uses $\bPsi_t(x\ti) = \alpha_t \Psi(x\ti)$.
Recalling $\sum_{t=1}^T \alpha_t \Psi(x_t) - \alpha_t \Psi(x\ti) \le
0$ from the proof of Theorem~\ref{thm:composite} completes the proof of \eqr{infmd}.

For \eqr{noinfmd}, applying Lemma~\ref{lem:strong_ftrl}
with $r_t \leftarrow \bPsi_t + \rp_t$ and $f_t(x) \leftarrow g_t \cdot x$ yields 
\[ \Regret(\xs, g_t)
\leq \rp\ztT(\xs) + \bPsi\tt(\xs) + \sum_{t=1}^T \h \dnormt{g_t}^2 + \bPsi_t(x_t) - \bPsi_t(x\ti).
\]
\eqr{bpsifact} implies
$
 \bPsi_t(x_t) - \bPsi_t(x\ti) \le \alpha_t \Psi(x_t) - \alpha_t\Psi(x\ti),
$
and so the sum of these terms again vanishes. Finally, observing  $\bPsi\tt(\xs) \le \alpha\tt\Psi(\xs)$ completes the proof.
\end{proof}

\section{Conclusions}
\label{sec:conclusions}
Using a general and modular analysis, we have presented a unified view
of a wide family of algorithms for online convex optimization that
includes \DA, \MD, FTRL, and FTRL-Proximal, recovering and sometimes
improving regret bounds from many earlier works.  Our emphasis has
been on the case of adaptive regularizers, but the results recover
those for a fixed learning rate or regularizer as well.

\clearpage
\bibliography{../new,../my_pubs}

\clearpage
\appendix

\section{The Standard FTRL Lemma}
\label{sec:stdftrl}

The following lemma is a well-known tool for the analysis of FTRL
algorithms (see \cite{kalai03ftpl,hazan08extract}, \citet[Lemma
1]{hazan10survey}, and \citet[Lemma 2.3]{shwartz12online}):
\begin{lemma}[Standard FTRL Lemma]\label{lem:weak_ftrl}
  Let $f_t$ be a sequence of arbitrary (possibly non-convex) loss
  functions, and let $r_t$ be arbitrary non-negative regularization
  functions, such that $x_{t+1} = \argmin_x h\ztt(x)$ is well defined
  (recall $h\ztt(x) = f\tt(x) + r\ztt(x)$).  Then, the algorithm that
  selects these $x_t$ achieves
  \[
  \Regret(\xs) \leq r\ztT(\xs) + \sum_{t=1}^T f_{t}(x_t) - f_t(x\ti).
  \]
\end{lemma}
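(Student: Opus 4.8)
The plan is to peel the claimed bound down to a single ``be-the-leader'' inequality and then prove that inequality by a short induction on $T$, using only that each $x\ti$ minimizes $h\ztt$ and that the $r_t$ are non-negative; no convexity of the $f_t$ or $r_t$ is required, matching the generality of the statement. First I would cancel the $\sum_{t=1}^T f_t(x_t)$ that appears on both sides. Writing $\Regret(\xs) = \sum_{t=1}^T f_t(x_t) - \sum_{t=1}^T f_t(\xs)$, the target inequality $\Regret(\xs) \le r\ztT(\xs) + \sum_{t=1}^T\big(f_t(x_t) - f_t(x\ti)\big)$ is equivalent, after this cancellation and rearranging, to
\[
\sum_{t=1}^T f_t(x\ti) \;\le\; f\tT(\xs) + r\ztT(\xs) \;=\; h\ztT(\xs).
\]
So the whole problem reduces to bounding $\sum_{t=1}^T f_t(x\ti)$ by $h\ztT(\xs)$.

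The heart of the argument is the be-the-leader inequality
\[
\sum_{t=0}^T h_t(x\ti) \;\le\; h\ztT(x_{T+1}),
\]
which I would establish by induction on $T$. The base case $T=0$ is an equality, since both sides equal $h_0(x_1)$ and $x_1 = \argmin_x h_0(x)$. For the inductive step I would add $h_T(x_{T+1})$ to both sides of the hypothesis $\sum_{t=0}^{T-1} h_t(x\ti) \le h\ztTm(x_T)$ and then use that $x_T$ minimizes $h\ztTm$, so that $h\ztTm(x_T) \le h\ztTm(x_{T+1})$; combining the two and recombining $h\ztTm(x_{T+1}) + h_T(x_{T+1}) = h\ztT(x_{T+1})$ advances the induction.

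To finish, I would chain the be-the-leader bound with the defining minimality of $x_{T+1}$: since $x_{T+1} = \argmin_x h\ztT(x)$, we have $h\ztT(x_{T+1}) \le h\ztT(\xs)$, hence $\sum_{t=0}^T h_t(x\ti) \le h\ztT(\xs)$. Finally I would discard non-negative terms from the left side: using $h_0(x_1) = r_0(x_1) \ge 0$ and $h_t(x\ti) = f_t(x\ti) + r_t(x\ti)$ with $r_t(x\ti) \ge 0$ for $t \ge 1$, I get $\sum_{t=1}^T f_t(x\ti) \le \sum_{t=0}^T h_t(x\ti) \le h\ztT(\xs)$, which is exactly the reduced inequality.

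The proof is largely bookkeeping, and the one step doing real work is the inductive conversion of the hypothesis evaluated at $x_T$ into the same quantity evaluated at $x_{T+1}$ via the minimizing property of $x_T$; this is what enables the telescoping. I would prove the lemma directly in this way rather than try to deduce it from the Strong FTRL Lemma of Section~\ref{sec:inductive}: although that lemma yields a tighter final bound, its per-round stability terms do not dominate $f_t(x_t) - f_t(x\ti)$ term by term, so a direct argument is cleaner here.
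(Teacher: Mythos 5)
Your proof is correct, and it is essentially the argument the paper itself points to for this lemma: the Be-The-Leader induction applied to the regularized functions $h_t$ (showing that the peek-ahead algorithm has non-positive regret against the $h_t$, i.e.\ $\sum_{t=0}^T h_t(x\ti) \le h\ztT(\xs)$), followed by discarding the non-negative regularizer terms. The paper only sketches this and defers to \citet[Lemma 3]{mcmahan10boundopt}; your write-up supplies exactly those details, and every step (the cancellation, the base case, the inductive step via minimality of $x_T$, and the final chaining) checks out.

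One correction to your closing remark, though it does not affect the validity of your proof: the Standard FTRL Lemma \emph{can} be deduced from the Strong FTRL Lemma, and the domination you deny does hold term by term. Expanding the per-round stability term of \eqr{strongftrl},
\begin{align*}
h\ztt(x_t) - h\ztt(x\ti) - r_t(x_t)
&= \big(h\zttm(x_t) - h\zttm(x\ti)\big) + f_t(x_t) - f_t(x\ti) - r_t(x\ti)\\
&\le f_t(x_t) - f_t(x\ti),
\end{align*}
where the inequality uses $h\zttm(x_t) \le h\zttm(x\ti)$ (since $x_t$ minimizes $h\zttm$) and $r_t(x\ti) \ge 0$. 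Summing over $t$ and plugging into the Strong FTRL Lemma recovers the Standard Lemma immediately. This is consistent with how the paper frames the relationship: the strong lemma's stability terms are pointwise no larger than $f_t(x_t) - f_t(x\ti)$, which is precisely why it yields the tighter bounds (e.g., Theorem~\ref{thm:proximal} improving on Theorem~\ref{thm:asc} by a factor of $\h$ in the gradient terms). So a direct proof is a fine choice, but not because the deduction fails.
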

The proof of this lemma (e.g., \citet[Lemma 3]{mcmahan10boundopt})
relies on showing that if one could run the Be-The-Leader algorithm by
selecting $x_t = \argmin_x f_{1:t}(x)$ (which requires peaking ahead at
$f_t$ to choose $x_t$), then the algorithm's regret is bounded above by
zero.  

However, as we see by comparing Theorem~\ref{thm:proximal} and
\ref{thm:asc} (stated below), this analysis loses a factor of $1/2$ on
one of the terms.  The key is that being the leader is actually
\emph{strictly better} than always using the post-hoc optimal point, a fact
that is not captured by the Standard FTRL Lemma. To prove the Strong
FTRL Lemma, rather than first analyzing the Be-The-Leader algorithm
and showing it has no regret, the key is
to directly analyze the FTL algorithm (using a similar inductive
argument).  The proofs are also similar in that in both the basic
bound is proved first for regret against the functions $h_t$
(equivalently, the regret for FTL without regularization), and this
bound is then applied to the regularized functions and re-arranged to
bound regret against the $f_t$.

Using Lemma~\ref{lem:weak_ftrl}, we can prove the following weaker
version of Theorem~\ref{thm:proximal}:
\begin{theorem}\label{thm:asc} \emph{\textbf{Weak FTRL-Proximal Bound}}
  Consider Setting~\ref{setting}, and further suppose the $r_t$ are
  chosen such that $h\ztt = r\ztt + f\tt$ is 1-strongly-convex
  \wrt some norm $\normt{\cdot}$, and further
  the $r_t$ are proximal, that is $x_t$ is a global minimizer of
  $r_t$. Then, choosing any $g_t \in \partial f_t(x_t)$ on each round,
  for any $\xs \in \R^n$,
  \[
     \Regret(\xs) \leq  r\ztT(\xs) + \sum_{t=1}^T \dnormt{g_t}^2.
  \]
\end{theorem}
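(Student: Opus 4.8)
The plan is to mirror the proof of Theorem~\ref{thm:proximal} step for step, but to feed the per-round work through the Standard FTRL Lemma (Lemma~\ref{lem:weak_ftrl}) rather than the Strong FTRL Lemma; this substitution is exactly what costs us the factor of $\h$. First I would apply Lemma~\ref{lem:weak_ftrl} to reduce the regret to a sum of per-round terms,
\[
\Regret(\xs) \leq r\ztT(\xs) + \sum_{t=1}^T f_t(x_t) - f_t(x\ti),
\]
so that everything reduces to bounding each $f_t(x_t) - f_t(x\ti)$ by $\dnormt{g_t}^2$.

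Fixing a round $t$, I would peel off a single term using convexity of $f_t$ together with the generalized H\"older inequality. Since $g_t \in \partial f_t(x_t)$,
\[
f_t(x_t) - f_t(x\ti) \le g_t \cdot (x_t - x\ti) \le \dnormt{g_t}\,\normt{x_t - x\ti},
\]
and it then remains to control the displacement $\normt{x_t - x\ti}$. Here I would reuse precisely the setup from the proof of Theorem~\ref{thm:proximal}: take $\ha(x) = h\zttm(x) + r_t(x)$, which is minimized at $x_t$ because $x_t$ minimizes both $h\zttm$ (by definition of the update) and $r_t$ (by the proximal assumption), and take $\hb(x) = \ha(x) + f_t(x) = h\ztt(x)$, which by hypothesis is $1$-strongly-convex \wrt $\normt{\cdot}$ and is minimized at $x\ti$. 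Applying the first conclusion of Lemma~\ref{lem:smoothchange} with $\psi = f_t$ and $b = g_t$ then gives $\normt{x_t - x\ti} \le \dnormt{g_t}$. Combining the two displays yields $f_t(x_t) - f_t(x\ti) \le \dnormt{g_t}^2$, and summing over $t$ finishes the argument.

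No step here is a genuine obstacle, since all the machinery is already in place; the only real subtlety — and the whole point of stating this weaker theorem — is understanding \emph{why} the bound is looser by a factor of two than Theorem~\ref{thm:proximal}. The Standard FTRL Lemma exposes only the first-order quantity $f_t(x_t) - f_t(x\ti)$, and bounding it via H\"older charges a full $\dnormt{g_t}$ for the gradient and a full $\dnormt{g_t}$ for the displacement. By contrast, the Strong FTRL Lemma exposes the second-order stability quantity $h\ztt(x_t) - h\ztt(x\ti)$, which the second conclusion of Lemma~\ref{lem:smoothchange} bounds directly by $\h\dnormt{g_t}^2$. Thus the factor $\h$ is recovered only by having access to the sharper stability route, and the hardest part of the writeup is simply making this comparison explicit rather than any technical computation.
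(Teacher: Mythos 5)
Your proposal is correct and matches the paper's own proof essentially step for step: both apply Lemma~\ref{lem:weak_ftrl}, set $\ha(x) = f\ttm(x) + r\ztt(x)$ (your $h\zttm + r_t$ is the same function) and $\hb = h\ztt$, and then chain convexity, the dual-norm inequality, and the displacement bound \eqr{xchange} from Lemma~\ref{lem:smoothchange} to get $f_t(x_t) - f_t(x\ti) \le \dnormt{g_t}^2$. Your closing explanation of where the factor of $\h$ is lost relative to Theorem~\ref{thm:proximal} also agrees with the paper's discussion preceding the theorem.
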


We prove Theorem~\ref{thm:asc} using strong smoothness via
Lemma~\ref{lem:smoothchange}.  An alternative proof that uses strong
convexity directly is also possible, closely following
\citet[Sec. 2.5.2]{shwartz12online}.
\paragraph{Proof of Theorem~\ref{thm:asc}}
Applying Lemma~\ref{lem:weak_ftrl}, it is sufficient to consider a fixed
$t$ and upper bound $f_t(x_t) - f_t(x\ti)$.  For this fixed $t$,
define a helper function $ \ha(x) = f\ttm(x) + r\ztt(x).  $ Observe
$x_t = \argmin_x \ha(x)$ since $x_t$ is a minimizer of $r_t(x)$, and
by definition of the update $x_t$ is a minimizer of $f\ttm(x) +
r\zttm(x)$.  Let $\hb(x) = \ha(x) + f_t(x) = h\ztt(x)$, so $\hb$ is
1-strongly convex with respect to $\normt{\cdot}$ by assumption, and
$x\ti = \argmin_x \hb(x)$.  Then, we have
\begin{align*}
 f_t(x_t) - f_t(x\ti)
  & \le g_t \cdot (x_t - x\ti)
    && \text{Convexity of $f_t$ and $g_t \in \partial f_t(x_t)$} \\
  &\le \dnormt{g_t} \normt{x_t - x\ti} && \text{Property of dual norms}\\
  &\le \dnormt{g_t} \dnormt{g_t} = \dnormt{g_t}^2. &&
  \text{Using \eqr{xchange} from Lemma~\ref{lem:smoothchange}}
\end{align*}
\qed Interestingly, it appears difficult to achieve a tight (up to
constant factors) analysis of non-proximal FTRL algorithms (e.g.,
FTRL-Centered algorithms like \DA) using Lemma~\ref{lem:weak_ftrl}.
The Strong FTRL Lemma, however, allowed us to accomplish this.

\section{Proofs For Section~\ref{sec:convex}}
\label{sec:convexproofs}
We first state a standard technical result (see \citet[Lemma
15]{shwartz07thesis}):
\begin{lemma} \label{lem:smooth} Let $\psi$ be 1-strongly convex \wrt
  $\norm{\cdot}$, so $\dpsi$ is 1-strongly smooth with respect to
  $\dnorm{\cdot}$.  Then,
  \begin{equation}\label{eq:sss}
    \norm{\grad \dpsi(z) - \grad \dpsi(z')} \le \dnorm{z - z'},
  \end{equation}
  and
  \begin{equation}\label{eq:conjupdate}
    \argmin_x g \cdot x + \psi(x)  = \grad \dpsi(-g).
  \end{equation}
\end{lemma}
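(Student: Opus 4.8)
The plan is to establish \eqr{conjupdate} first and then obtain \eqr{sss} as a consequence of it together with the $1$-strong convexity of $\psi$. First I would argue for \eqr{conjupdate}. Since $\psi$ is $1$-strongly convex and closed, strong convexity gives coercivity, so the objective $g \cdot x + \psi(x)$ attains its infimum at a unique minimizer $x^\star = \argmin_x g \cdot x + \psi(x)$, with first-order optimality condition $-g \in \partial \psi(x^\star)$. The key tool is the conjugate–subgradient duality for a closed proper convex $\psi$ (e.g.\ \citet[Thm.~23.5]{rockafellar}), namely $u \in \partial \psi(x) \iff x \in \partial \dpsi(u)$; applying it with $u = -g$ yields $x^\star \in \partial \dpsi(-g)$. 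Finally, the hypothesis of the lemma (via Lemma~\ref{lem:strongdual}) guarantees that $\dpsi$ is $1$-strongly smooth and hence differentiable, so $\partial \dpsi(-g) = \{\grad \dpsi(-g)\}$ is a singleton. Combining, $x^\star = \grad \dpsi(-g)$, which is exactly \eqr{conjupdate}.

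For \eqr{sss} I would use \eqr{conjupdate} to represent both gradients as optimizers: writing $x_z = \grad \dpsi(z) = \argmin_x -z \cdot x + \psi(x)$ and likewise $x_{z'} = \grad \dpsi(z')$, the optimality conditions read $z \in \partial \psi(x_z)$ and $z' \in \partial \psi(x_{z'})$. Then I apply the $1$-strong-convexity inequality \eqr{sc} for $\psi$ twice: once at $x_z$ with subgradient $z$ evaluated at $x_{z'}$, giving $\psi(x_{z'}) \ge \psi(x_z) + z \cdot (x_{z'} - x_z) + \h \norm{x_{z'} - x_z}^2$, and once at $x_{z'}$ with subgradient $z'$ evaluated at $x_z$. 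Adding the two inequalities cancels the function values and the two quadratic terms combine, yielding $\norm{x_z - x_{z'}}^2 \le (z - z') \cdot (x_z - x_{z'})$. Bounding the right-hand side by the generalized H\"older inequality by $\dnorm{z - z'}\,\norm{x_z - x_{z'}}$ and dividing through (the case $x_z = x_{z'}$ being trivial) gives $\norm{x_z - x_{z'}} \le \dnorm{z - z'}$, i.e.\ \eqr{sss}.

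The only genuinely nontrivial ingredients are the conjugate–subgradient duality and the differentiability of $\dpsi$, and these are where I would be most careful: the duality requires $\psi$ closed and proper, and the differentiability must be read off the $1$-strong smoothness asserted by Lemma~\ref{lem:strongdual} (whose very definition \eqr{strongsmooth} presupposes differentiability). Everything after that is elementary. I would emphasize that the argument for \eqr{sss} deliberately invokes the strong convexity of $\psi$ directly rather than routing through the strong smoothness of $\dpsi$; this keeps the proof short and sidesteps the need for a separate ``smoothness implies Lipschitz gradient'' lemma valid in a general (non-Euclidean) norm.
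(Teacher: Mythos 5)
Your proof is correct. There is actually no in-paper argument to compare against: the paper states this lemma as a standard technical result and cites \citet[Lemma 15]{shwartz07thesis} for it, so your proposal supplies the missing proof, and it is essentially the classical argument underlying that citation. For \eqr{conjupdate}, the chain (existence and uniqueness of the minimizer from strong convexity and closedness, first-order optimality $-g \in \partial\psi(x^\star)$, conjugate--subgradient duality $u \in \partial\psi(x) \iff x \in \partial\dpsi(u)$, and then differentiability of $\dpsi$ collapsing $\partial\dpsi(-g)$ to a singleton) is exactly right, and the two technicalities you flag are the genuinely load-bearing ones: closedness and properness of $\psi$ are needed for Rockafellar's duality theorem (and are implicit in the lemma's appeal to Lemma~\ref{lem:strongdual}), and the differentiability of $\dpsi$ is what upgrades the inclusion $x^\star \in \partial\dpsi(-g)$ to the equality \eqr{conjupdate}. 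For \eqr{sss}, your two-point strong-monotonicity argument --- summing the strong-convexity inequality \eqr{sc} at $x_z$ and $x_{z'}$ with the respective subgradients $z$ and $z'$, cancelling function values, and finishing with the generalized H\"older inequality --- is complete and tight. Your decision to derive \eqr{sss} directly from the strong convexity of $\psi$, rather than routing through the strong smoothness of $\dpsi$, is also the better choice: in a general non-Euclidean norm, extracting a Lipschitz-gradient bound from the quadratic upper bound \eqr{strongsmooth} would itself require a separate argument, whereas your route is self-contained and uses only tools already defined in the paper.
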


In order to prove
Lemma~\ref{lem:smoothchange}, we first prove a somewhat easier result:
\begin{lemma}\label{lem:linearchange}
  Let $\ha: \R^n \rightarrow \R$ be strongly convex \wrt norm
  $\norm{\cdot}$, and let $x_1 = \argmin_x \ha(x)$, and define
  $\hb(x) = \ha(x) + b \cdot x$ for $b \in \R^n$.  Letting $x_2
  = \argmin_x \hb(x)$, we have
  \[
  \hb(x_1) - \hb(x_2) \le \h \dnorm{b}^2,
  \qqand
  \norm{x_1 - x_2} \leq \dnorm{b}.
  \]
\end{lemma}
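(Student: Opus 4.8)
The plan is to play two instances of the $1$-strong-convexity inequality against each other: one anchored at $x_1$, the minimizer of $\ha$, and one anchored at $x_2$, the minimizer of $\hb$. (I read ``strongly convex'' here as $1$-strongly convex, matching the scaling of the stated bounds; a general constant is absorbed into the norm exactly as in Lemma~\ref{lem:basic}.) First I would note that $\hb = \ha + b\cdot x$ is itself $1$-strongly convex \wrt $\norm{\cdot}$, since adding the linear term $b \cdot x$ shifts every subgradient by $b$ but leaves the quadratic term in the strong-convexity inequality~\eqr{sc} untouched.

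\emph{Upper bound on the gap.} Since $x_1 = \argmin_x \ha(x)$, we have $0 \in \partial \ha(x_1)$, so $1$-strong convexity of $\ha$ gives $\ha(x_2) \ge \ha(x_1) + \h\norm{x_1 - x_2}^2$. Adding $b \cdot x_1 - b \cdot x_2$ to both sides and rearranging,
\[
  \hb(x_1) - \hb(x_2) \;\le\; b \cdot (x_1 - x_2) - \h\norm{x_1 - x_2}^2 \;\le\; \dnorm{b}\,\norm{x_1 - x_2} - \h\norm{x_1 - x_2}^2,
\]
where the last step is the generalized H\"older inequality. Completing the square, the right-hand side is at most $\h\dnorm{b}^2$, which already establishes the first claim.

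\emph{Lower bound and the distance estimate.} For the second claim I would use strong convexity on the other side: since $x_2 = \argmin_x \hb(x)$ we have $0 \in \partial \hb(x_2)$, and $1$-strong convexity of $\hb$ gives $\hb(x_1) - \hb(x_2) \ge \h\norm{x_1 - x_2}^2$. Chaining this with the upper bound above yields $\h\norm{x_1 - x_2}^2 \le \dnorm{b}\,\norm{x_1 - x_2} - \h\norm{x_1 - x_2}^2$, hence $\norm{x_1 - x_2}^2 \le \dnorm{b}\,\norm{x_1 - x_2}$; dividing through by $\norm{x_1 - x_2}$ (the claim is trivial when $x_1 = x_2$) gives $\norm{x_1 - x_2} \le \dnorm{b}$.

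The only real subtlety is recognizing that one should anchor each strong-convexity inequality at the relevant function's own minimizer, so that the subgradient term vanishes, and then combine the resulting two-sided estimates; the H\"older step and completing the square are then routine. An alternative and equally short route goes through the conjugate: by~\eqr{conjupdate} of Lemma~\ref{lem:smooth}, $x_1 = \grad\ha^\star(0)$ and $x_2 = \grad\ha^\star(-b)$, and since $\ha^\star$ is $1$-strongly smooth \wrt $\dnorm{\cdot}$ by Lemma~\ref{lem:strongdual}, the gradient-Lipschitz bound~\eqr{sss} gives $\norm{x_1 - x_2} \le \dnorm{b}$ at once, while the strong-smoothness inequality~\eqr{strongsmooth} applied to $\ha^\star$ at $0$ with increment $-b$ recovers the first claim after substituting $\hb(x_2) = -\ha^\star(-b)$ and $\ha(x_1) = -\ha^\star(0)$.
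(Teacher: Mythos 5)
Your main argument is correct, and it takes a genuinely different route from the paper. The paper proves this lemma entirely on the dual side: it identifies $\ha(x_1) = -\phd(0)$ and $b\cdot x_2 + \ha(x_2) = -\phd(-b)$, invokes the strong-convexity/strong-smoothness duality (Lemma~\ref{lem:strongdual}) to get that $\phd$ is $1$-strongly smooth \wrt $\dnorm{\cdot}$, and then applies \eqr{strongsmooth} at $0$ with increment $-b$, plus the gradient correspondence $x_1 = \grad\phd(0)$, $x_2 = \grad\phd(-b)$ from Lemma~\ref{lem:smooth} for the distance bound — exactly the ``alternative route'' you sketch in your closing paragraph. Your primary proof instead stays in the primal: anchoring \eqr{sc} at $x_1$ (where $0 \in \partial \ha(x_1)$) gives $\ha(x_2) \ge \ha(x_1) + \h\norm{x_1-x_2}^2$, H\"older and completing the square yield the gap bound, and anchoring \eqr{sc} for $\hb$ at $x_2$ gives the matching lower bound $\hb(x_1)-\hb(x_2) \ge \h\norm{x_1-x_2}^2$, from which the distance estimate follows by chaining. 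What your route buys is economy of machinery: it needs only the definition \eqr{sc} and the generalized H\"older inequality, with no appeal to Fenchel conjugates, closedness of $\ha$, or the duality Lemma~\ref{lem:strongdual} (which the paper imports from external references); it also produces the two-sided estimate on the gap as a byproduct. What the paper's route buys is reuse: the conjugate identities and Lemmas~\ref{lem:smooth} and~\ref{lem:strongdual} are needed elsewhere (e.g., the primal-dual proof of Theorem~\ref{thm:centered} in Section~\ref{sec:proofs}), so proving Lemma~\ref{lem:linearchange} through them keeps the toolkit uniform. Your reading of ``strongly convex'' as $1$-strongly convex is also the paper's intent, as the constant $\h$ in the bound confirms.
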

\begin{proof}
We have
\[ -\phd(0) = - \max_x 0 \cdot x -\ha(x) = \min_x \ha(x) = \ha(x_1).
\]
and similarly,
\[
-\phd(-b) = - \max_x - b\cdot x - \ha(x) = \min_x b \cdot x + \ha(x) = b \cdot x_2 + \ha(x_2).
\]
Since $x_1 = \grad \phd(0)$ and $\phd$ is strongly-smooth
(Lemma~\ref{lem:strongdual}), \eqr{strongsmooth} gives
\begin{equation*}
\phd(-b) \leq
    \phd(0) + x_1 \cdot (-b-0) + \h \dnorm{b}^2.
\end{equation*}
Combining these facts, we have
\begin{align*}
  \ha(x_1) + b \cdot x_1 - \ha(x_2) - b \cdot x_2
    &= -\phd(0) + b \cdot x_1 + \phd(-b)\\
    &\le -\phd(0) + b \cdot x_1 + \phd(0) + x_1 \cdot (-b) + \h \dnorm{b}^2\\
    &= \h \dnorm{b}^2.
\end{align*}
For the second part, observe $\grad \phd(0) = x_1$, and $\grad
\phd(-b) = x_2$ and so $\norm{x_1 - x_2} \leq \dnorm{b}$, using both
parts of Lemma \ref{lem:smooth}.
\end{proof}

\begin{proof}[Proof of Lemma~\ref{lem:smoothchange}]
  We are given that $\hb(x) = \ha(x) + \psi(x)$ is 1-strongly convex
  \wrt $\norm{\cdot}$.  The key trick is to construct an alternative
  $\ha'$ that is also 1-strongly convex with respect to this same
  norm, but has $x_1$ as a minimizer.  Fortunately, this is easily
  possible: define $\ha'(x) = \ha(x) + \psi(x) - b \cdot x$, and note
  $\ha$ is 1-strongly convex \wrt $\norm{\cdot}$ since it differs from
  $\hb$ only by a linear function.  Since $b \in \partial \psi(x_1)$
  it follows that 0 is in $\partial(\psi(x) - b \cdot x)$ at $x =
  x_1$, and so $x_1 = \argmin \ha'(x)$.  Note $\hb(x) = \ha'(x) +
  b\cdot x$.  Applying Lemma~\ref{lem:linearchange} to $\ha'$ and
  $\hb$ completes the proof, noting for any $x'$ we have $\hb(x_1) -
  \hb(x') \leq \hb(x_1) - \hb(x_2)$.
\end{proof}

\begin{proof}[Proof of Corollary~\ref{cor:smoothchange2}]
  Let $x_2' = \argmin_x \ha(x) + \psi(x)$, so by
  Lemma~\ref{lem:smoothchange}, we have
  \begin{equation}\label{eq:pb}
    \ha(x_1) + \psi(x_1) - \ha(x_2') - \psi(x_2') \le \h \dnorm{b}^2,
  \end{equation}
  Then, noting $\ha(x_2') + \psi(x_2') \le \ha(x_2) + \psi(x_2)$ by
  definition, we have
  \begin{align*}
    \hb(x_1) - \hb(x_2) &=
    \ha(x_1) + \psi(x_1) + \Psi(x_1) - \ha(x_2) - \psi(x_2) - \Psi(x_2)\\
    &\le \ha(x_1) + \psi(x_1) + \Psi(x_1) - \ha(x_2') - \psi(x_2') - \Psi(x_2) \\
      &\le \h \dnorm{b}^2 + \Psi(x_1) - \Psi(x_2).
        && \text{Using \eqr{pb}.}
\end{align*}
Noting that $\hb(x_1) - \hb(x') \le \hb(x_1) - \hb(x_2)$ for any $x'$
completes the proof.
\end{proof}

\section{Non-Adaptive Mirror Descent and Projection}
\label{sec:non_adaptive_md}
Non-adaptive \MD algorithms have appeared in the literature
in a variety of forms, some equivalent and some not.  In this section
we briefly review these connections.  We first consider the
unconstrained case, where the domain of the convex functions is taken
to be $\R^n$, and there is no constraint that $x_t \in \X$.

\subsection{The Unconstrained Case}
Figure~\ref{fig:unconstrained} summarizes a set of equivalent expressions for the
unconstrained non-adaptive \MD algorithm.  Here we assume
$\RR$ is a strongly-convex regularizer which is differentiable on $\R^n$
so that the corresponding Bregman divergence $\B_\RR$ is defined.
Recall from Lemma~\ref{lem:smooth},
\begin{equation}\label{eq:gradRD}
\grad \RD(-g) = \argmin_x g \cdot x + \RR(x).
\end{equation}
We now prove that these updates are equivalent:

\begin{figure}
\begin{tabular}{|l|p{5.5cm}|p{5.5cm}|}
\hline
Explicit &
\tabeq{
\ng\ti &= \ng_t - g_t\\
x\ti &= \grad \RD(\ng\ti)
}
&
\tabeq{
\ng\ti &= \grad R(x_t)  - g_t\\
x\ti &= \grad \RD(\ng\ti)
}
\\
\hline
Implicit &
\multicolumn{2}{|c|}{
\tabeq{
x\ti &= \argmin_{x} g_t \cdot x + \B_R(x, x_t)
}
}
\\
\hline
FTRL &
\multicolumn{2}{|c|}{
\tabeq{
x\ti = \argmin_{x} g\tt\cdot x + R(x)
}
}
\\
\hline
\end{tabular}
\caption{ Four equivalent expressions for unconstrained \MD
  defined by a strongly convex regularizer $R$.  The top-right
  expression is from by~\citet{beck03md}, while the top-left
  expression matches the presentation of \citet[Sec
  2.6]{shwartz12online}.}\label{fig:unconstrained}
\end{figure}

\begin{theorem}
  The four updates in Figure~\ref{fig:unconstrained} are equivalent.
\end{theorem}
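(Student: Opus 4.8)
The plan is to establish a chain of equalities linking the four updates, using as the single workhorse the identity \eqr{gradRD}, $\grad \RD(-g) = \argmin_x g\cdot x + \RR(x)$, together with its first-order corollary that $\grad \RR$ inverts $\grad \RD$, that is $\grad \RR(\grad \RD(z)) = z$ for all $z$. This corollary follows by writing the first-order optimality condition for the minimizer $\grad\RD(-g)$ in \eqr{gradRD}, namely $g + \grad\RR(\grad\RD(-g)) = 0$, and substituting $z = -g$. As a warm-up I would note that the FTRL form is $\grad \RD(-g\tt)$ outright, by applying \eqr{gradRD} with $g \leftarrow g\tt$.

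Next I would connect the Implicit update to the right-hand Explicit update. Expanding the Bregman term, $\B_\RR(x, x_t) = \RR(x) - \RR(x_t) - \grad \RR(x_t)\cdot(x - x_t)$, and discarding the additive constants that do not depend on $x$, the Implicit objective becomes $\big(g_t - \grad \RR(x_t)\big)\cdot x + \RR(x)$. Applying \eqr{gradRD} with $g \leftarrow g_t - \grad \RR(x_t)$ then gives $x\ti = \grad \RD(\grad \RR(x_t) - g_t)$, which is exactly the right-hand Explicit update with dual recursion $\ng\ti = \grad \RR(x_t) - g_t$.

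To bridge the two Explicit forms, I would invoke the inverse-gradient identity: since $x_t = \grad \RD(\ng_t)$ by the previous round of the update, applying $\grad \RR$ yields $\grad \RR(x_t) = \grad \RR(\grad \RD(\ng_t)) = \ng_t$. Substituting this into the right-hand dual recursion recovers the left-hand recursion $\ng\ti = \ng_t - g_t$, so the two Explicit forms coincide once they share an initial condition. Finally, iterating the left-hand recursion from $\ng_1 = 0$ (equivalently $x_1 = \grad \RD(0) = \argmin_x \RR(x)$) gives $\ng\ti = -g\tt$ by a one-line induction, whence $x\ti = \grad \RD(-g\tt)$, matching the FTRL form. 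By transitivity the chain Implicit $=$ Explicit-right $=$ Explicit-left $=$ FTRL shows all four are equivalent.

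The main obstacle is not any single calculation but the duality bookkeeping. One must justify the inverse relation $\grad \RR \circ \grad \RD = \mathrm{id}$, which requires $\RR$ to be strongly convex and differentiable so that $\grad \RD$ is single-valued, as guaranteed by \eqr{conjupdate} in Lemma~\ref{lem:smooth}. One must also ensure all four updates are initialized consistently, since the Explicit forms carry the state $\ng_t$, whose identification with $-g\ttm$ rests on the induction hypothesis $x_t = \grad \RD(\ng_t)$ holding at the start of each round.
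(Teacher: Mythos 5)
Your proof is correct, and it rests on the same two ingredients as the paper's --- the identity \eqr{gradRD} and first-order optimality of the relevant argmins --- but you decompose the equivalence chain differently. The paper proves three pairwise equivalences: the two Explicit forms (by deriving $\grad \RR(x_t) = \ng_t$ from optimality of $x_t$), Explicit \Equiv FTRL (immediate once $\ng\ti = -g\tt$), and Implicit \Equiv FTRL by an induction that matches the stationarity condition of the implicit update against that of the FTRL objective and invokes uniqueness of the minimizer. You instead connect Implicit to Explicit-right by pure algebra: expanding $\B_\RR(x, \hx_t)$ and discarding terms constant in $x$ shows the two per-round update maps are the \emph{same function} of the current iterate, so no induction is needed at that link; all of the cross-round bookkeeping is then concentrated in the single identity $\ng_t = -g\ttm$ (equivalently $\grad\RR(x_t) = \ng_t$), obtained by iterating the dual recursion from $\ng_1 = 0$ via your inverse-gradient corollary $\grad\RR(\grad\RD(z)) = z$. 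What your organization buys is a cleaner separation between per-round objective rewriting and the identification of the dual state, with the induction isolated in one place. What the paper's organization buys is that its Implicit \Equiv FTRL induction is exactly the template that is reused to prove the much more general adaptive composite equivalence of Theorem~\ref{thm:mdequiv}, as the paper itself remarks at the end of that step; your route, while arguably more transparent here, does not generalize as directly because in the adaptive composite setting the Bregman divergence $\B_t$ and the subgradients $\gp_t$ change from round to round and the per-round maps no longer coincide termwise.
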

\begin{proof}
It is sufficient to prove three equivalences:
\vspace{-0.04in}
\begin{itemize} \itemsep -1pt
\item The two explicit formulations are equivalent.  For the
  right-hand version, we have $x_t = \grad \RD(\ng_t) = \argmin_x
  -\ng_t\cdot x + \RR(x)$ using \eqr{gradRD}. The optimality of
  $x_t$ for this minimization implies $0 = -\ng_t + \grad \RR(x_t)$,
  or $\grad \RR(x_t) = \ng_t$.
\item Explicit \Equiv FTRL: Immediate from \eqr{gradRD} and the
  fact that $\ng\ti = -g\tt$.

\item Implicit \Equiv FTRL:
That is,
\begin{align}
\hx\ti &= \argmin_x g_t \cdot x + \B_R(x, \hx_t) && \text{and}\label{eq:bmd}\\
x\ti &= \argmin_x g\tt \cdot x + \RR(x) \label{eq:bftrl}
\end{align}
are equivalent.  The proof is by induction on the
hypothesis $x_t = \hx_t$.  We must have from \eqr{bmd} and the
IH that $g_t + \grad \RR(\hx\ti) - \grad \RR(x_t) = 0$, and from
\eqr{bftrl} applied to $t-1$ we must have $\grad R(x_t) = -g\ttm$,
and so $\grad \RR(\hx\ti) = -g\tt$.  Then, we have the gradient of the
objective of \eqr{bftrl} at $\hx\ti$ is $g\tt + \grad \RR(\hx\ti) =
0$, and since the optimum of \eqr{bftrl} is unique, we must have
$\hx\ti = x\ti$.  The same general technique is used to prove the more
general result for adaptive composite \MD in Theorem~\ref{thm:mdequiv}.
\end{itemize}
\vspace{-0.1in}
\end{proof}

\subsection{The Constrained Case:  Projection onto $\X$}
\label{sec:lazyvgreedy}
Even in the non-adaptive case (fixed $\RR$), the story is already more
complicated when we constrain the algorithm to select from a convex set $\X$. For this section we take $\RR(x) = r(x) + \indX(x)$
where $r$ is continuously differentiable on $\dom \indX = \X$.

In this setting, the two explicit algorithms given in the previous
table are, in fact, no longer equivalent.
Figure~\ref{fig:constrained} gives the two resulting families of
updates. The classic \MD algorithm corresponds to the right-hand
column, and follows the presentation of~\citet{beck03md}. This
algorithm can be expressed as a greedy projection, and when $r(x) =
\frac{1}{2 \eta}\norm{x}_2^2$ gives a constant learning rate version
of the projected \OGD algorithm of \citet{zinkevich03giga}. The Lazy
column corresponds for example to the ``\OGD with lazy projections''
algorithm \citep[Cor. 2.16]{shwartz12online}.

The relationship to these projection algorithms is made explicit by
the last row in the table.  We define the projection
operator onto $\X$ with respect to Bregman divergence $\B_r$ by
\[ \projX^r(u) \equiv \argmin_{x \in \X} \B_r(x, u).
\]
Expanding the definition of the Bregman divergence, dropping terms
independent of $x$ since they do not influence the $\argmin$, and
replacing the explicit $x \in \X$ constraint with an $\indX$ term in
the objective, we have the equivalent expression
\begin{equation}\label{eq:altproj}
   \projX^r(u) = \argmin_{x} r(x) - \grad r(u) \cdot x + \indX(x).
\end{equation}
The names Lazy and Greedy come from the manner in which the projection
is used. For Lazy-Projection, the state of the algorithm is simply
$g\tt$ which can be updated without any need for projection;
projection is applied lazily when we need to calculate $x\ti$. For the
Greedy-Projection algorithm on the other, the state of the algorithm
is essentially $x_t$, and in particular $u\ti$ cannot be calculated
without knowledge of $x_t$, the result of greedily applying projection
on the previous round. If the $g_t$ are really linear
approximations to some $f_t$, however, a projection is needed on each
round for both algorithms to produce $x_t$ so $g_t \in \partial
f_t(x_t)$ can be computed.

Both the Lazy and Greedy families can be analyzed (including in the
more general adaptive case) using the techniques introduced in this
paper.
The Lazy family corresponds to the \Natural update of
Section~\ref{sec:composite}, namely
\[
x\ti = \argmin_x g\tt\cdot x + \indX(x) + r\ztt(x),
\]
which we encode as a single fixed non-smooth penalty $\Psi = \indX$
which arrives on the first round: $\alpha_1 = 1$ and $\alpha_t = 0$
for $t>1$.

The Greedy-Projection \MD algorithms, on the other hand, can be thought of us
receiving loss functions $g_t \cdot x + \indX(x)$ on each round: that
is, we have $\alpha_t = 1$ for all $t$.  This family is analyzed using the techniques from Section~\ref{sec:md}.  
In this setting, embedding $\indX(x)$ inside
$\RR$ can be seen as a convenience for defining $\grad\RD$,
\begin{equation}\label{eq:gradRDX}
\grad \RD(-g) = \argmin_x g \cdot x + r(x) + \indX(x).
\end{equation}
We have the following equivalence results:

\begin{figure}
\begin{tabular}{|l|p{5cm}|p{6cm}|}
\hline
    & Lazy %
    & Greedy %
    \\
\hline
Explicit &
\tabeq{
\ng\ti &= \ng_t - g_t\\
x\ti &= \grad \RD(\ng\ti)
}
&
\tabeq{
\ng\ti &= \grad r(x_t)  - g_t\\
x\ti &= \grad \RD(\ng\ti)
}
\\
\hline
Implicit &
&
\tabeq{
&x\ti = \\
&\ \argmin_{x} g_t \cdot x + \B_r(x, x_t) + \indX(x)
}\\
\hline
FTRL &
{$\begin{aligned}
x\ti = \argmin_{x} g\tt\cdot x + R(x)
\end{aligned}$}
&
\rule{0pt}{4.1ex}
\tabeq{
& x\ti = \\
&\ \argmin_x \big(g\tt + \gp\ttm\big)\cdot x + R(x)
}
\\
\hline
\parbox{2cm}{Projection}&
{$\begin{aligned}
u\ti &= \argmin_x g\tt\cdot x + r(x)\\
x\ti &= \projX^r(u\ti)
\end{aligned}$}
&
{$\begin{aligned}
u\ti &= \argmin_u g_t \cdot u + \B_r(u, x_t)\\
     &=\grad r^\star(\grad r(x_t) - g_t)\\
x\ti &= \projX^r(u\ti)
\end{aligned}$}
\\
\hline
\end{tabular}
\caption{The Lazy and Greedy families of \MD algorithms,
  defined via $\RR(x) = r(x) + \indX(x)$, where $r$ is a differentiable
  strongly-convex regularizer.  These families are not equivalent, but
  the different updates in each column are equivalent.
} \label{fig:constrained}
\end{figure}

\begin{theorem} The Lazy-Explicit, Lazy-FTRL, and Lazy-Projection
  updates from the left column of Figure~\ref{fig:constrained} are
  equivalent.
\end{theorem}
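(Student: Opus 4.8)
The plan is to prove the theorem by establishing two links, Lazy-Explicit \Equiv Lazy-FTRL and Lazy-FTRL \Equiv Lazy-Projection, which together give the three-way equivalence. The whole argument reuses two facts already available: the conjugate identity \eqr{gradRDX}, $\grad\RD(-g) = \argmin_x g\cdot x + r(x) + \indX(x) = \argmin_x g\cdot x + \RR(x)$, and the projection characterization \eqr{altproj}. No induction is needed here (unlike the adaptive case), since the FTRL state $g\tt$ is maintained directly.

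First I would handle Lazy-Explicit \Equiv Lazy-FTRL. Unrolling the dual recursion $\ng\ti = \ng_t - g_t$ gives $\ng\ti = -g\tt$, exactly as in the unconstrained theorem. Substituting into $x\ti = \grad\RD(\ng\ti)$ and applying \eqr{gradRDX} yields $x\ti = \grad\RD(-g\tt) = \argmin_x g\tt\cdot x + \RR(x)$, which is precisely the Lazy-FTRL update. This step is essentially the unconstrained ``Explicit \Equiv FTRL'' argument verbatim; the only change is that $\RR = r + \indX$ now carries the feasible-set term inside the conjugate.

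The substantive step is Lazy-FTRL \Equiv Lazy-Projection. Starting from the Lazy-Projection update, $u\ti = \argmin_x g\tt\cdot x + r(x)$ is the \emph{unconstrained} minimizer using $r$ alone, so first-order optimality (recall $r$ is differentiable) gives $\grad r(u\ti) = -g\tt$. Expanding the projection via \eqr{altproj},
\[
x\ti = \projX^r(u\ti) = \argmin_x\ r(x) - \grad r(u\ti)\cdot x + \indX(x),
\]
and replacing $\grad r(u\ti)$ by $-g\tt$ turns the objective into $r(x) + g\tt\cdot x + \indX(x)$, so that $x\ti = \argmin_x g\tt\cdot x + r(x) + \indX(x) = \argmin_x g\tt\cdot x + \RR(x)$. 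This is exactly the Lazy-FTRL update, so the two select the same point.

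The main obstacle I anticipate is purely one of bookkeeping: keeping clear the two roles of the regularizer, $r$ (differentiable, defining both the unconstrained step $u\ti$ and the Bregman projection $\projX^r$) versus $\RR = r + \indX$ (appearing inside $\RD$ and in FTRL). The single insight that collapses the projection step is the optimality identity $\grad r(u\ti) = -g\tt$, which lets the linear term $-\grad r(u\ti)\cdot x$ in \eqr{altproj} reconstitute the accumulated gradient $g\tt\cdot x$; the rest is substitution. Uniqueness of each $\argmin$ --- from strong convexity of $r$, hence of $\RR$ on $\X$ --- guarantees that the selected \emph{points} coincide, not merely the objective values.
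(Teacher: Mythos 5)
Your proposal is correct and matches the paper's own proof essentially step for step: both establish Lazy-Explicit \Equiv Lazy-FTRL by unrolling $\ng\ti = -g\tt$ and invoking the conjugate identity, and both collapse Lazy-Projection to Lazy-FTRL via the optimality condition $\grad r(u\ti) = -g\tt$ together with the expansion of $\projX^r$ in \eqr{altproj}. Your extra remarks on uniqueness from strong convexity and on distinguishing $r$ from $\RR$ are sound but add nothing beyond what the paper's argument already implicitly uses.
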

\begin{proof}
  First, we show Lazy-Explicit is equivalent to Lazy-FTRL.  Iterating
  the definition of $\ng\ti$ in the explicit version gives $\ng\ti =
  -g\tt$, and so the second line in the update becomes exactly $x\ti =
  \argmin_x g\tt \cdot x + R(x)$.

  Next, we show that Lazy-Projection is equivalent to the
  Lazy-Explicit update.  Optimality conditions for the minimization
  that defines $u\ti$ imply $\grad r(u\ti) = -g\tt$.  Then, the second
  equation in the Lazy-Projection update becomes
  \begin{align*}
    x\ti
    &= \projX^r(u\ti)
    = \argmin_{x} r(x) - \grad r(u\ti) \cdot x + \indX(x)
      && \text{Using \eqr{altproj}.}\\
    &= \argmin_{x} g\tt \cdot x + r(x) + \indX(x),
      && \text{Since $\grad r(u\ti) = -g\tt$.}
  \end{align*}
  which is exactly the Lazy-FTRL update (recalling $\RR(x) = r(x) +
  \indX(x)$).
\end{proof}

\begin{theorem} The Explicit, Implicit, FTRL, and Projected updates in
  the ``Greedy'' column of Figure~\ref{fig:constrained} are equivalent.
\end{theorem}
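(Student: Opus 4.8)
The plan is to establish the four equivalences through a short chain: first show that the Explicit, Implicit, and Projection forms all reduce by direct manipulation to the single common minimization
\[
x\ti = \argmin_x \big(g_t - \grad r(x_t)\big)\cdot x + r(x) + \indX(x), \qquad (\star)
\]
and then obtain the equivalence with the Greedy-FTRL update as a direct instance of Theorem~\ref{thm:mdequiv}. For the Implicit update, expanding $\B_r(x, x_t) = r(x) - r(x_t) - \grad r(x_t)\cdot(x - x_t)$ and discarding the additive terms $-r(x_t) + \grad r(x_t)\cdot x_t$ that are constant in $x$ immediately rewrites $\argmin_x g_t\cdot x + \B_r(x, x_t) + \indX(x)$ into the form $(\star)$; here I use that $x_t \in \X$ on every round, so $\grad r(x_t)$ is well-defined even though $r$ is only differentiable on $\X$.

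For the Explicit update, I would substitute $\ng\ti = \grad r(x_t) - g_t$ into $x\ti = \grad \RD(\ng\ti)$ and invoke \eqr{gradRDX}, which gives $\grad\RD(-g) = \argmin_x g\cdot x + r(x) + \indX(x)$; since $-\ng\ti = g_t - \grad r(x_t)$ this is exactly $(\star)$. For the Projection update, I would first unfold the unconstrained step: expanding $\B_r(u, x_t)$ and dropping the $x_t$-constants shows $u\ti = \argmin_u (g_t - \grad r(x_t))\cdot u + r(u) = \grad r^\star(\grad r(x_t) - g_t)$ by the conjugate identity \eqr{conjupdate}, whose optimality condition is $\grad r(u\ti) = \grad r(x_t) - g_t$. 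Plugging this into $\projX^r(u\ti) = \argmin_x r(x) - \grad r(u\ti)\cdot x + \indX(x)$ via \eqr{altproj} again produces $(\star)$. This settles the mutual equivalence of the Explicit, Implicit, and Projection columns.

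It remains to identify the Greedy-FTRL update with these, and this is the step I expect to be the main obstacle, since the FTRL form alone carries the accumulated subgradient terms $\gp\ttm$ and one must know that compatible subgradients $\gp_s \in \partial \indX(x_{s+1})$ exist making the two updates agree on every round. The key observation is that the non-adaptive greedy setting is exactly the special case of Theorem~\ref{thm:mdequiv} in which $\Psi = \indX$, $\alpha_t = 1$ for all $t$, and $r_t = 0$ for $t \ge 1$ (so $r\ztt = r$ is fixed, $\B_t = \B_r$, and the proximal regularizers collapse to $\rp\ztt = r$). Under this identification the \MD update \eqr{md} becomes precisely the Greedy-Implicit update, while the FTRL-Proximal update \eqr{ftrlmd} becomes precisely the Greedy-FTRL update $x\ti = \argmin_x (g\tt + \gp\ttm)\cdot x + r(x) + \indX(x)$, with $\gp_s$ exactly the subgradients supplied by the construction \eqr{subgradprop}. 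Rather than redo the inductive bookkeeping, I would simply invoke Theorem~\ref{thm:mdequiv}, whose proof already carries out this induction (showing zero lies in the subdifferential of the FTRL objective at the \MD iterate), to conclude that the two select identical points. Combined with the reductions to $(\star)$ above, this closes the chain and proves all four updates equivalent.
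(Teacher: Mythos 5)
Your proof is correct and follows essentially the same route as the paper: the same per-round algebraic reductions of the Explicit, Implicit, and Projection updates to a common argmin, and the same invocation of Theorem~\ref{thm:mdequiv} to bring in the Greedy-FTRL form. The one discrepancy is in your specialization of Theorem~\ref{thm:mdequiv}: the paper takes $r_0 \leftarrow r + \indX$ (so $\rp\ztt = r + \indX$ and $x_1 = \argmin_x \RR(x) \in \X$), whereas your choice $r_0 \leftarrow r$ initializes both sequences at the unconstrained minimizer $\argmin_x r(x)$, which may lie outside $\X$ --- this conflicts with your own justification that $\grad r(x_t)$ exists because $x_t \in \X$, and mismatches the initialization implicit in Figure~\ref{fig:constrained}'s Greedy-FTRL update (empty sums give $x_1 = \argmin_x \RR(x)$). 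Folding $\indX$ into $r_0$ as the paper does repairs this while leaving the rest of your argument unchanged.
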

\paragraph{Proof} We prove the result via the following chain of equivalences:
\begin{itemize}
\item Greedy-Explicit \Equiv Greedy-Implicit (c.f. \citet[Prop
  3.2]{beck03md}).  We again use $\hx$ for the points selected by the
  implicit version,
  \begin{align*}
  \hx\ti &= \argmin_x g_t \cdot x + \B_r(x, x_t) + \indX(x) \\
         &= \argmin_x g_t \cdot x + r(x) - \grad r(x_t)\cdot x + \indX(x),
  \end{align*}
  where we have dropped terms independent of $x$ in the $\argmin$.
  On the other hand, plugging in the definition of $\ng\ti$, the
  explicit update is
  \begin{equation}\label{eq:explicit2}
  x\ti = \argmin_x - (\grad r(x_t) - g_t)\cdot x + r(x) + \indX(x),
  \end{equation}
  which is equivalent.

\item Greedy-Implicit \Equiv Greedy-FTRL: 
This is a special case of Theorem~\ref{thm:mdequiv}, taking $r_0 \leftarrow r + \indX$, $r_t(x) = \rp_t(x) = 0$ for $t \ge 1$, and $\alpha_t \Psi(x) = \indX(x)$ for $t \ge 1$.

\item When $\indX = \indX$, Projection is equivalent to the
  Greedy-Explicit expression.  First, note we can re-write the
  Greedy-Projection update as
  \begin{align*}
    u\ti &= \argmin_u -(\grad r(x_t) - g_t)\cdot u + r(u) \\
    x\ti &= \argmin_{x \in \X} \B_r(x, u\ti).
  \end{align*}
  Optimality conditions for the first expression imply $\grad r(u\ti)
  = \grad r(x_t) - g_t$.  Then,  the second update becomes
  \begin{align*}
    x\ti &= \projX^r(u\ti) \\
         &= \argmin_{x} r(x) - \grad r(u\ti) \cdot x + \indX(x)
           && \text{Using \eqr{altproj}.} \\
         &= \argmin_{x} r(x) - (\grad r(x_t) - g_t) \cdot x + \indX(x),
           && \text{Since $\grad r(u\ti) = \grad r(x_t) - g_t$.}
   \end{align*}
   which is equivalent to the Greedy-Explicit update, e.g.,
   \eqr{explicit2}.\qed
\end{itemize}

\section{Details for the One-Dimensional $L_1$ Example}
\label{sec:l1example}
In this section we provide details for the one-dimensional example presented in Section~\ref{sec:l1}.
Suppose gradients $g_t$ satisfy $\norm{g_t}_2 \le G$, and we use a
feasible set of radius $R = 2G$, so the theory-recommended fixed
learning rate is $\eta = \frac{R}{G\sqrt{T}}= \frac{2}{\sqrt{T}}$ (see
Section~\ref{sec:applications}).

We first consider the behavior of \MD: we construct the example so
that the algorithm oscillates between two points, $\hx$ and $-\hx$
(allowing the possibility that $\hx = -\hx = 0$).  In fact, given
alternating gradients of $+G$ and $-G$, in such an oscillation the
distance one update takes us must be $\eta(G - \lambda)$, assuming
$\lambda < G$.  Thus, we can cause the algorithm to oscillate between
$\hx = (G-\lambda)/\sqrt{T}$ and $-\hx$.  We assume an initial $g_1 =
-\h(G+\lambda)$, which gives us $x_2 = \hx$ for both \MD and FTRL when
$x_1 = 0$.

\newcommand{\bfrac}[2]{\left(\frac{#1}{#2}\right)}

This construction implies that for any constant $L_1$ penalty $\lambda
< G$, \MD will never learn the optimal solution $x^*=0$
(note that after the first round, we can view the $g_t$ as being for
example the subgradients of $f_t(x) = G\norm{x}_1$).  The points $x_t$
selected by \MD, the gradients, and the subgradients of the $L_1$
penalty are given by the following table:

\vspace{0.1in}
\begin{tabular}{l|cccccc}
$t$     & 1 & 2 & 3 & 4 & 5 &$\cdots$\\
\hline
$g_t$   & $g_1$ & $G$ & $-G$ &  $G$ & $-G$ & $\cdots$\\
$x_t$   & $0$ & $\hx$ & $-\hx$ & $\hx$ & $-\hx$ & $\cdots$\\
$\gp_t$ & $\lambda$ & $-\lambda$ & $\lambda$ & $-\lambda$& $\lambda$ & $\cdots$
\end{tabular}

\vspace{0.1in}
\noindent
While we have worked from the standard \MD update,
\eqr{gdl1}, it is instructive to verify the
FTRL-Proximal representation is indeed equivalent.
For example, using the values from the table, for $x_5$ we have
\begin{align*}
x_5
&= \argmin_x  g_{1:4} \cdot x + \gp_{1:3}\cdot x + \lambda \norm{x}_1   + \frac{1}{2\eta}\norm{x}^2_2\\
&= \argmin_x\, (g_1 + G) \cdot x + \lambda \cdot x + \lambda \norm{x}_1   + \frac{1}{2\eta}\norm{x}^2_2
 = -\frac{G - \lambda}{\sqrt{T}} = -\hx,
\end{align*}
where we solve the argmin by applying \eqr{l1soln} with
$b = g_1 + G + \lambda$.%

Now, contrast this with the FTRL update of \eqr{ftrll1}; we can solve
this update in closed form using \eqr{l1soln}.  First, note that FTRL
will not oscillate in the same way, unless $\lambda = 0$.  We have
that $x\ti = 0$ whenever $\abs{g_{1:t}} < t \lambda$.
Note that $g_{1:t}$ oscillates between $g_{1:t} = g_1 = -\h(G +
\lambda)$ on odd rounds $t$, and $g_{1:t} = g_1 + G = \h G - \h
\lambda$ on even rounds.  Since the magnitude of $g_{1:t}$ is larger
on odd rounds, if we have $\h (G + \lambda) \le t \lambda$ then $x\ti$
will always be zero; re-arranging, this amounts to $\lambda \ge
\frac{G}{2t - 1}$.  Thus, as with \MD, we need $\lambda \ge G$ to have
$x_2 = 0$ (plugging in $t=1$) but on subsequent rounds a \emph{much}
smaller $\lambda$ is sufficient to produce sparsity.  In the extreme
case, taking $\lambda = G/(2T - 1)$ is sufficient to ensure $x_T = 0$,
whereas we need a $\lambda$ value almost $2T$ times larger in order to
get $x_T = 0$ from \MD.

\end{document}